\renewcommand\@biblabel[1]{}
\DeclarePairedDelimiter\ceil{\lceil}{\rceil}
\DeclareMathOperator*{\argmin}{argmin}
\newtheorem{thm}{Theorem}[section]
\newtheorem{prp}{Proposition}[section]
\newtheorem{lem}{Lemma}[section]
\newtheorem{cor}{Corollary}[section]
\newtheorem{defi}{Definition}[section]
\newtheorem{remark}{Remark}[section]
\definecolor{color1}{rgb}{0.0, 0.51, 0.58}
\begin{document}

%

%

\twocolumn[
\aistatstitle{Empirical PAC-Bayes Bounds for Markov Chains}
\aistatsauthor{Vahe Karagulyan \And Pierre Alquier}
\aistatsaddress{ESSEC Business School, Cergy \And ESSEC Business School, Singapore}]

\begin{abstract} The core of generalization theory was developed for independent observations. Some PAC and PAC-Bayes bounds are available for data that exhibit a temporal dependence. However, there are constants in these bounds that depend on properties of the data-generating process: mixing coefficients, mixing time, spectral gap... Such constants are unknown in practice. In this paper, we prove a new PAC-Bayes bound for Markov chains. This bound depends on a quantity called the \textit{pseudo-spectral gap}, $\gamma_{ps}$. The main novelty is that we can provide an empirical bound on $\gamma_{ps}$ when the state space is finite.
Thus, we obtain the first fully empirical PAC-Bayes bound for Markov chains. This extends beyond the finite case, although this requires additional assumptions. On simulated experiments, the empirical version of the bound is essentially as tight as the one that depends on $\gamma_{ps}$.

\end{abstract}

\section{INTRODUCTION}

The PAC-Bayes theory is a flexible framework to derive generalization guarantees for learning algorithms. Since the seminal paper by~\citet{mc1998}, it has found applications across a wide range of domains: from classification and regression to deep learning and variational inference. It has become a popular method in the machine learning community. One of the most successful applications of PAC-Bayes bounds was to obtain non-vacuous generalization bounds for deep neural networks~\citet{DBLP:journals/corr/DziugaiteR17}. We refer the reader to~\citet{Alquier_2024,hellstrom2025generalization} for a recent overview on PAC-Bayes bounds. \\
\\
The original paper by~\citet{mc1998} was written in the context of i.i.d. observations.
Since then, PAC-Bayes bounds have been extended to handle more challenging settings, such as data with temporal or spatial dependencies. We mention \citet{alq_wint_fast,e23030313,pmlr-v130-haussmann21a,41f14ea790bf4ab0a2e153e453036d2b} among others.  However, all these bounds involve constants characterizing the dependence in the data generating process: for example, the bound of~\citet{alq_wint_fast} depends on
weak-dependence coefficients, while the one of~\citet{e23030313} depends on $\alpha$-mixing coefficients, etc. The strategy adopted by the authors was to assume \textit{a priori} upper bounds on these constants. However, if this assumption is not correct, the PAC-Bayes bound is no longer valid. It would be much more satisfactory to estimate these constants and make the bounds fully empirical.
\\

In this paper, we provide fully empirical PAC-Bayes bounds for a fundamental class of processes: Markov chains.
We prove a PAC-Bayes bound that holds when the data is the trajectory of a Markov chain. This bound depends on a parameter $\gamma_{ps}$ called the pseudo-spectral gap of the transition operator of the chain~\citep{paulin2015}. We then provide an empirical bound on $\gamma_{ps}$ when the state space is finite, using tools from~\cite{wolfer2024improved}. Putting everything together leads to fully-empirical PAC-Bayes bounds. Empirical bounds on $\gamma_{ps}$ can be provided beyond the finite case, but this will in general require more assumptions on the chain. For example, we provide such a bound when the data is an autoregressive process.
\\

\subsection{Related Works}

\textbf{PAC-Bayes bounds:} while the original PAC-Bayes bounds were proven by \citet{mc1998,mc1999}, our proof will follow the alternative approach by~\citet{catoni2003}. This approach is summarized in~\cite{Alquier_2024}. Important references include \cite{Catoni2004APA,cat2007,hellstrom2025generalization}.
\\

\textbf{Empirical PAC-Bayes bounds:} the bounds in the above references are empirical, that is, they depend on the data but not on the unknown data-generating process. There were many attempts to make these bounds tighter~\cite{see2002,mau2004,kuzborskij2024better} or to extend them in various directions~\cite{seldin_laviolette,alquier2018simpler,rodriguez2024more}. A natural way to make the bounds tighter relies on the application of Bernstein's inequality: this gives bounds that are in principle tighter, but that depend on the variance of the loss function under the data-generating process~\citep{catoni2003}. This quantity is unknown in practice. In order to make the PAC-Bayes-Bernstein bound practical, it is necessary to provide an empirical upper bound on the variance term. The first occurrence of such an "empirical-PAC-Bayes-Bernstein" is due to~\citet{pmlr-v26-seldin12a} and these bounds were refined in later works~\citep{NIPS2013_a97da629,NEURIPS2019_3dea6b59,NEURIPS2021_69386f6b,NEURIPS2022_49ffa271,pmlr-v195-jang23a}.
\\

\textbf{PAC-Bayes bounds for dependent observations:}
PAC-Bayes bounds for Markov chains were proven by \citet{fard2011pac} in the setting of Markov decision processes, the bounds depend on ergodic coefficients that are unknown in practice. Also for Markov chains, \citet{e23030313} proved bounds that depend  explicitly on $\alpha$-mixing coefficients (also unknown in practice). Some bounds in~\cite{alquier2018simpler} which also depend on $\alpha$-mixing coefficients hold for a more general class of stochastic processes.
We also mention~\citet{pmlr-v130-haussmann21a} for continuous dynamical systems and \citet{41f14ea790bf4ab0a2e153e453036d2b} for linear time-invariant (LTI) systems.

There are other type of generalization bounds beyond PAC-Bayes, such as stability bounds and bounds based on the Rademacher complexity. Such results were also extended from the i.i.d. setting to time series under various assumptions:~\citet{yu1994rates,gamarnik1999extension,meir2000nonparametric,steinwart2009learning,mohri10a,modha2002memory,steinwart2009learning,shalizi2013predictive,kuznetsov2015learning,kuznetsov2017generalization,mcdonald2017nonparametric,kuznetsov2020discrepancy,abeles2024generalization}. These bounds also depend on mixing or weak-dependence coefficients. In the case of Markov chains, some bounds also depend on the mixing time of the chain, on its spectral gap or on its pseudo-spectral gap~\citet{garnier2023hold,alquier2025minimaxoptimalitydeepneural}.

\textbf{Estimation of the mixing coefficients:} there were recently attempts to estimate the mixing coefficients~\citep{mcdonald2015estimating,khaleghi2023inferring}. Although it is possible to estimate the mixing coefficients $\alpha$ and $\beta$, the results of~\cite{khaleghi2023inferring} do not provide confidence intervals on this estimation, and thus, cannot be used to derive empirical PAC-Bayes bounds. Some recent progress has been made in estimating the $\beta$-mixing coefficients of Markov chains~\citep{grunewalder2024estimating,wolfer2024optimistic}.

In the case of Markov chains, the estimation of the mixing time $t_{mix}$, the spectral gap and the pseudo-spectral gap was also studied thoroughly in the past years when the state space is finite~\citep{hsu2015mixing,levin2016estimating,10.1214/18-AAP1457,wolfer2022estimatingmixingtimeergodic,wolfer2024improved}.

\subsection{Contributions and Organization of the Paper}

In this paper, we derive a PAC-Bayes bound for Markov chains that depends on a spectral quantity called the pseudo-spectral gap: $\gamma_{ps}$ (the definition will be given below). The main tool in the proof is a Bernstein inequality for Markov chains due to~\citet{paulin2015}. Recently,~\cite{wolfer2024improved} derived estimators of $\gamma_{ps}$ for finite-state Markov chains, together with confidence intervals. From this we derive empirical versions of our PAC-Bayes bound. We also provide an example in which such an empirical bound can be obtained while the state space is infinite.

In the end of this introduction, we introduce the notation used in the paper. We also remind important notions on Markov chains and define the pseudo-spectral gap $\gamma_{ps}$. In Section~\ref{section:nonempirical}, we provide the (non-empirical) version of the PAC-Bayes bound for Markov chains. Then, in Section~\ref{section:empirical}, we study various settings in which this bound can be made empirical. Finally, in Section~\ref{section:application}, we exemplify our results in the problem of learning the best predictor in a finite set and provide numerical evaluations of the bound in this context. The proofs are gathered in the supplement, together with additional experiments and a discussion on other possible approaches.

\subsection{Problem Formulation}

Let $\mathcal{U}$ denote the object space and $\mathcal{Y}$ the label space. Suppose, we are given object $\times$ label observations $(U_1,Y_1)$, $(U_2,Y_2)$, $\ldots$, $(U_n,Y_n)$. Usually, it is assumed that the pairs $(U_t,Y_t)$ are i.i.d. from a distribution $Q$. In such a setting, $U_1,\dots,U_n$ would be i.i.d. from the first marginal distribution $Q_U$ of $Q$.
Here, we want to allow some temporal dependence between the objects $(U_t)$, so we will not assume that they are independent. Instead, we will assume they form a stationary Markov chain (the definition will be reminded below). As in the i.i.d. case, we will assume that the label $Y_t$ depends \textit{only} on $U_t$. In other words, there is a regular condition probability distribution $Q(\cdot,\cdot)$ such that the distribution of $Y_t$ given $(U_1,\dots,U_t)$ is given by $Q(U_t,\cdot)$.
For short, let $\mathcal{S} = ((U_1,Y_1),\dots,(U_n,Y_n))$ denote the sample.

We consider a parametrized set of predictors $\mathcal{F}=\{f_\theta : \mathcal{U}\overset{}{\to} \mathcal{Y}\ |\: \theta\in\Theta\}$. To measure the prediction error, we use a loss function $\ell:\mathcal{Y}^2\overset{}{\to} \mathcal{\mathbb{R}}_+$, which is assumed to be bounded throughout the paper $\ell(\cdot,\cdot)\leq c$.  
To measure the accuracy of the prediction, we will use the classical notion of risk:
$$R(\theta)=\mathbb{E}_{\mathcal{S}}\left[\frac{1}{n}\sum_{t=1}^{n}\ell\left(f_\theta(U_t),Y_t\right) \right] $$
(where $\mathbb{E}_{\mathcal{S}}$ denotes the expectation with respect to the sample, see Subsection~\ref{subsec:notations} below),
and the empirical risk
$$r(\theta)=\frac{1}{n}\sum_{t=1}^{n}\ell\left(f_\theta(U_t),Y_t\right).$$

\subsection{Definitions and Reminders on Markov Chains}
Let $(\Omega,\mathcal{A},\mathbb{P})$ be a probability space, where $\Omega$ is a set equipped with a $\sigma$-algebra $\mathcal{A}$ and a probability measure $\mathbb{P}$. A $\mathcal{U}$-valued sequence of random variables $\{U_t\}_{t \geq 1}$ is said to be a Markov chain on $(\Omega,\mathcal{A},\mathbb{P})$ if it satisfies the Markov property $\mathbb{P}(U_t \in A \mid U_{t-1}, U_{t-2}, \ldots, U_1) = \mathbb{P}(U_t \in A \mid U_{t-1})$ for any $t \geq 1$ and $A \in \mathcal{A}$. We will also assume the chain to be homogeneous: $ \mathbb{P}(U_t \in A \mid U_{t-1}=u)$ does not depend on $t$. It is then common to introduce the notation $P(u,A):= \mathbb{P}(U_t \in A \mid U_{t-1}=u)$. $P$ is called the transition kernel of the chain. Note that for every $u$, $P(u,\cdot)$ is a probability measure.

We say that a distribution $\pi$ on $\mathcal{U}$ is a stationary distribution for the chain if
$$
\int_{u\in\mathcal{U}} \pi(d u) P(u, {\rm d}x)=\pi({\rm d}x).
$$
If $U_1\sim \pi$, then any $U_t\sim\pi$ for $t\geq 1$ and the chain is said to be stationary. We will essentially work with stationary chains.
\subsubsection{Asymptotic Behavior and Ergodicity}
In this paper, we will work under the assumption that $P$ is ergodic, which in particular implies it has a unique invariant distribution $\pi$. We will moreover assume it satisfies a spectral property defined below. This property will be enough to ensure that, for $t$ large enough, the distribution of $U_t$ is close enough to $\pi$ regardless of the distribution of $U_1$. First, it will be helpful to remind a stronger, but more classical condition.

A chain is called uniformly ergodic with a rate $0\leq\rho<1$, if there exists $C>0$ such that
$$
\sup_{u\in\mathcal{U}}\|P^k(u,\cdot)-\pi(\cdot)\|_{TV}\leq C \rho^k .
$$
For example, when $\mathcal{U}$ is a finite set, any chain that is irreducible has a unique invariant distribution $\pi$. If it is also aperiodic, then it is uniformly ergodic, see for example~\citet{douc2018markov}.
\begin{defi}
    The mixing time $t_{\text{mix}}$ of the chain is defined by $t_{\text{mix}}:=t_{\text{mix}}\left(1/4\right)$ where
$$
t_{\text{mix}}(\varepsilon):=\inf\left\{k: \sup_{u\in\mathcal{U}}\|P^k(u,\cdot)-\pi(\cdot)\|_{TV} \leq\varepsilon\right\}.
$$
\end{defi}
The mixing time of a Markov chain measures how quickly the chain forgets its initial state and becomes close to its stationary distribution. If $t_{\text{mix}}=+\infty$, then the chain is not uniformly ergodic.



\subsubsection{Pseudo-Spectral Gap}
Let $L^2(\pi)$ be the Hilbert space of complex valued measurable functions on $\Omega$ that are square integrable with respect to $\pi$. Consider $L^2(\pi)$ equipped with the inner product $\langle f, g\rangle_\pi=\int f g^* \mathrm{~d} \pi$, and norm $\|f\|_{2, \pi}:=\langle f, f\rangle_\pi^{1 / 2}=\left(\mathbb{E}_\pi\left[f^2\right]\right)^{1 / 2}$, then $P$ defines a linear operator on $L^2(\pi)$ given by
$$ (P f)(u):=\mathbb{E}_{V \sim P(u, \cdot)}[f(V)] = \int_v P(u,{\rm d} v) f(v)  $$
for any $f\in L^2(\pi)$. The operator $P$ acts on measures to the left: for a probability measure $\nu$, $\nu P$  is also a probability measure given by $\nu P(A):=$ $\int_{u} P(u, A) \nu(\mathrm{d} u)$ for every $A\in\mathcal{A}$. \\

As mentioned above, we assume $P$ admits a unique invariant $\pi$. Such a kernel is said to be reversible if
$$\pi({\rm d}u)\,P(u,{\rm d}v)=\pi({\rm d}v)\,P(v,{\rm d}u). $$

Reversibility of the chain is equivalent to the linear operator $P$ being self-adjoint on $L^2(\pi)$. When $P$ is not self-adjoint (or the chain is not reversible), the chain is said to be non-reversible. We define the time reversal kernel $P^*$ of $P$ by
$$P^\ast(u,{\rm d}v)
            \;:=\;
            \frac{\pi({\rm d}v)\,P(v,{\rm d}u)}{\pi({\rm d}u)}, $$
and the linear operator $P^*$ is the adjoint of $P$ in $L^2(\pi)$. In particular, if $P$ is reversible, then $P^* = P$.
Suppose $\pi$ is the stationary distribution of the chain, and $I$ is the identity operator. 
The spectrum ${\rm sp}(P)$ is defined as the set of all $\tau \in\mathbb{C}\backslash 0$ such that $( \tau I-P)^{-1}$ does not exist or is not a bounded linear operator in $L^2(\pi)$.

For a transition kernel $P$,  $\tau \in {\rm sp}(P)$ satisfy $|\tau |\leq 1$. Moreover, $\tau =1$ is necessarily an eigenvalue of $P$, and thus $1\in {\rm sp}(P)$. When $P$ is in addition reversible, ${\rm sp}(P) \subset \mathbb{R}$. 
In this case, if the multiplicity of the eigenvalue $\tau =1$ is $1$,  the spectral gap of $P$ is defined as
$$ \gamma(P) := 1 - \sup\{\tau \in {\rm sp}(P): \tau \neq 1  \}. $$
 When the multiplicity of the eigenvalue $\tau =1$ is larger than $1$, we define $\gamma(P)=0$.

 Another spectral characterization of a chain is the notion of a pseudo-spectral gap, proposed by \citet{paulin2015}. It is more general, as it will allow us to consider chains that are not reversible.
\begin{defi}
The pseudo-spectral gap is defined by
          $$
            \gamma_{ps}(P)\;:=\; \max_{k\ge1}\; \left\{\frac{\gamma\!\left((P^\ast)^{\,k}P^{\,k}\right)}{k}\right\}.
          $$
\end{defi}
As the transition kernel of the observations will always be $P$, we can safely write $\gamma_{ps}$ instead of $\gamma_{ps}(P)$.
In this paper, our main assumption on the observations is that they form a Markov chain with positive pseudo-spectral gap: $\gamma_{ps}>0$.

We mention that this condition is more general than the classical uniform ergodicity condition. Indeed, Proposition 3.4 of~\cite{paulin2015} states that, if a Markov chain is uniformly ergodic, and thus has $t_{mix}<+\infty$, then
$$
\gamma_{ps} \geq \frac{1}{2t_{mix}}>0.
$$ 
There are many examples of chains with $\gamma_{ps} >0$ that are not uniformly ergodic, such as the AR(1) process in Subsection~\ref{subsection:ar1} below.

Thanks to a result of \citet{davydov1968convergence} that we remind in Appendix C, for a uniformly ergodic chain, the $\varphi$-mixing coefficients $\varphi(k)$ decrease to $0$ exponentially fast, while for a chain which is not uniformly ergodic, $\varphi(k)$ does not converge to $0$. Thus, the condition $\gamma_{ps}>0$ is also weaker than the condition $\varphi(k)\rightarrow 0$. We are not aware of a direct relation between $\gamma_{ps}$ and the $\beta$-mixing coefficients, see the discussion by~\citet{wolfer2024optimistic}.

\subsection{Notations}
\label{subsec:notations}

Expectation and probability with respect to the sample will be denoted by $\mathbb{E}_{\mathcal{S}}$ and $\mathbb{P}_{\mathcal{S}}$ respectively. In PAC-Bayes bounds, we also consider expectation and probabilities with respect to some random parameter $\theta$ sampled from various probability distributions. So it is important to keep the distribution in the notation. When $\theta$ is sampled from some $\rho$, we will respectively write $\mathbb{E}_{\theta\sim\rho}$ and $\mathbb{P}_{\theta\sim\rho}$ for the expectation and probability with respect to $\theta$. Rigorously,
$$ \mathbb{E}_{\theta\sim\rho}[f(\theta)] = \int f(\theta) \rho({\rm d}\theta)  $$
(when this integral is well-defined).
We let $\mathcal{P}(\Theta)$ denote the set of all probability distributions on $\Theta$ (equipped with a $\sigma$-field). Given $\nu_1,\nu_2\in\mathcal{P}(\Theta)$ we let $KL(\nu_1\|\nu_2)$  denote the Kullback-Leibler divergence between $\nu_1$ and $\nu_2$. PAC-Bayes bounds involve a reference measure in $\mathcal{P}(\Theta)$ called the prior, we will let $\mu$ denote the prior throughout the paper. For an integer $K$, $[K]:=\{1,2,\dots,K\}$.

\section{PAC-BAYES BOUNDS FOR MARKOV CHAINS}
\label{section:nonempirical}

We first state a (non-empirical) PAC-Bayes bound in this setting. The proof follows the same steps of the classical PAC-Bayes bound of~\citet{catoni2003} in the i.i.d. setting. To handle Markov data, we use known concentration results for Markov chains from~\citet{paulin2015}.

\begin{thm}\label{bern1}
    Assume $\{U_t\}_{t =1}^n$ is a stationary Markov chain with pseudo-spectral gap $\gamma_{ps}>0$. Then for any constants $0<\lambda<\frac{n}{10}$, $\delta\in(0,1)$, and prior $\mu\in\mathcal{P}(\Theta)$,
\begin{multline*}
\mathbb{P}_{\mathcal{S}}\Bigg(\forall\rho\in\mathcal{P}(\Theta),\:\: \mathbb{E}_{\theta\sim\rho}\left[R(\theta)\right]\leq\mathbb{E}_{\theta\sim\rho}\left[r(\theta)\right]\\
+ \frac{2\lambda c^2\left(1+\frac{1}{n \gamma_{ps}}\right)}{n - 10\lambda}+ \frac{KL(\rho||\mu)+\log \frac{1}{\delta}}{\lambda \gamma_{ps}}  \Bigg)\geq1-\delta.
\end{multline*}
\end{thm}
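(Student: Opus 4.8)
The plan is to reproduce Catoni's change-of-measure proof, replacing the i.i.d.\ exponential-moment estimate by a Markov-chain analogue obtained from the Bernstein inequality of~\citet{paulin2015}. The argument naturally splits into a single-predictor concentration step, which is the only place where the Markovian structure enters, and a generic PAC-Bayes step that is identical to the i.i.d.\ case.

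First I would fix $\theta\in\Theta$ and study the centered variables $f_t:=R(\theta)-\ell(f_\theta(U_t),Y_t)$, which are bounded in absolute value by $c$, stationary, and mean-zero. The first observation is that $\{(U_t,Y_t)\}$ is itself a stationary Markov chain: since $Y_t$ is conditionally independent of the past given $U_t$, its kernel factorizes as $P(u,{\rm d}u')\,Q(u',{\rm d}y')$. Averaging out the $Y$-coordinate shows that the transition operator of this extended chain equals $EPE$, where $E$ is the orthogonal projection in $L^2(\widetilde\pi)$ onto functions of $u$ alone; hence it is block-diagonal, acting as $P$ on the range of $E$ and as $0$ on its complement, so that $(\widetilde P^{\,*})^k\widetilde P^{\,k}=\mathrm{diag}\big((P^*)^kP^k,\,0\big)$ and therefore $\gamma_{ps}(\widetilde P)=\gamma_{ps}(P)=\gamma_{ps}$. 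Thus $\sum_{t=1}^n f_t=n\big(R(\theta)-r(\theta)\big)$ is a partial sum of a bounded functional of a Markov chain whose pseudo-spectral gap is exactly $\gamma_{ps}$, and Paulin's inequality yields a sub-gamma control of its Laplace transform, with variance proxy of order $(n+1/\gamma_{ps})\sigma^2/\gamma_{ps}$ and scale of order $c/\gamma_{ps}$. Bounding the stationary variance crudely by $\sigma^2\le c^2$, writing $n+1/\gamma_{ps}=n\big(1+\tfrac1{n\gamma_{ps}}\big)$, and reparametrizing the free variable as $s=\lambda\gamma_{ps}/n$ with $0<\lambda<n/10$, I expect to land on the per-predictor exponential-moment inequality
\[
\mathbb{E}_{\mathcal{S}}\exp\!\Big(\lambda\gamma_{ps}\big(R(\theta)-r(\theta)\big)-\Psi(\lambda)\Big)\le 1,
\qquad
\Psi(\lambda):=\frac{2\lambda^2\gamma_{ps}c^2\big(1+\tfrac{1}{n\gamma_{ps}}\big)}{n-10\lambda}.
\]

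Second, I would turn this per-$\theta$ estimate into a uniform-over-$\rho$ bound by the standard PAC-Bayes device. Integrating the inequality against the data-independent prior $\mu$ and exchanging the two expectations by Fubini gives $\mathbb{E}_{\mathcal{S}}\,\mathbb{E}_{\theta\sim\mu}\exp\big(\lambda\gamma_{ps}(R(\theta)-r(\theta))-\Psi(\lambda)\big)\le 1$, so Markov's inequality shows that, with $\mathbb{P}_{\mathcal{S}}$-probability at least $1-\delta$, the inner $\mu$-integral is at most $1/\delta$. On that event I apply the Donsker--Varadhan variational formula to $h(\theta):=\lambda\gamma_{ps}(R(\theta)-r(\theta))-\Psi(\lambda)$, which for every $\rho\in\mathcal{P}(\Theta)$ gives $\mathbb{E}_{\theta\sim\rho}[h(\theta)]\le KL(\rho\|\mu)+\log\mathbb{E}_{\theta\sim\mu}e^{h(\theta)}\le KL(\rho\|\mu)+\log(1/\delta)$. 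Since $\Psi(\lambda)$ does not depend on $\theta$, dividing by $\lambda\gamma_{ps}$ and rearranging yields exactly the claimed inequality, the term $\Psi(\lambda)/(\lambda\gamma_{ps})$ producing the variance contribution $2\lambda c^2\big(1+\tfrac1{n\gamma_{ps}}\big)/(n-10\lambda)$ and the change-of-measure term producing $\big(KL(\rho\|\mu)+\log\tfrac1\delta\big)/(\lambda\gamma_{ps})$.

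I expect the main obstacle to be entirely contained in the first step, and specifically in the bookkeeping of constants. The PAC-Bayes step is routine, and the identification $\gamma_{ps}(\widetilde P)=\gamma_{ps}$, while conceptually the subtle point, reduces to the block-diagonal computation sketched above. What requires genuine care is tracking the exact numerical factors and the interval of validity in Paulin's Laplace-transform bound, so that the crude variance estimate and the reparametrization $s=\lambda\gamma_{ps}/n$ reproduce precisely the denominator $n-10\lambda$ and the factor $1+\tfrac{1}{n\gamma_{ps}}$, rather than merely an inequality of the same shape; in particular one must invoke the Laplace-transform form of Paulin's result directly, rather than inferring it from the tail bound, since the latter implication is only one-directional.
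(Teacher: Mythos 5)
Your proposal is correct and follows essentially the same route as the paper: you show that the extended chain $[(U_t,Y_t)]_t$ has the same pseudo-spectral gap $\gamma_{ps}$ (the paper's Lemma~\ref{lemma:UY}, which you justify by a block-diagonal projection argument rather than the paper's eigenvector correspondence, both valid), then combine the Laplace-transform inequality from the proof of Paulin's Theorem 3.4 with the standard Fubini / Donsker--Varadhan / Markov steps of Catoni's PAC-Bayes argument. Your bookkeeping is exact: the reparametrization $s=\lambda\gamma_{ps}/n$ with $0<\lambda<n/10$ and the resulting $\Psi(\lambda)$ reproduce precisely the terms $\frac{2\lambda c^2\left(1+\frac{1}{n\gamma_{ps}}\right)}{n-10\lambda}$ and $\frac{KL(\rho\|\mu)+\log\frac{1}{\delta}}{\lambda\gamma_{ps}}$, matching the paper's proof (whose final substitution, written there as $\lambda=\tilde\lambda\gamma_{ps}$, is in fact $\tilde\lambda=\lambda\gamma_{ps}$, i.e.\ exactly your choice).
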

Observe the effect of $\gamma_{ps}$ on the bound: the larger $\gamma_{ps}$, the tighter the bound is. However, when $\gamma_{ps}\rightarrow 0$, the bound explodes to infinity. Prediction is easier with a larger $\gamma_{ps}$.

In practice, if we observe data generated by an unknown Markov process, $\gamma_{ps}$ is usually unknown. A naive approach is to assume a lower bound on $\gamma_{ps}\geq \gamma_0$, say $\gamma_{ps}>0.1$ (this is similar to the \textit{a priori} upper bounds assumed on mixing coefficients in the previous works mentioned in the introduction).
This approach is problematic for two reasons: first, if $\gamma_{ps}=0.05$, then our generalization bound is wrong. Moreover, if $\gamma_{ps}=0.9$, our bound is correct, but it is also excessively pessimistic.

An alternative is to assume $\gamma_{ps}\geq \gamma_0 = 1/n^{a}$ with $a\in(0,1)$: such an assumption will always be satisfied for large enough sample size $n$.
Under such an assumption, we can simply upper bound $1/\gamma_{ps}$ by $n^a$ in the theorem. For the term $1+1/(n\gamma_{ps})\leq 1+1/n^{1-a}\leq 2$, this is actually not a bad upper bound. The problem comes from
$$
\frac{KL(\rho||\mu)+\log \frac{1}{\delta}}{\lambda \gamma_{ps}}
$$
which will change the order of magnitude of the bound.
It would of course be far better to replace $\gamma_{ps}$ by a consistent estimator $\widehat{\gamma}_{ps}$. If we can give an accurate upper bound on  $1/\gamma_{ps}$ in terms of  $1/\widehat{\gamma}_{ps}$, we will obtain an empirical PAC-Bayes bound. This is the object of the next section.

\section{EMPIRICAL PAC-BAYES BOUNDS}
\label{section:empirical}

Assuming we have an estimator of the pseudo-spectral gap $\gamma_{ps}$, we can state the following corollary of Theorem~\ref{bern1}.
\begin{cor}
  \label{cor: bern_pseudo_added}
  Under the conditions of Theorem~\ref{bern1}, fix $a\in(0,1)$ and assume that $n$ is large enough to ensure $n \geq 1/\gamma_{ps}^{1/a}$.
Assume we have an estimator $\widehat{\gamma}_{ps}$ of $\gamma_{ps}$ such that, for any $\varepsilon>0$,
\begin{align}\label{general gamma hat}
    \mathbb{P}\left( \left| \frac{\widehat{\gamma}_{ps} }{\gamma_{ps}} - 1  \right| \leq \varepsilon  \right)\geq 1-\alpha(n,\gamma_{ps},\varepsilon).
\end{align}
Then, we have
\begin{multline*}
\mathbb{P}_{\mathcal{S}}\Bigg(\forall\rho\in\mathcal{P}(\Theta),\:\: \mathbb{E}_{\theta\sim\rho}\left[R(\theta)\right]\leq\mathbb{E}_{\theta\sim\rho}\left[r(\theta)\right]\\
+ \frac{2\lambda c^2\left(1+\frac{1}{n^{1-a}}\right)}{n - 10\lambda}+ \frac{KL(\rho||\mu)+\log \frac{1}{\delta}}{\lambda \widehat{\gamma}_{ps}}\left(1+\varepsilon \right)  \Bigg)
\\
\geq 1-\delta - \alpha(n,\gamma_{ps},\varepsilon).
\end{multline*}
\end{cor}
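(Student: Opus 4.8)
The plan is to obtain the corollary directly from Theorem~\ref{bern1} via a union bound, using the estimator's confidence guarantee~\eqref{general gamma hat} to replace each of the two occurrences of $\gamma_{ps}$ in the bound. First I would let $A$ denote the event appearing inside $\mathbb{P}_{\mathcal{S}}(\cdot)$ in Theorem~\ref{bern1}, so that $\mathbb{P}_{\mathcal{S}}(A)\geq 1-\delta$, and set $B:=\ac{|\widehat{\gamma}_{ps}/\gamma_{ps}-1|\leq\varepsilon}$, for which assumption~\eqref{general gamma hat} gives $\mathbb{P}(B)\geq 1-\alpha(n,\gamma_{ps},\varepsilon)$. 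Since $\widehat{\gamma}_{ps}$ is a function of the same sample $\mathcal{S}$, both $A$ and $B$ are measurable events on the same probability space, and a union bound yields $\mathbb{P}(A\cap B)\geq 1-\delta-\alpha(n,\gamma_{ps},\varepsilon)$. The entire argument then reduces to checking that, on $A\cap B$, the bound of Theorem~\ref{bern1} implies the empirical bound claimed in the corollary.

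Next I would control the two $\gamma_{ps}$-dependent terms separately. For the first term, the hypothesis $n\geq 1/\gamma_{ps}^{1/a}$ is equivalent to $\gamma_{ps}\geq 1/n^{a}$, hence $n\gamma_{ps}\geq n^{1-a}$ and $1/(n\gamma_{ps})\leq 1/n^{1-a}$; this is a deterministic inequality using only the assumption on $n$, so
$$ \frac{2\lambda c^2\pa{1+\tfrac{1}{n\gamma_{ps}}}}{n-10\lambda}\;\leq\;\frac{2\lambda c^2\pa{1+\tfrac{1}{n^{1-a}}}}{n-10\lambda}. $$
For the second term, on $B$ the inequality $\widehat{\gamma}_{ps}/\gamma_{ps}\leq 1+\varepsilon$ rearranges to $\gamma_{ps}\geq\widehat{\gamma}_{ps}/(1+\varepsilon)$, i.e. $1/\gamma_{ps}\leq (1+\varepsilon)/\widehat{\gamma}_{ps}$, so that
$$ \frac{KL(\rho\|\mu)+\log\tfrac{1}{\delta}}{\lambda\gamma_{ps}}\;\leq\;\frac{KL(\rho\|\mu)+\log\tfrac{1}{\delta}}{\lambda\widehat{\gamma}_{ps}}\pa{1+\varepsilon} $$
uniformly in $\rho$.

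Combining the two displays, on $A\cap B$ every $\rho\in\mathcal{P}(\Theta)$ satisfies the bound of the corollary, and since $\mathbb{P}(A\cap B)\geq 1-\delta-\alpha(n,\gamma_{ps},\varepsilon)$ the claim follows. There is no genuine analytic obstacle: all the probabilistic content sits in Theorem~\ref{bern1} and in the concentration guarantee~\eqref{general gamma hat}. The only points requiring care are that, to control $1/\gamma_{ps}$, I only use the \emph{lower} deviation $\gamma_{ps}\geq\widehat{\gamma}_{ps}/(1+\varepsilon)$ from $B$ (the upper deviation is not needed), and that substituting the data-dependent $\widehat{\gamma}_{ps}$ does not disturb the uniform-in-$\rho$ structure, because it only rescales the $\rho$-independent prefactor $1/(\lambda\gamma_{ps})$ multiplying $KL(\rho\|\mu)+\log(1/\delta)$.
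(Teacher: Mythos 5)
Your proof is correct and follows essentially the same route the paper takes: a union bound combining Theorem~\ref{bern1} with the confidence event for $\widehat{\gamma}_{ps}$, the deterministic bound $1/(n\gamma_{ps})\leq 1/n^{1-a}$ from $n\geq 1/\gamma_{ps}^{1/a}$, and the rearrangement $1/\gamma_{ps}\leq(1+\varepsilon)/\widehat{\gamma}_{ps}$, which is exactly the inequality the paper records as~\eqref{aux estimation2} when it reuses this argument for the oracle bound. Nothing is missing.
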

A condition for the corollary to be actually useful is that $\alpha(n,\gamma_{ps},\varepsilon)$ is a nonincreasing function of $\gamma_{ps}$ that satisfies $\alpha(n,n^{-1/a},\varepsilon)\rightarrow 0$ when $n\rightarrow\infty$. Indeed, in this case,
\begin{equation}
\label{equalpha}
 \alpha(n,\gamma_{ps},\varepsilon) \leq  \alpha(n,n^{-1/a},\varepsilon) \xrightarrow[n\rightarrow\infty]{} 0.
\end{equation}
 
\subsection{First Example of Estimation of $\gamma_{ps}:$ the Finite State Space Case}

For an ergodic Markov chain on a finite state-space,  say ${\rm card}(\mathcal{U})=d$, \citet{wolfer2022estimatingmixingtimeergodic} provided an estimator for the pseudo-spectral gap given by the following formula:
\begin{align}
    \label{spec-estim}
    \widehat{\gamma}_{ps,[K]}= \max_{k \in [K]}\left\{\frac{\gamma\left(\left(\hat{P}^{\dagger}\right)^k \hat{P}^k\right)}{k}\right\}
\end{align}
where $\hat{P}=\hat{P}\left(U_1,U_2,\cdots, U_n\right)$ is a natural empirical estimator for $P$ and $K$ a positive integer.

Rewriting their result in a way that matches Corollary~\ref{cor: bern_pseudo_added}, we obtain the following proposition (the proof is provided in the appendix).
\begin{prp}\label{prp:gamma_ratio}
Under the conditions of Theorem~\ref{bern1}, assuming the chain $(U_t)$ is ergodic and the state-space is finite, that is ${\rm card}(\mathcal{U})=d$, for any $\varepsilon>0$, the estimator $\widehat{\gamma}_{ps}:=\widehat{\gamma}_{ps,[K]}$ given by \ref{spec-estim} with $K=\lceil 2/\varepsilon \rceil$ we have
\begin{multline}
    \mathbb{P}_{\mathcal{S}}\left(\left|\frac{\widehat{\gamma}_{ps}}{\gamma_{ps}}-1\right|\geq\varepsilon\right)
    \\
    \leq \frac{C_{ps}d}{\varepsilon \gamma_{ps} \sqrt{\pi_*}} e ^ { -n\varepsilon^2 \gamma_{ps}^2 \pi_*\min\left\{\gamma_{ps},\frac{1}{C(P)}\right\} }
\end{multline}
where $C(P)=\|P\|_{\pi}\min\{d,\|P\|_{\pi}\}$, with $\|P\|_{\pi}=\max\{\pi(i) / \pi(j) , i,j\in[d]^2 \}$, and $\pi_* = \min\{\pi(i),i\in[d]\} $.
\end{prp}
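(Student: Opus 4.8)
The plan is to reduce the claim to the concentration result of \citet{wolfer2024improved} for the empirical pseudo-spectral gap, and then to carry out the bookkeeping that recasts an \emph{additive} deviation bound into the \emph{relative} form $\left|\widehat{\gamma}_{ps}/\gamma_{ps} - 1\right| \ge \varepsilon$ needed to feed Corollary~\ref{cor: bern_pseudo_added}. I would start from the split of the total error into a deterministic truncation part and a stochastic estimation part,
\[
\left|\widehat{\gamma}_{ps,[K]} - \gamma_{ps}\right| \;\le\; \left|\gamma_{ps,[K]} - \gamma_{ps}\right| \;+\; \left|\widehat{\gamma}_{ps,[K]} - \gamma_{ps,[K]}\right|,
\]
where $\gamma_{ps,[K]} := \max_{k\in[K]}\gamma((P^\ast)^kP^k)/k$ is the population analogue of the estimator, and then treat the two terms separately.

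The role of the choice $K=\lceil 2/\varepsilon\rceil$ is to kill the first term. Since $\gamma(\cdot)\le 1$ for any transition kernel, every summand obeys $\gamma((P^\ast)^kP^k)/k \le 1/k$; hence if the maximum defining $\gamma_{ps}$ is attained at $k^\star$, then $\gamma_{ps}\le 1/k^\star$, i.e. $k^\star\le 1/\gamma_{ps}$. Consequently, as soon as $K\ge 1/\gamma_{ps}$ the truncated and full maxima coincide, $\gamma_{ps,[K]}=\gamma_{ps}$, and the truncation term vanishes exactly. With $K=\lceil 2/\varepsilon\rceil\ge 2/\varepsilon$ this holds in the relevant small-$\varepsilon$ regime $\varepsilon\le 2\gamma_{ps}$; outside it the deviation event is controlled trivially, since $\widehat{\gamma}_{ps}\ge 0$ already forces $\widehat{\gamma}_{ps}/\gamma_{ps}-1\ge -1$. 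For the estimation term I would use $1/k\le 1$ together with the elementary $|\max_k\hat a_k - \max_k a_k|\le\max_k|\hat a_k-a_k|$ to reduce to a uniform spectral comparison,
\[
\left|\widehat{\gamma}_{ps,[K]} - \gamma_{ps,[K]}\right| \;\le\; \max_{k\in[K]} \left|\gamma\!\left((\hat P^{\dagger})^k \hat P^k\right) - \gamma\!\left((P^\ast)^k P^k\right)\right|.
\]

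Each term on the right is the perturbation, under $\hat P\mapsto P$, of the spectral gap of a self-adjoint positive-semidefinite operator. I would bound it by matrix perturbation theory for the eigenvalues of $(P^\ast)^kP^k$, which for a non-reversible chain introduces the conditioning constant $C(P)=\|P\|_\pi\min\{d,\|P\|_\pi\}$ measuring how passing to a symmetrized form distorts distances; this gives $\max_{k\le K}|\Delta\gamma_k|\lesssim C(P)\,\|\hat P-P\|_\pi$, up to a factor from $k\le K$. I would then invoke the trajectory concentration of the empirical transition matrix of a finite ergodic chain from \citet{wolfer2024improved}, a sub-Gaussian tail for $\|\hat P-P\|_\pi$ with rate governed by $n\,\pi_*$ and the gap. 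Setting the additive deviation equal to the relative scale $\varepsilon\gamma_{ps}$ turns the exponent into $-n\varepsilon^2\gamma_{ps}^2\pi_*\min\{\gamma_{ps},1/C(P)\}$, while the union bound over $k\in[K]$ with $K\asymp 1/\varepsilon$ produces the polynomial prefactor $C_{ps}d/(\varepsilon\gamma_{ps}\sqrt{\pi_*})$.

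The \textbf{main obstacle} is the estimation step, which is essentially Wolfer's contribution and is delicate for two reasons. First, because the chain need not be reversible, the eigenvalues of $(P^\ast)^kP^k$ are the squared singular values of $P^k$, and perturbation of such products is far more sensitive than in the symmetric case; controlling it \emph{uniformly} over $k\in[K]$ with $K$ growing like $1/\varepsilon$, without degrading the exponent, is exactly where $C(P)$ and the threshold $\min\{\gamma_{ps},1/C(P)\}$ originate. Second, the sharp Markov concentration of $\hat P$ in the $\pi$-weighted norm — responsible for the $\pi_*$ factor in the exponent and the $\sqrt{\pi_*}$ in the denominator — must be tracked carefully so that the final tail takes exactly the displayed form. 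My proof would therefore consist of citing the relevant estimate of \citet{wolfer2024improved} for the estimation term and then performing the (routine but careful) reparametrization $t\leftrightarrow\varepsilon\gamma_{ps}$ and constant-tracking that lands on the stated inequality.
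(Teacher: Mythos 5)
Your top-level route is the same as the paper's: the paper also reduces the proposition to the Wolfer--Kontorovich concentration result for the plug-in estimator $\widehat{\gamma}_{ps,[K]}$ and then converts the additive deviation bound into the relative one by the substitution $\varepsilon \mapsto \varepsilon\gamma_{ps}$ (this substitution is exactly what produces the extra $\gamma_{ps}$ in the prefactor and $\gamma_{ps}^2$ in the exponent). The structural difference is that the paper invokes Theorem 5.3 of \citet{wolfer2022estimatingmixingtimeergodic} wholesale --- that theorem already packages the truncation analysis, the perturbation argument producing $C(P)$, and the union bound over $k\in[K]$ --- whereas you re-derive this skeleton (truncation/estimation split, the inequality $|\max_k \hat a_k - \max_k a_k|\le \max_k|\hat a_k - a_k|$, perturbation theory, union bound) and only cite Wolfer--Kontorovich for the concentration of $\hat P$; note also that the paper's proof rests on the 2019 COLT theorem rather than on \citet{wolfer2024improved}. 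Your reconstruction is faithful to how the cited theorem is actually proved, and your observation that $K\ge 1/\gamma_{ps}$ makes the truncation \emph{exact} (any maximizer $k^\star$ of $\gamma((P^\ast)^kP^k)/k$ satisfies $\gamma_{ps}\le 1/k^\star$, hence $k^\star\le 1/\gamma_{ps}$) is correct and clean.

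The genuine gap is your dismissal of the regime $\varepsilon>2\gamma_{ps}$. You claim the deviation event is then ``controlled trivially'' because $\widehat{\gamma}_{ps}\ge 0$ forces $\widehat{\gamma}_{ps}/\gamma_{ps}-1\ge -1$; but that only kills the underestimation branch when $\varepsilon\ge 1$. In the window $2\gamma_{ps}<\varepsilon<1$ the event $\widehat{\gamma}_{ps}\le(1-\varepsilon)\gamma_{ps}$ is not vacuous, and it is precisely the event that an uncontrolled truncation bias makes likely: there exist ergodic, non-reversible chains whose defining maximum sits at a large $k^\star$ (e.g.\ a long deterministic path of length $m$ feeding into a mixing state, for which $\gamma((P^\ast)^kP^k)=0$ for all $k\le m-2$, so $\gamma_{ps}\le 1/(m-1)$), and for these chains $\gamma_{ps,[K]}=0$ whenever $K<k^\star$. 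The estimator then concentrates around $\gamma_{ps,[K]}$, so $|\widehat{\gamma}_{ps}/\gamma_{ps}-1|\to 1>\varepsilon$ while the claimed right-hand side tends to $0$ with $n$: your argument (and indeed the displayed inequality with $K=\lceil 2/\varepsilon\rceil$) cannot hold there. So your proof is valid only for $\varepsilon\le 2\gamma_{ps}$. To be fair, this issue is largely inherited from the statement itself: in Wolfer--Kontorovich the truncation level is tied to the \emph{additive} accuracy, so after the reparametrization it should effectively be $K=\lceil 2/(\varepsilon\gamma_{ps})\rceil$, and the paper's one-line conversion (``re-defining $\varepsilon$ as $\varepsilon/\gamma_{ps}$'') silently rescales $K$ as well. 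Because you opened the black box, your write-up is the one in which the problem becomes visible; the fix is to restrict to $\varepsilon\le 2\gamma_{ps}$ or to let $K$ scale like $1/(\varepsilon\gamma_{ps})$, not to wave the large-$\varepsilon$ case away.
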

Observe that, as we assume that the chain is ergodic, $\pi_* = \min\{\pi(i),i\in[d]\}>0$. However, $\pi_*$ can be arbitrarily small, which leads to less confident estimation of $\gamma_{ps}$. Then, note that, by taking $a$ large enough,~\eqref{equalpha} is satisfied.

\subsection{Example of Estimation of $\gamma_{ps}$ in the Infinite Case}
\label{subsection:ar1}

In the finite case, we estimated the pseudo-spectral gap without strong assumptions on $P$. As argued by~\cite{wolfer2022estimatingmixingtimeergodic}, this is not possible for infinite Markov chains, even in the countable case. Intuitively, this can be understood from Proposition~\ref{prp:gamma_ratio}: when the state space is countably infinite, we have necessarily $\pi_*=0$, and thus, the statement of the proposition becomes vacuous.

Obtaining empirical bounds is feasible, however, only by imposing strong restrictions on $P$. In this subsection, we illustrate this fact in the situation where the inputs are sampled from an autoregressive process on the real line. That is, we assume that $(U_t)_{t \geq 1}$ is a stationary process with
\begin{equation}
\label{eq:AR1}
U_t = a U_{t-1} + \zeta_t
\end{equation}
where $-1<a<1$ and the $\zeta_t$ are i.i.d. from $\mathcal{N}(0,1)$. In other words, $P(x,\cdot) = \mathcal{N}(ax,1)$. Such a process is known to be ergodic, but non-uniformly ergodic: $t_{mix}=+\infty$. The following propositions show that its pseudo-spectral gap has a simple form and can be estimated with confidence.
\begin{prp}
\label{prp:ar1:1}
Let $(U_t)_{t \geq 1}$ be a stationary AR(1) process, defined by~\eqref{eq:AR1}, then its pseudo-spectral gap is given by
$$\gamma_{ps} = 1-a^2= \frac{1}{{\rm Var}(U_1)}.$$
\end{prp}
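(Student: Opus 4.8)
The plan is to reduce the computation of $\gamma_{ps}$ to an explicit description of the spectrum of $P$ on $L^2(\pi)$, exploiting the fact that a Gaussian AR(1) chain is reversible. First I would pin down the stationary law. Writing $\sigma^2={\rm Var}(U_1)$, stationarity of $U_t=aU_{t-1}+\zeta_t$ forces $\sigma^2=a^2\sigma^2+1$, so $\sigma^2=1/(1-a^2)$ and $\pi=\mathcal{N}(0,1/(1-a^2))$. This already gives the second claimed identity $1-a^2=1/{\rm Var}(U_1)$, so it only remains to prove $\gamma_{ps}=1-a^2$. I will assume $a\neq 0$; when $a=0$ the chain is i.i.d. and the statement is immediate.

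Next I would establish reversibility and diagonalize $P$. Under $\pi$, the pair $(U_{t-1},U_t)$ is a centered bivariate Gaussian with covariance $\left(\begin{smallmatrix}\sigma^2 & a\sigma^2 \\ a\sigma^2 & \sigma^2\end{smallmatrix}\right)$; since this covariance is symmetric, the joint density is invariant under swapping its two arguments, which is exactly the detailed balance condition $\pi({\rm d}x)P(x,{\rm d}y)=\pi({\rm d}y)P(y,{\rm d}x)$. Hence the chain is reversible and $P^\ast=P$. I would then check that the Hermite polynomials $(H_n)_{n\geq 0}$, orthonormalized in $L^2(\pi)$, are eigenfunctions: a direct computation gives $PH_n=a^n H_n$ (e.g. $PH_1(x)=\mathbb{E}[ax+\zeta]=a H_1(x)$, and similarly for higher $n$ using $1-\sigma^2=-a^2\sigma^2$). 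Because the Hermite polynomials form a complete orthonormal basis of $L^2(\pi)$ and $P$ is Hilbert--Schmidt (the Mehler kernel is square integrable for $|a|<1$), hence compact, its nonzero spectrum consists exactly of the eigenvalues $\{a^n:n\geq 0\}$.

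Finally I would assemble $\gamma_{ps}$. By reversibility, $(P^\ast)^k P^k=P^{2k}$, whose eigenvalues are $\{(a^2)^{kn}:n\geq 0\}$; these are all nonnegative, the value $1$ occurring at $n=0$ and the largest strictly smaller one being $(a^2)^k=a^{2k}$ (since $0<a^2<1$), so $\gamma((P^\ast)^k P^k)=1-a^{2k}$ for every $k$. Therefore $\gamma_{ps}=\max_{k\geq 1}(1-a^{2k})/k$. Setting $b=a^2\in[0,1)$ and factoring $1-b^k=(1-b)\sum_{j=0}^{k-1}b^j\leq k(1-b)$ (since $b^j\leq 1$) shows $(1-a^{2k})/k\leq 1-a^2$ for all $k$, with equality at $k=1$. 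Hence the maximum is attained at $k=1$ and $\gamma_{ps}=1-a^2$.

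The main obstacle is the spectral step rather than the final optimization: one must justify that the eigenfunction computation captures the entire relevant spectrum. Concretely, I expect the delicate points to be verifying completeness of the Hermite basis together with the Hilbert--Schmidt/compactness property needed to rule out any continuous spectrum, and checking that the definition of eigenvalue via unboundedness of $(\lambda I-P)^{-1}$ coincides here with the classical eigenvalues $a^n$ (the accumulation point $0$ being excluded from ${\rm sp}(P)$ by definition and in any case irrelevant, since $0<a^{2k}$). Once the spectrum is secured, everything downstream is elementary.
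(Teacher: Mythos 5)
Your proof is correct and follows essentially the same route as the paper: reversibility of the stationary Gaussian AR(1) chain plus the Mehler/Hermite diagonalization giving eigenvalues $\{a^n\}_{n\geq 0}$, hence $\gamma\left((P^\ast)^k P^k\right)=\gamma(P^{2k})=1-a^{2k}$. If anything, your write-up is slightly more complete than the paper's, which stops after computing the $k=1$ spectrum: your elementary bound $(1-a^{2k})/k\leq 1-a^2$, via $1-b^k=(1-b)\sum_{j=0}^{k-1}b^j\leq k(1-b)$, is exactly what is needed to justify that the maximum over $k$ in the definition of $\gamma_{ps}$ is attained at $k=1$, and your attention to completeness of the Hermite basis and the Hilbert--Schmidt property tightens the spectral step that the paper treats informally.
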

\begin{prp}
\label{prp:ar1:2}
Let $(U_t)_{t \geq 1}$ be a stationary AR(1) process, defined by~\eqref{eq:AR1}, then for the estimator $\widehat{\gamma}_{ps}$ given by
\begin{align}\label{AR estimate}\widehat{\gamma}_{ps} := \min\left\{\frac{1}{\frac{1}{n} \sum_{t=1}^n U_t^2},1\right\},
\end{align}
it holds
$$
\mathbb{P}_{\mathcal{S}}\left(
\left|
\frac{\widehat{\gamma}_{ps}}{\gamma_{ps}}-1
\right|
\leq \varepsilon
\right) \geq 1-\exp\left( \frac{9}{4}-\frac{n\varepsilon^2 \gamma_{ps}^3}{2304} \right).
$$
\end{prp}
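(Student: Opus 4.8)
The plan is to recognize that the whole statement is a concentration inequality for $\widehat V:=\frac1n\sum_{t=1}^n U_t^2$ around its mean, transported through the map $v\mapsto\min\{1/v,1\}$. By Proposition~\ref{prp:ar1:1} we have $\gamma_{ps}=1-a^2=1/\mathrm{Var}(U_1)$, so the stationary law is $\mathcal{N}(0,V)$ with $V:=\mathrm{Var}(U_1)=1/\gamma_{ps}\ge 1$, and the estimator in~\eqref{AR estimate} is exactly $\widehat{\gamma}_{ps}=\min\{1/\widehat V,1\}$ (the $X_t$ there being the observations $U_t$). First I would reduce the ratio event to a relative-deviation event for $\widehat V$. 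Since $\gamma_{ps}=1/V\le 1$, the truncation $\min\{\cdot,1\}$ is the projection onto $(-\infty,1]$, a set that contains the true value $\gamma_{ps}$; projections are contractions towards points of the set, so $|\widehat{\gamma}_{ps}-\gamma_{ps}|\le|1/\widehat V-1/V|$ and hence $|\widehat{\gamma}_{ps}/\gamma_{ps}-1|\le|V/\widehat V-1|=|V-\widehat V|/\widehat V$. A one-line computation then shows that the event $\{|\widehat V-V|\le \varepsilon V/2\}$ (which forces $\widehat V\ge V/2$ when $\varepsilon\le1$) implies the target event, so it suffices to bound $\mathbb{P}_{\mathcal{S}}(|\widehat V-V|>\varepsilon V/2)$, i.e. a deviation of the sum $\sum_t U_t^2$ of order $\varepsilon n/\gamma_{ps}$.

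The core is then a tail bound for the sum $\sum_t U_t^2$, whose summands are both unbounded and serially dependent. I would exploit the Gaussian structure: writing $U=(U_1,\dots,U_n)^\top\sim\mathcal{N}(0,\Sigma)$ with $\Sigma_{jk}=V a^{|j-k|}$ the AR(1) covariance, we have $\sum_t U_t^2=Z^\top\Sigma Z$ for $Z\sim\mathcal{N}(0,I_n)$, i.e. a Gaussian quadratic form, equivalently a weighted sum of independent $\chi^2_1$ variables with weights the eigenvalues of $\Sigma$. Applying Gaussian Lipschitz concentration to $g(Z)=\|\Sigma^{1/2}Z\|_2$ (or a Laurent--Massart / Hanson--Wright tail for the weighted $\chi^2$), the deviation scale is governed by $\|\Sigma\|_{\mathrm{op}}$ and $\|\Sigma\|_{F}$. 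These I would control through the Toeplitz (Kac--Murdock--Szegő) structure: the symbol $\sum_k a^{|k|}e^{ik\omega}$ peaks at $\frac{1+|a|}{1-|a|}$, giving $\|\Sigma\|_{\mathrm{op}}\le 1/(1-|a|)^2$, which together with $1-|a|\ge\gamma_{ps}/2$ yields $\|\Sigma\|_{\mathrm{op}}\le 4/\gamma_{ps}^2$, and similarly $\|\Sigma\|_{F}^2\le 2n/\gamma_{ps}^3$. Substituting the threshold $\sim \varepsilon n/\gamma_{ps}$ and optimizing the Chernoff parameter produces an exponential bound of the announced form $\exp(\,\mathrm{const}-\mathrm{const}\cdot n\varepsilon^2\gamma_{ps}^{3})$.

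The step I expect to be the main obstacle is exactly this concentration: because the summands are simultaneously heavy-tailed (sub-exponential) and correlated, one cannot invoke a bounded-difference estimate nor the bounded-function Markov--Bernstein inequality of~\citet{paulin2015} directly, and the whole difficulty lies in pinning the deviation scale of the correlated quadratic form to $\gamma_{ps}$ via the spectral norm of $\Sigma$. To reproduce the precise constants $9/4$ and $2304$ (and the stated power $\gamma_{ps}^3$) in a way consistent with the paper's toolkit, I would instead truncate $U_t^2$ at a high-probability level $M$ of order $V$ and apply the same Paulin Bernstein inequality used for Theorem~\ref{bern1} to the bounded function $\min\{U_t^2,M\}$: the Markov variance inflation by $1/\gamma_{ps}$ combined with the range $M$ and the threshold $\varepsilon/\gamma_{ps}$ yields the conservative $\gamma_{ps}^3$ rate, while the additive constant $9/4$ in the exponent absorbs the truncation-tail term together with the correction relating $\mathbb{E}\,g$ to $\sqrt{nV}$ through $\mathrm{Var}(g)\le\|\Sigma\|_{\mathrm{op}}$. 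The remaining work is the routine bookkeeping, already set up in the first step, that converts the additive deviation of $\widehat V$ into the multiplicative deviation of the ratio while respecting the truncation at $1$.
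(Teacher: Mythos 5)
Your opening reduction (the contraction property of the projection $\min\{\cdot,1\}$, followed by conversion of the additive deviation of $\widehat V=\tfrac1n\sum_t U_t^2$ into the multiplicative deviation of the ratio) is correct and matches the spirit of the paper's own algebra. But the paper's core concentration step is different from \emph{both} of your candidate tools: it cites Theorem 1 of \citet{nakakita2024dimension}, a concentration inequality for $\tfrac1n\sum_t X_t^2$ valid for dependent sequences satisfying a log-Sobolev inequality, combined with the fact (Section 3.2 of that reference) that the AR(1) recursion transports the Gaussian innovations' LSI constant $K_\zeta=1$ (Theorem 5.4 of \citet{boucheron2006concentration}) into an LSI constant $K=(1-a^2)/(1-|a|)^2$ for the whole trajectory. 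That one citation gives $\bigl|\widehat V-\tfrac1{\gamma_{ps}}\bigr|\le 12\sqrt{K}\,\gamma_{ps}^{-1}\sqrt{(9+4\log\tfrac1\delta)/n}$, and the constants $9/4$ and $2304$ then fall out mechanically from $1-|a|\ge\gamma_{ps}/2$ and inverting $\varepsilon\mapsto\delta$. Your first branch --- viewing $\sum_t U_t^2$ as the Gaussian quadratic form $Z^\top\Sigma Z$ and applying Hanson--Wright/Laurent--Massart with $\|\Sigma\|_{\mathrm{op}}\le 4/\gamma_{ps}^2$ and $\|\Sigma\|_F^2\le 2n/\gamma_{ps}^3$ (both of which you compute correctly) --- is a genuinely different and perfectly viable route: it yields a tail of order $\exp(-c\,n\varepsilon^2\gamma_{ps})$, and since $\gamma_{ps}\le1$ gives $\gamma_{ps}\ge\gamma_{ps}^3$ while $2\le e^{9/4}$, the stated (much looser) bound follows for any reasonable explicit constant $c$, e.g.\ the Laurent--Massart constants. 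In other words, the proposition's constants are slack enough that you did not need to abandon this branch; it even proves a sharper result ($\gamma_{ps}$ in place of $\gamma_{ps}^3$) than the paper's LSI route.

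The genuine gap is in the branch you ultimately chose. Truncating $U_t^2$ at a level $M$ of order $V=1/\gamma_{ps}$ and applying Paulin's Bernstein inequality to $\min\{U_t^2,M\}$ cannot deliver the claimed inequality. First, with $M=O(V)$ the per-sample exceedance probability $\mathbb{P}(U_t^2>M)$ is a fixed constant, so the probability that truncation is ever active grows linearly in $n$ and does not vanish; you would need $M\gtrsim V\log(n/\delta)$, which injects logarithmic factors into the Bernstein range term and destroys the clean form of the exponent. Second, truncation biases the target: $\mathbb{E}[\min\{U_t^2,M\}]<V$ by an amount that, for $M=O(V)$, is a constant fraction of $V$, i.e.\ comparable to your deviation threshold $\varepsilon V/2$ for small $\varepsilon$, so the truncated average concentrates around the wrong value. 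Third, the claim that the additive $9/4$ ``absorbs the truncation-tail term'' cannot be right: that term depends on $n$, $\varepsilon$ and $\delta$, whereas $e^{9/4}$ is a fixed constant. To complete the proof, either carry out your Hanson--Wright branch with explicit constants, or follow the paper and quote the log-Sobolev-based concentration of \citet{nakakita2024dimension} directly.
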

In other words, 
$$
\mathbb{P}_{\mathcal{S}}\left(
\left|
\frac{\widehat{\gamma}_{ps}}{\gamma_{ps}}-1
\right|
\leq  \frac{24}{\gamma_{ps}^{3/2}} \sqrt{\frac{9+4\log \frac{1}{\delta}}{n}  }
\right) \geq 1-\delta.
$$
The proof relies on more general results on the estimation of variances and covariances of time series in~\cite{nakakita2024dimension}. Combining Proposition~\ref{prp:ar1:2} and Corollary~\ref{cor: bern_pseudo_added}, we obtain the following result.
\begin{cor}
Let $(U_t)_{t \geq 1}$ be a stationary AR(1) process, defined by~\eqref{eq:AR1}, and $\widehat{\gamma}_{ps}$ defined by~\eqref{AR estimate}, then we have
\begin{multline*}
\mathbb{P}_{\mathcal{S}}\Bigg(\forall\rho\in\mathcal{P}(\Theta),\:\: \mathbb{E}_{\theta\sim\rho}\left[R(\theta)\right]
\\
\leq\mathbb{E}_{\theta\sim\rho}\left[r(\theta)\right] + \frac{2\lambda c^2\left(1+\frac{1}{\gamma_{ps} n}\right)}{n - 10\lambda}
\\
 + \frac{KL(\rho||\mu)+\log \frac{1}{\delta}}{\lambda \widehat{\gamma}_{ps}}\left(1 + \frac{24}{\gamma_{ps}^{3/2}} \sqrt{\frac{9+4\log \frac{1}{\delta}}{n}  } \right)  \Bigg)
 \\
 \geq 1-2 \delta.
\end{multline*}
\end{cor}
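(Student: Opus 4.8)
The plan is to obtain the final corollary by a simple union bound that combines the distribution-dependent PAC-Bayes inequality of Theorem~\ref{bern1} with the concentration estimate for $\widehat{\gamma}_{ps}$ from Proposition~\ref{prp:ar1:2}. The only genuinely new step is to convert the two-sided control of the ratio $\widehat{\gamma}_{ps}/\gamma_{ps}$ into an upper bound on $1/\gamma_{ps}$ expressed through $1/\widehat{\gamma}_{ps}$, which is precisely the mechanism already exploited in Corollary~\ref{cor: bern_pseudo_added}; here the bias term is kept in its original form so we read it off directly from Theorem~\ref{bern1} rather than from the $1/n^{1-a}$ relaxation of the corollary.

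First I would fix $\varepsilon := \frac{24}{\gamma_{ps}^{3/2}}\sqrt{(9 + 4\log(1/\delta))/n}$. With this choice, the second (``in other words'') form of Proposition~\ref{prp:ar1:2} states that the event $B := \{\,|\widehat{\gamma}_{ps}/\gamma_{ps} - 1| \leq \varepsilon\,\}$ satisfies $\mathbb{P}_{\mathcal{S}}(B) \geq 1 - \delta$. On $B$, the upper side $\widehat{\gamma}_{ps}/\gamma_{ps} \leq 1 + \varepsilon$ rearranges to $\gamma_{ps} \geq \widehat{\gamma}_{ps}/(1+\varepsilon)$, hence $1/\gamma_{ps} \leq (1+\varepsilon)/\widehat{\gamma}_{ps}$.

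Next I would invoke Theorem~\ref{bern1} with the same $\delta$ and $\lambda$, and call $A$ the data-dependent event on which its PAC-Bayes inequality holds simultaneously for all $\rho \in \mathcal{P}(\Theta)$, so $\mathbb{P}_{\mathcal{S}}(A) \geq 1 - \delta$. On $A \cap B$ I would leave the bias term $\frac{2\lambda c^2(1 + 1/(\gamma_{ps} n))}{n - 10\lambda}$ untouched (it appears in exactly this form in Theorem~\ref{bern1}) and substitute $1/\gamma_{ps} \leq (1+\varepsilon)/\widehat{\gamma}_{ps}$ only in the divergence term, turning $\frac{KL(\rho\|\mu) + \log(1/\delta)}{\lambda\gamma_{ps}}$ into $\frac{KL(\rho\|\mu) + \log(1/\delta)}{\lambda\widehat{\gamma}_{ps}}(1+\varepsilon)$. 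Because this substitution does not depend on $\rho$, it preserves the universal quantifier over $\rho$. A union bound then gives $\mathbb{P}_{\mathcal{S}}(A \cap B) \geq 1 - \mathbb{P}_{\mathcal{S}}(A^c) - \mathbb{P}_{\mathcal{S}}(B^c) \geq 1 - 2\delta$, and on $A \cap B$ the displayed inequality of the corollary holds, which proves the claim.

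The single point requiring care — and the main, rather mild, obstacle — is the direction of the inequality: to upper-bound the generalization bound we need an upper bound on $1/\gamma_{ps}$, which comes from the \emph{upper} half $\widehat{\gamma}_{ps}/\gamma_{ps} \leq 1 + \varepsilon$ of the two-sided event $B$, while the lower half plays no role. I would also remark that, in contrast to the finite-state setting, the resulting bound remains only partially empirical, since $\gamma_{ps}$ still appears both in the bias term and inside $\varepsilon$; rendering it fully empirical would require an additional lower bound on $\gamma_{ps}$, which this corollary does not assert.
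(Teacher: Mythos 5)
Your proof is correct and follows essentially the same route as the paper, which obtains this corollary by combining Theorem~\ref{bern1} with Proposition~\ref{prp:ar1:2} via a union bound, substituting $1/\gamma_{ps}\leq(1+\varepsilon)/\widehat{\gamma}_{ps}$ with $\varepsilon = \frac{24}{\gamma_{ps}^{3/2}}\sqrt{(9+4\log\frac{1}{\delta})/n}$ in the KL term only. Your observation that the bias term is read off directly from Theorem~\ref{bern1} (rather than from the $1/n^{1-a}$ relaxation in Corollary~\ref{cor: bern_pseudo_added}) is exactly what the stated bound requires, and your closing remark that the result remains only partially empirical is also accurate, which is why the paper follows it with the fully explicit version under the assumption $n\geq 1/\gamma_{ps}^{4}$.
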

For example, when $n$ is large enough to ensure $n\geq 1/\gamma_{ps}^{4} $ then
\begin{multline*}
\mathbb{P}_{\mathcal{S}}\Bigg(\forall\rho\in\mathcal{P}(\Theta),\:\: \mathbb{E}_{\theta\sim\rho}\left[R(\theta)\right]
\\
 \leq\mathbb{E}_{\theta\sim\rho}\left[r(\theta)\right]+ \frac{2\lambda c^2\left(1+\frac{1}{n^{3/4}}\right)}{n - 10\lambda}
\\
 + \frac{KL(\rho||\mu)+\log \frac{1}{\delta}}{\lambda \widehat{\gamma}_{ps}}\left(1 +\frac{24}{n^{1/8}} \sqrt{9+4\log \frac{1}{\delta}  } \right)  \Bigg)
 \\
 \geq 1-2 \delta.
\end{multline*}
The tools developed in~\cite{nakakita2024dimension} allow to tackle more general situations, such as multivariate $U_t$'s and the case where the variance of $\zeta_t$ is unknown.

\subsection{Optimization with Respect to $\lambda$ and Oracle Bounds}

We discuss briefly here how to tune the parameter $\lambda$ in the PAC-Bayes bound and how we can obtain oracle bounds. The procedure is relatively standard, so we provide only the bare minimum, together with references for more details.

Given a finite grid $\Lambda=\{\lambda_1,\dots,\lambda_L\}$ of possible values for $\lambda$, we can perform a union bound on Theorem~\ref{bern1}. We obtain:
\begin{multline*}
\mathbb{P}_{\mathcal{S}}\Bigg(\forall\rho\in\mathcal{P}(\Theta), \exists\lambda\in\Lambda\:\: \mathbb{E}_{\theta\sim\rho}\left[R(\theta)\right]\leq\mathbb{E}_{\theta\sim\rho}\left[r(\theta)\right]
\\
+\frac{2\lambda c^2\left(1+\frac{1}{n \gamma_{ps}}\right)}{n - 10\lambda}+ \frac{KL(\rho||\mu)+\log \frac{L}{\delta}}{\lambda \gamma_{ps}}  \Bigg)\geq 1-\delta.
\end{multline*}

\begin{defi}
We put
$$\hat{\rho}=\argmin_{\rho}\left[\mathbb{E}_{\theta\sim\rho}\big[r(\theta)\big]+ B\left(\rho,\frac{(1+\varepsilon)}{\widehat{\gamma}_{ps}}\right)\right]$$
where, for any probability distribution $\nu\in\mathcal{P}(\Theta)$ and any real number $u>0$,
\begin{multline*}
B\left(\nu,u\right) := \min_{\lambda\in\Lambda} \Bigg\{\frac{2\lambda c^2\left(1+\frac{u}{n} \right)}{n - 10\lambda}
\\+ u \frac{KL(\nu||\mu)+\log \frac{L}{\delta}}{\lambda}\Bigg\}.
\end{multline*}
\end{defi}

The excess risk of $\hat{\rho}$ is upper bounded in the following theorem.
\begin{thm}\label{excess}
Under the conditions of Theorem~\ref{bern1},
\begin{equation*}
\mathbb{E}_{\theta\sim\hat{\rho}}\big[R(\theta)\big]
\leq
\inf_{\rho}
\Biggl\{
\mathbb{E}_{\theta\sim\hat{\rho}}\big[R(\theta)\big] + 2 B\left( \rho,\frac{1+\varepsilon}{\gamma_{ps}-\varepsilon}  \right)
\Biggr\}
\end{equation*}
with probability at least $1-2\delta-\alpha(n,\gamma_{ps},\varepsilon)$.
\end{thm}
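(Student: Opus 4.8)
The plan is to run the standard Catoni-style oracle argument on top of the union-bound version of Theorem~\ref{bern1} displayed above, together with its reversed counterpart, and then to transfer every occurrence of the estimated gap $\widehat{\gamma}_{ps}$ to the true gap $\gamma_{ps}$ using the confidence bound~\eqref{general gamma hat}. I read the oracle term inside the infimum as $\mathbb{E}_{\theta\sim\rho}[R(\theta)]$, so that the statement is a genuine oracle inequality; and I abbreviate $\mathbb{E}_{\hat{\rho}}[R]$ for $\mathbb{E}_{\theta\sim\hat{\rho}}[R(\theta)]$, and similarly for $r$ and for a generic $\rho$. Set $\mathrm{pen}_{\mathrm{tr}}(\rho):=B(\rho,1/\gamma_{ps})$ and $\mathrm{pen}_{\mathrm{emp}}(\rho):=B(\rho,(1+\varepsilon)/\widehat{\gamma}_{ps})$. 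The union-bound display is exactly the assertion that, with probability at least $1-\delta$, the inequality $\mathbb{E}_{\rho}[R]\le\mathbb{E}_{\rho}[r]+\mathrm{pen}_{\mathrm{tr}}(\rho)$ holds simultaneously for all $\rho\in\mathcal{P}(\Theta)$, since selecting the minimizing $\lambda\in\Lambda$ folds the $\min_\lambda$ into the definition of $B$.

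First I would record the reversed bound. Because the Bernstein inequality for Markov chains of~\citet{paulin2015} underlying Theorem~\ref{bern1} is two-sided, applying the derivation of Theorem~\ref{bern1} verbatim to the bounded loss $c-\ell$ in place of $\ell$ yields, with probability at least $1-\delta$ and simultaneously for all $\rho$, the reverse inequality $\mathbb{E}_{\rho}[r]\le\mathbb{E}_{\rho}[R]+\mathrm{pen}_{\mathrm{tr}}(\rho)$; the penalty is unchanged because $|c-\ell|\le c$ and the variance proxy is the same. Call $E_1$ and $E_2$ the events on which the forward and reversed bounds hold, and let $E_{\mathrm{est}}=\{|\widehat{\gamma}_{ps}/\gamma_{ps}-1|\le\varepsilon\}$, which by~\eqref{general gamma hat} has probability at least $1-\alpha(n,\gamma_{ps},\varepsilon)$.

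The core chain is carried out on $E_1\cap E_2\cap E_{\mathrm{est}}$. On $E_1$, apply the forward bound to the data-dependent $\hat{\rho}$ (legitimate precisely because the bound is uniform over $\rho$), giving $\mathbb{E}_{\hat{\rho}}[R]\le\mathbb{E}_{\hat{\rho}}[r]+\mathrm{pen}_{\mathrm{tr}}(\hat{\rho})$. Since $B(\nu,u)$ is nondecreasing in $u$ and, on $E_{\mathrm{est}}$, the bound $\widehat{\gamma}_{ps}\le(1+\varepsilon)\gamma_{ps}$ gives $(1+\varepsilon)/\widehat{\gamma}_{ps}\ge1/\gamma_{ps}$, we get $\mathrm{pen}_{\mathrm{tr}}(\hat{\rho})\le\mathrm{pen}_{\mathrm{emp}}(\hat{\rho})$, hence $\mathbb{E}_{\hat{\rho}}[R]\le\mathbb{E}_{\hat{\rho}}[r]+\mathrm{pen}_{\mathrm{emp}}(\hat{\rho})$. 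Invoking the defining optimality $\mathbb{E}_{\hat{\rho}}[r]+\mathrm{pen}_{\mathrm{emp}}(\hat{\rho})\le\mathbb{E}_{\rho}[r]+\mathrm{pen}_{\mathrm{emp}}(\rho)$ and then the reverse bound on $E_2$ yields, for every $\rho$, the inequality $\mathbb{E}_{\hat{\rho}}[R]\le\mathbb{E}_{\rho}[R]+\mathrm{pen}_{\mathrm{tr}}(\rho)+\mathrm{pen}_{\mathrm{emp}}(\rho)$.

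It then remains to replace both penalties by the single $\gamma_{ps}$-expression $B(\rho,(1+\varepsilon)/(\gamma_{ps}-\varepsilon))$. Using $\gamma_{ps}\le1$ one checks $\gamma_{ps}-\varepsilon\le(1-\varepsilon)\gamma_{ps}$, and on $E_{\mathrm{est}}$ the lower bound $\widehat{\gamma}_{ps}\ge(1-\varepsilon)\gamma_{ps}$ gives $(1+\varepsilon)/\widehat{\gamma}_{ps}\le(1+\varepsilon)/(\gamma_{ps}-\varepsilon)$, whence $\mathrm{pen}_{\mathrm{emp}}(\rho)\le B(\rho,(1+\varepsilon)/(\gamma_{ps}-\varepsilon))$ by monotonicity; similarly $1/\gamma_{ps}\le(1+\varepsilon)/(\gamma_{ps}-\varepsilon)$ gives the same bound for $\mathrm{pen}_{\mathrm{tr}}(\rho)$. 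Substituting and taking $\inf_\rho$ yields the claim, and $\mathbb{P}(E_1\cap E_2\cap E_{\mathrm{est}})\ge1-2\delta-\alpha(n,\gamma_{ps},\varepsilon)$ gives the probability. The only genuinely non-routine ingredient is the reversed deviation bound; everything else is monotonicity of $B$ and the two-line estimation comparisons. The point to handle with care is that $\hat{\rho}$ is data-dependent, so the forward and reversed PAC-Bayes inequalities must be used in their uniform-in-$\rho$ form, not pointwise.
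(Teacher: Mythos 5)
Your proof is correct and follows essentially the same route as the paper's: a forward uniform (grid-union) PAC-Bayes bound applied to $\hat{\rho}$, the defining optimality of $\hat{\rho}$, a reversed bound obtained by rerunning the argument on a flipped loss (the paper uses $-\ell$, you use $c-\ell$; both are fine), and the two estimation comparisons $(1+\varepsilon)/\widehat{\gamma}_{ps}\geq 1/\gamma_{ps}$ and $\widehat{\gamma}_{ps}\geq\gamma_{ps}-\varepsilon$ combined with monotonicity of $B$, with the same $1-2\delta-\alpha$ union-bound accounting. Your derivation of $\widehat{\gamma}_{ps}\geq(1-\varepsilon)\gamma_{ps}\geq\gamma_{ps}-\varepsilon$ (via $\gamma_{ps}\leq 1$) is in fact a cleaner justification of the paper's auxiliary inequality, and you correctly identified the typo in the statement (the infimum should involve $\mathbb{E}_{\theta\sim\rho}[R(\theta)]$).
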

\begin{remark} A well-chosen grid will contain a $\lambda$ of the order of
$$
c \sqrt{ \frac{2 n (1+\varepsilon) KL(\rho\|\mu)}{\gamma_{ps} - \varepsilon}  }
$$
which gives $B\left( \rho,\frac{1+\varepsilon}{\gamma_{ps}-\varepsilon}  \right)$ of the order of
$$
2 c \sqrt{ \frac{ 2 (1+\varepsilon) KL(\rho\|\mu) }{ n(\gamma_{ps}-\varepsilon) } },
$$
we refer the reader to Section 2.1.4 in~\cite{Alquier_2024} for more details on the construction of the grid.
\end{remark}
\begin{remark}
As in Corollary~\ref{cor: bern_pseudo_added}, we could exemplify the theorem in the case where $\gamma_{ps}\geq 1/ n^a$. However, the constraint $\varepsilon<\gamma_{ps}$ will require one to take $\varepsilon< 1/ n^a$. For example, with $\varepsilon = 1/(2n^a)$ we obtain:
\begin{equation*}
\mathbb{E}_{\theta\sim\hat{\rho}}\big[R(\theta)\big]
\leq
\inf_{\rho}
\Biggl\{
\mathbb{E}_{\theta\sim\hat{\rho}}\big[R(\theta)\big] + 2 B\left( \rho, 2n^a + 1 \right)
\Biggr\}
\end{equation*}
with probability at least $1-2\delta-\alpha(n,1/n^a,1/2n^a)$. This also requires to check that $\alpha(n,1/n^a,1/2n^a)\rightarrow 0$. For $a$ small enough, this is straightforward in the two examples we developed above.
\end{remark}

\section{APPLICATION: FINITE SET OF PREDICTORS}
\label{section:application}

In this section, we exemplify the approach of Sections~\ref{section:nonempirical}
and~\ref{section:empirical} in the case where the set of predictors is finite: ${\rm card}(\Theta)=M<+\infty$. This case was studied extensively in the machine learning literature. We believe it is also of pedagogical interest as the bound takes a simpler form in this situation, the reader will also observe that the optimization with respect to $\lambda$ is more explicit. We will then assess the tightness of the bound on simulated data.

\subsection{PAC-Bayes Bound with a Finite $\Theta$}

We consider the posterior $\rho$ that minimizes the expected empirical risk. This $\rho$ corresponds to the Dirac mass on the empirical risk minimizer $\hat{\theta}_{\text{ERM}}=\argmin_\theta r(\theta)$, since 
\begin{align}\label{dirac_erm}
\inf_\rho\mathbb{E}_{\theta\sim\rho}\big[r(\theta)\big]=\inf_\rho\left[\sum_{\theta_i}\rho(\theta_i)r(\theta_i)\right]=r(\hat{\theta}_{\text{ERM}}).
\end{align}

The next theorem, which is proven using the PAC-Bayes bound of Theorem~\ref{bern1}, provides a generalization bound on $\hat{\theta}_{\text{ERM}}$.
\begin{thm}\label{thm: finite_erm theoretical}
Fix $\varepsilon>0$.
 Let $\{U_t\}_{t = 1}^{n}$ be a stationary Markov chain with pseudo-spectral gap $\gamma_{ps}>0$. Suppose $\text{card}(\Theta)=M<\infty$, and $\mu$ is the uniform prior on $\Theta$, then for any $\delta\in(0,1)$, as soon as $n$ is large enough to ensure
 $$n > \frac{ 50(1+\varepsilon) \log \frac{M}{\delta}}{\varepsilon^2 c^2\gamma_{ps}\left(1+\frac{1}{\gamma_{ps}n}\right)},
$$
we have
 \begin{multline*}
\mathbb{P}_{\mathcal{S}}\Bigg( R(\hat{\theta}_{\text{ERM}})\leq r(\hat{\theta}_{\text{ERM}}) 
\\          +         \sqrt{\frac{ 8(1+\varepsilon) c^2 \log \frac{M}{\delta} }{ \gamma_{ps}n }\left(1+\frac{1}{\gamma_{ps}n}\right)} \; \Bigg)\geq 1-\delta.
\end{multline*}         
\end{thm}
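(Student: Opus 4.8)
The plan is to specialize Theorem~\ref{bern1} to a single, data-dependent posterior and then optimize the free parameter $\lambda$. First I would invoke the high-probability event of Theorem~\ref{bern1}, which holds simultaneously for every $\rho\in\mathcal{P}(\Theta)$, and evaluate it at the point mass $\rho_{\mathrm{ERM}}$ concentrated on $\hat\theta_{\mathrm{ERM}}$. Because the bound is uniform in $\rho$, the fact that $\rho_{\mathrm{ERM}}$ depends on the sample causes no difficulty. For this choice one has $\mathbb{E}_{\theta\sim\rho_{\mathrm{ERM}}}[R(\theta)]=R(\hat\theta_{\mathrm{ERM}})$ and $\mathbb{E}_{\theta\sim\rho_{\mathrm{ERM}}}[r(\theta)]=r(\hat\theta_{\mathrm{ERM}})$, while for the uniform prior $\mu$ on a set of cardinality $M$ the Kullback--Leibler term collapses to $KL(\rho_{\mathrm{ERM}}\|\mu)=\log M$. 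Substituting these and merging $\log M+\log(1/\delta)=\log(M/\delta)$, the bound becomes, on an event of probability at least $1-\delta$,
$$R(\hat\theta_{\mathrm{ERM}})\le r(\hat\theta_{\mathrm{ERM}})+\frac{A\lambda}{n-10\lambda}+\frac{B}{\lambda},\qquad A:=2c^2\bigl(1+\tfrac{1}{n\gamma_{ps}}\bigr),\quad B:=\tfrac{\log(M/\delta)}{\gamma_{ps}}.$$

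Next I would choose $\lambda$ to balance the two penalty terms. The clean square-root form of the statement suggests optimizing the slightly inflated surrogate $(1+\varepsilon)\tfrac{A\lambda}{n}+\tfrac{B}{\lambda}$ in place of the true penalty: its minimizer is $\lambda^\star=\sqrt{\tfrac{Bn}{(1+\varepsilon)A}}$, at which the surrogate equals $2\sqrt{\tfrac{(1+\varepsilon)AB}{n}}$. Plugging in $A$ and $B$, this is exactly $\sqrt{\tfrac{8(1+\varepsilon)c^2\log(M/\delta)}{\gamma_{ps}n}\bigl(1+\tfrac{1}{\gamma_{ps}n}\bigr)}$, the quantity appearing in the theorem. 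Note that $\lambda^\star$ is a deterministic function of the fixed quantities $n,\gamma_{ps},\varepsilon,M,\delta,c$, hence an admissible choice in Theorem~\ref{bern1}. So the task reduces to showing that, at $\lambda=\lambda^\star$, the genuine penalty $\tfrac{A\lambda}{n-10\lambda}+\tfrac{B}{\lambda}$ is dominated by this surrogate.

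The main obstacle is the denominator $n-10\lambda$ in the first penalty term, which is precisely what forces the sample-size condition. I would show that $\tfrac{A\lambda}{n-10\lambda}\le(1+\varepsilon)\tfrac{A\lambda}{n}$, which is equivalent to $n-10\lambda\ge\tfrac{n}{1+\varepsilon}$, i.e.\ to $\lambda\le\tfrac{n\varepsilon}{10(1+\varepsilon)}$. Squaring the inequality $\lambda^\star\le\tfrac{n\varepsilon}{10(1+\varepsilon)}$ and using $(\lambda^\star)^2=\tfrac{Bn}{(1+\varepsilon)A}$ converts this requirement into $n\ge\tfrac{100(1+\varepsilon)B}{\varepsilon^2 A}$; substituting the definitions of $A$ and $B$ yields precisely the hypothesis $n>\tfrac{50(1+\varepsilon)\log(M/\delta)}{\varepsilon^2 c^2\gamma_{ps}(1+1/(\gamma_{ps}n))}$ of the theorem. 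The same bound also gives $\lambda^\star\le\tfrac{n\varepsilon}{10(1+\varepsilon)}<\tfrac{n}{10}$, so $\lambda^\star$ satisfies the admissibility constraint of Theorem~\ref{bern1}. Finally, combining $\tfrac{A\lambda^\star}{n-10\lambda^\star}\le(1+\varepsilon)\tfrac{A\lambda^\star}{n}$ with the exact value of $\tfrac{B}{\lambda^\star}$ collapses the penalty at $\lambda^\star$ into the surrogate minimum $2\sqrt{(1+\varepsilon)AB/n}$, which is the stated bound. The only delicate point is the bookkeeping that makes the threshold on $n$ come out with exactly the stated constant; everything else is routine one-variable optimization.
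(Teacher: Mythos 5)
Your proposal is correct and follows essentially the same route as the paper's proof: specialize Theorem~\ref{bern1} to the Dirac mass at $\hat\theta_{\mathrm{ERM}}$ (legitimate by uniformity in $\rho$, with $KL=\log M$ under the uniform prior), bound $\frac{A\lambda}{n-10\lambda}$ by $(1+\varepsilon)\frac{A\lambda}{n}$ under the constraint $10\lambda<n\varepsilon/(1+\varepsilon)$, and optimize $\lambda$ to obtain the stated square-root bound together with the sample-size threshold. If anything, your bookkeeping is slightly cleaner: the threshold you derive, $n>\frac{50(1+\varepsilon)\log\frac{M}{\delta}}{\varepsilon^2c^2\gamma_{ps}\left(1+\frac{1}{\gamma_{ps}n}\right)}$, matches the theorem statement exactly, whereas the paper's final display in its proof carries a spurious extra factor of $\gamma_{ps}$ in the numerator (evidently a typo).
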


\begin{remark} The main difference between this bound and similar results in the i.i.d. case (like Example 2.1 of~\citet{Alquier_2024}) is that $n$ is replaced by $n\gamma_{ps}$. Thus, we can think of $n\gamma_{ps}$ as an "effective sample size". When $\gamma_{ps}$ is close to one, Markov observations are almost as informative as i.i.d. observations. 
\end{remark}
\begin{remark}
     When the initial prior $\mu$ is not uniform, the bound in the statement still holds after replacing 
  $\log \!\frac{M}{\delta}$ with 
  $\log \!\frac{1}{\mu(\hat{\theta}_{\mathrm{ERM}})\,\delta}$. All other terms in the bound remain unchanged.
\end{remark}

We now mimic what was done in Section~\ref{section:empirical} in the general case, to make this bound empirical, by using an estimator $\widehat{\gamma}_{ps}$.
\begin{cor}
  \label{cor: bern_pseudo_added:finite_case}
 Under the conditions of Theorem~\ref{thm: finite_erm theoretical}, fix $a\in(0,1)$ and assume that $n$ is large enough to ensure $n \geq 1/\gamma_{ps}^{1/a}$.
Assume we have an estimator $\widehat{\gamma}_{ps}$ of $\gamma_{ps}$ such that
\begin{align}\label{finite gamma hat}
    \mathbb{P}_{\mathcal{S}}\left( \left| \frac{\widehat{\gamma}_{ps} }{\gamma_{ps}} - 1  \right| \leq \varepsilon  \right)\geq 1-\alpha(n,\gamma_{ps},\varepsilon),
\end{align}
then
\begin{multline*}
\mathbb{P}_{\mathcal{S}}\Bigg( R(\hat{\theta}_{\text{ERM}})\leq r(\hat{\theta}_{\text{ERM}}) 
\\                   
+   \sqrt{\frac{ 8 c^2 \log \frac{M}{\delta} }{ \widehat{\gamma}_{ps}n }(1+\varepsilon)^2\left(1+\frac{1}{n^{1-a}}\right)} \Bigg)
\\
\geq 1-\delta-\alpha(n,\gamma_{ps},\varepsilon).
\end{multline*}      

\end{cor}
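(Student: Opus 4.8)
The plan is to obtain the corollary as a direct consequence of Theorem~\ref{thm: finite_erm theoretical} combined with the confidence guarantee~\eqref{finite gamma hat}, via a single union bound followed by a purely deterministic manipulation. The key observation is that on the event $\mathcal{E}:=\{|\widehat{\gamma}_{ps}/\gamma_{ps}-1|\leq\varepsilon\}$ one has $\widehat{\gamma}_{ps}\leq(1+\varepsilon)\gamma_{ps}$, and hence
$$
\frac{1}{\gamma_{ps}}\leq\frac{1+\varepsilon}{\widehat{\gamma}_{ps}}.
$$
This is the mechanism that converts the unknown $\gamma_{ps}$ into the observable $\widehat{\gamma}_{ps}$ at the price of a multiplicative factor $(1+\varepsilon)$.

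First I would invoke Theorem~\ref{thm: finite_erm theoretical}, whose hypotheses are assumed verbatim: on an event $\mathcal{A}$ with $\mathbb{P}_{\mathcal{S}}(\mathcal{A})\geq 1-\delta$ we have
$$
R(\hat{\theta}_{\text{ERM}})\leq r(\hat{\theta}_{\text{ERM}})+\sqrt{\frac{8(1+\varepsilon)c^2\log\frac{M}{\delta}}{\gamma_{ps} n}\left(1+\frac{1}{\gamma_{ps} n}\right)}.
$$
Second, by~\eqref{finite gamma hat} we have $\mathbb{P}_{\mathcal{S}}(\mathcal{E})\geq 1-\alpha(n,\gamma_{ps},\varepsilon)$, so a union bound gives $\mathbb{P}_{\mathcal{S}}(\mathcal{A}\cap\mathcal{E})\geq 1-\delta-\alpha(n,\gamma_{ps},\varepsilon)$, which is exactly the target confidence. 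All remaining work takes place on this intersection and is deterministic. There I would transform the two occurrences of $1/\gamma_{ps}$ differently: the explicit outer factor is replaced using the estimator, $\frac{1}{\gamma_{ps}}\leq\frac{1+\varepsilon}{\widehat{\gamma}_{ps}}$, which turns the prefactor $\frac{8(1+\varepsilon)c^2}{\gamma_{ps}n}$ into $\frac{8(1+\varepsilon)^2c^2}{\widehat{\gamma}_{ps}n}$ and thereby produces the $(1+\varepsilon)^2$ of the statement; the inner factor $\frac{1}{\gamma_{ps}n}$ is instead controlled by the sample-size hypothesis $n\geq 1/\gamma_{ps}^{1/a}$, i.e. $\gamma_{ps}\geq n^{-a}$, which yields $\frac{1}{\gamma_{ps}n}\leq\frac{n^{a}}{n}=\frac{1}{n^{1-a}}$. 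Substituting both bounds and using monotonicity of $\sqrt{\cdot}$ gives the claimed inequality.

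The only genuine subtlety, and the step I would be most careful about, is keeping these two substitutions separate: feeding the estimator bound $\frac{1+\varepsilon}{\widehat{\gamma}_{ps}}$ into \emph{both} copies of $1/\gamma_{ps}$ would introduce a spurious third factor of $(1+\varepsilon)$ and lose the clean $(1+\varepsilon)^2$, while the bound is tight precisely because one factor is controlled empirically and the other by the sample-size condition. Everything else is routine, and no new probabilistic input beyond Theorem~\ref{thm: finite_erm theoretical} and~\eqref{finite gamma hat} is required.
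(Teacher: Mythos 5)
Your proof is correct and matches the route the paper intends (the paper leaves this corollary unproved, as the exact analogue of Corollary~\ref{cor: bern_pseudo_added}): a union bound over the event of Theorem~\ref{thm: finite_erm theoretical} and the event in~\eqref{finite gamma hat}, then the substitution $1/\gamma_{ps}\leq(1+\varepsilon)/\widehat{\gamma}_{ps}$ for the leading factor and $\gamma_{ps}\geq n^{-a}$ for the inner term. Your remark about keeping the two substitutions separate is exactly the right point of care and reproduces the paper's $(1+\varepsilon)^2\left(1+\frac{1}{n^{1-a}}\right)$ structure.
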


\subsection{Experiments}

In this section we assess the accuracy of the empirical bound of Corollary~\ref{cor: bern_pseudo_added:finite_case}. That is, we sample trajectories of Markov chains\footnote{The code is available online: \url{https://github.com/v-ahe/empirical-pac-bayes-markov/tree/main}} with various state spaces and transition kernels $P$. On the contrary to a real-life situation, where $P$ would be unknown, we can compute $\gamma_{ps}$ numerically here. This allows to compare the non-empirical bound of Theorem~\ref{thm: finite_erm theoretical} to the empirical bound of Corollary~\ref{cor: bern_pseudo_added:finite_case}, using the estimator $\widehat{\gamma}_{ps}$ of~\citet{wolfer2022estimatingmixingtimeergodic}, see~\eqref{spec-estim}.

\textbf{Setting:} we will consider $\mathcal{U}=[d]$ for various values of $d$: 4, 10, 20, 50 and 100. We work with a binary classification problem, where $\mathcal{Y}=\{0,1\}$, and our set of predictors are simply thresholds $f_{\theta}(u)=\mathbf{1}(u\geq \theta)$ for $\theta\in\Theta=[d]$, that is, ${\rm card}(\Theta)=d$ in Theorem~\eqref{thm: finite_erm theoretical} and Corollary~\ref{cor: bern_pseudo_added:finite_case}. We consider various with various sample sizes: $n\in\{10,100,1000,10000\}$.

In order to study the behavior of the bound under various values of $\gamma_{ps}$, we ran simulations for a wide range of transition kernels. We designed these kernels as follows: we first fixed a transition kernel $P$ with $\gamma_{ps}(P)\simeq 0$, and a transition kernel $Q$ with $\gamma_{ps}(Q) = 1$. This is easily obtained with a rank one $Q=1^T\cdot\pi$ where $1^T=(1,\dots,1)$ and $\pi$ is the invariant distribution of $P$; we refer the reader to the supplement for the exact definition of $P$.
We then defined the kernels
\begin{equation}
\label{interpol}
R_t := t P + (1-t) Q
\end{equation}
for an interpolation parameter $t\in[0,1]$, and ran experiments for each $t\in\{k/20, 0 \leq k \leq 20\}$. In each case, we simply sample trajectories $(U_1,\dots,U_n)$ from the kernel $R_t$, then, the 
\begin{figure}[t]
\centering
\includegraphics[width=1.00\linewidth]{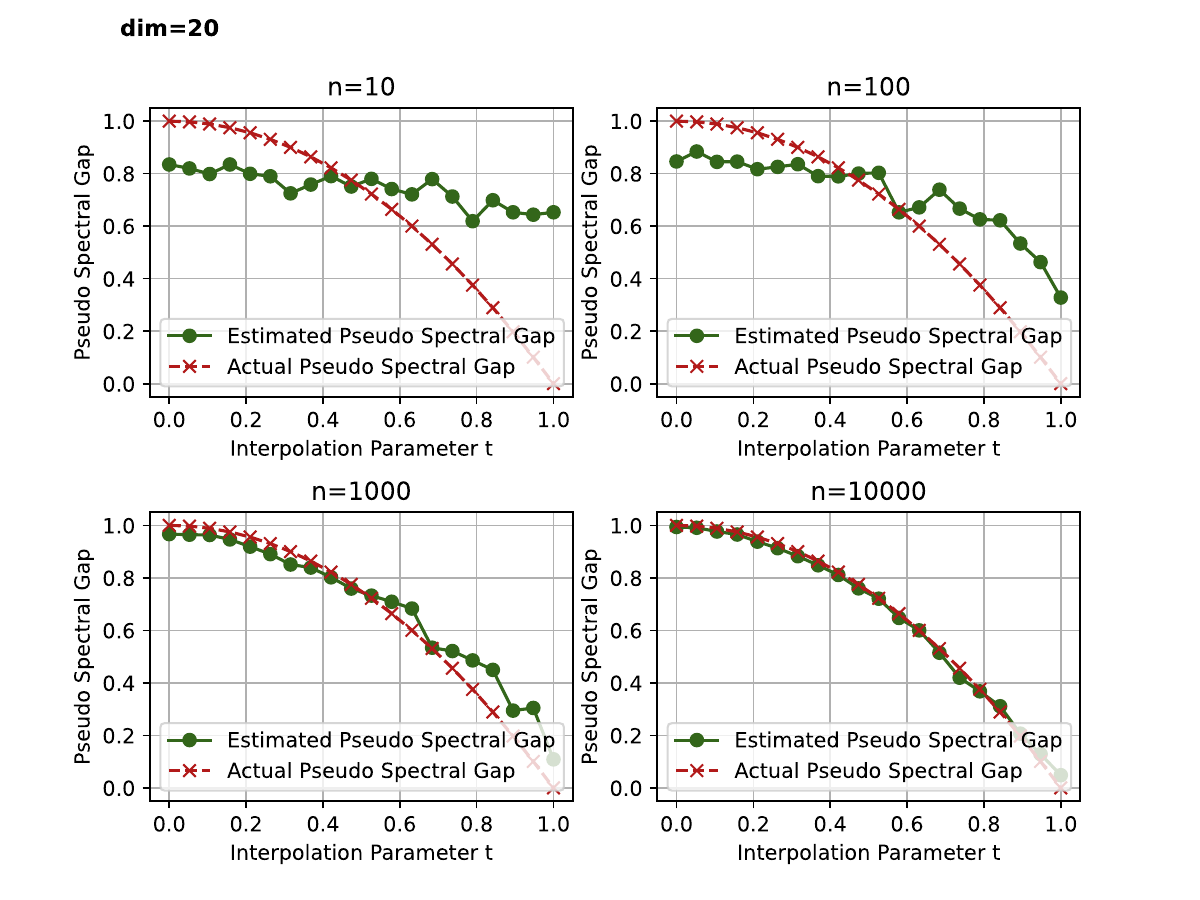}
\vspace*{1px}
\caption{Estimation of $\gamma_{ps}(R_t)$ when $d=20$. In red, the actual values of $\gamma_{ps}(R_t)$, as a function of our interpolation parameter $t$, see~\eqref{interpol}. In green, the value of the estimator $\widehat{\gamma}_{ps}$.}
\label{figure: gamma_20}
\end{figure}

\textbf{Tuning parameters:} the parameter $K$ in the definition of $\hat{\gamma}_{ps,[K]}$ was set as $20$ in all our experiments\footnote{In most cases, we found that the maximum was reached for $k=1$. Note however that, in the $d=4$ setting, in a couple of cases the maximum was reached for $k=4$. It can be understood intuitively why this is the case. Assume for simplicity that $P$ is reversible, that is $P^{\star}=P$, and let $\lambda$ denote the largest eigenvalue different from $1$. Then $\gamma((P^{\star})^k P^k) = \gamma(P^{2k}) = (1-\lambda^2)^{k}$ and thus $\gamma((P^{\star})^k P^k)/k = (1-\lambda^2)^{k}/k$, which is obviously decreasing in $k$. Explicit examples where the maximum is reached for $k>1$ are necessarily non-reversible. Such example can be found in~\cite{paulin2016mixing}.}.
The estimator $\hat{P}$ of the transition matrix of~\cite{wolfer2024improved} involves a smoothing parameter $\alpha$ that was set to $1$. The empirical PAC-Bayes bound in Corollary~\ref{cor: bern_pseudo_added:finite_case} depends on two parameters $a$, set to $0.1$, and $\varepsilon$, set to $0.1/n^{1/3}$.

\textbf{Checking the estimator $\widehat{\gamma}_{ps}$:} first, we ran sanity checks on the estimator $\widehat{\gamma}_{ps}$ of $\gamma_{ps}$. Figure~\ref{figure: gamma_20} shows in red, the true $\gamma_{ps}(R_t)$ as a function of the interpolation parameter $t$, and in green, the estimator $\widehat{\gamma}_{ps}$, for four sample sizes $n$, and $d=20$. This is essentially illustrative, as each point in these plots were obtained on one single experiment. We can still get some information from these plots: the estimation is poor for very small $n$ and good for large $n$, as expected. More importantly, the estimator $\widehat{\gamma}_{ps}$ is far more accurate for small values of $t$, that is, for large values of $\gamma_{ps}$, as predicted by Proposition~\ref{prp:gamma_ratio}. We observed these findings are consistent when we ran more experiments.

As mentioned earlier, the results in Figure~\ref{figure: gamma_20} illustrative, as each point is based on a single trajectory. In order to confirm the good performances of the estimator $\widehat{\gamma}_{ps}$ of~\citet{wolfer2024improved} we sampled $100$ trajectories in the case $n=1000$, $d=100$. We reported for each value of $t$ the MSE of the estimator $\widehat{\gamma}_{ps}$  over all these $100$ replications. The results are in Figure~\ref{fig:MSE}. They confirm that the estimator is accurate, and also that the estimation is more difficult for large values of $t$, that is, small values of $\gamma_{ps}$.
\begin{figure}[t]
\centering
\includegraphics[width=0.95\linewidth]{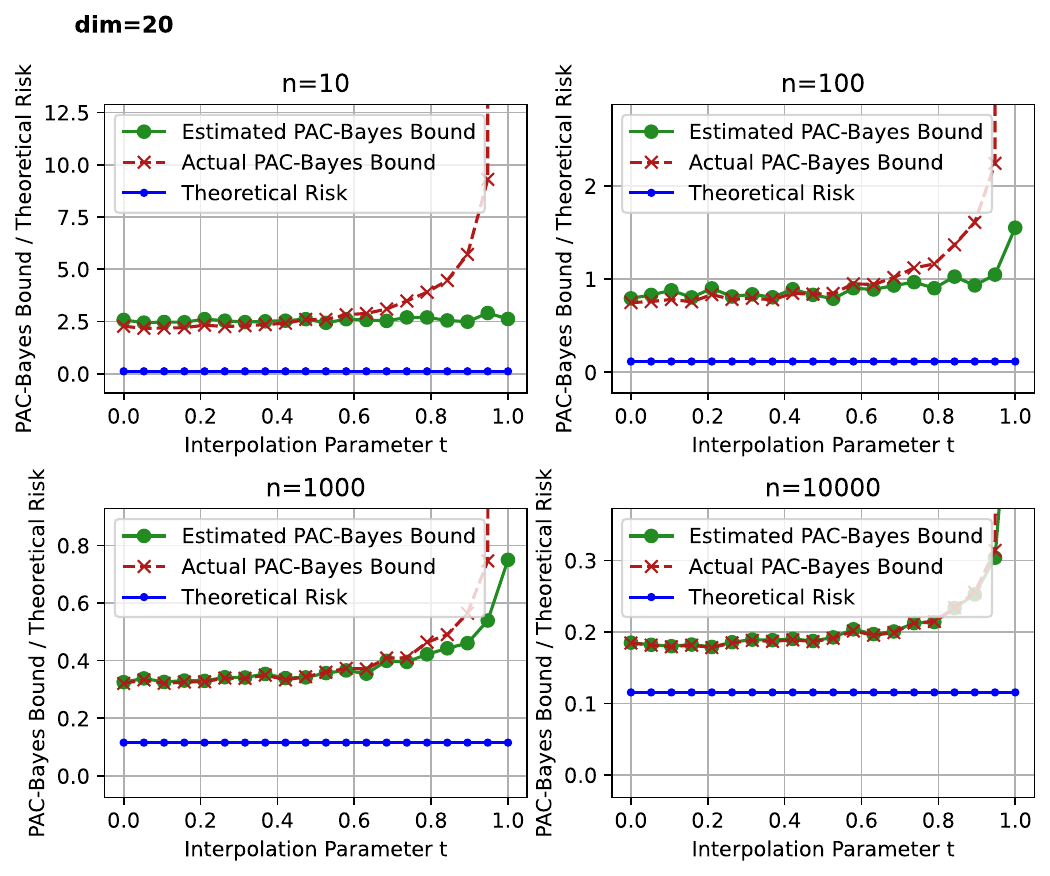}
\vspace*{14px}
\caption{Value of the PAC-Bayes bounds evaluated on a single trajectory, for $d=20$. In red, the non-empirical PAC-Bayes bound, as a function of $t$. In green, the empirical PAC-Bayes bound. In blue, the true value of the risk.}
\label{figure: pac-bayes}
\end{figure}

\textbf{Checking the PAC-bayes bounds:} for each value of $n$, $d$ and $t$, we then sampled a trajectory, and computed both the non-empirical and the empirical PAC-Bayes bounds on each trajectory. Figure~\ref{figure: pac-bayes} shows, for the various $n$, as functions of $t$: in red, the value of the non-empirical PAC-Bayes bound over all replications, in green, the value of the empirical PAC-Bayes bound over all replications, and in blue, the actual value of the risk $R(\hat{\theta}_{\text{ERM}})$. Here, we only show the results for $d=20$, the plots for the other values of $d$ are reported in the supplement. The take-home message is: for small sample size, the empirical bound is not a very good estimate of the non-empirical bound, but this is a regime where both bounds are vacuous anyway. For larger sample sizes, both bounds are non-vacuous and very similar. Finally, for very large $t$ (very small $\gamma_{ps}$), the non-empirical bound becomes unreliable: it seems to confirm that very mild assumptions such as $n^a\geq 1/\gamma_{ps}$ are indeed unavoidable in practice.

\section{CONCLUSION}

In this paper, we provided the first empirical PAC-Bayes bounds for Markov chains. The numerical results are encouraging: they show that, when the non-empirical bound is tight, the empirical bound is essentially as tight. This still relies on strong assumptions, and we believe that empirical bounds for time series beyond Markov chain are a very important research direction.

\begin{figure}[t]
    \centering
    \includegraphics[width = 0.95\linewidth]{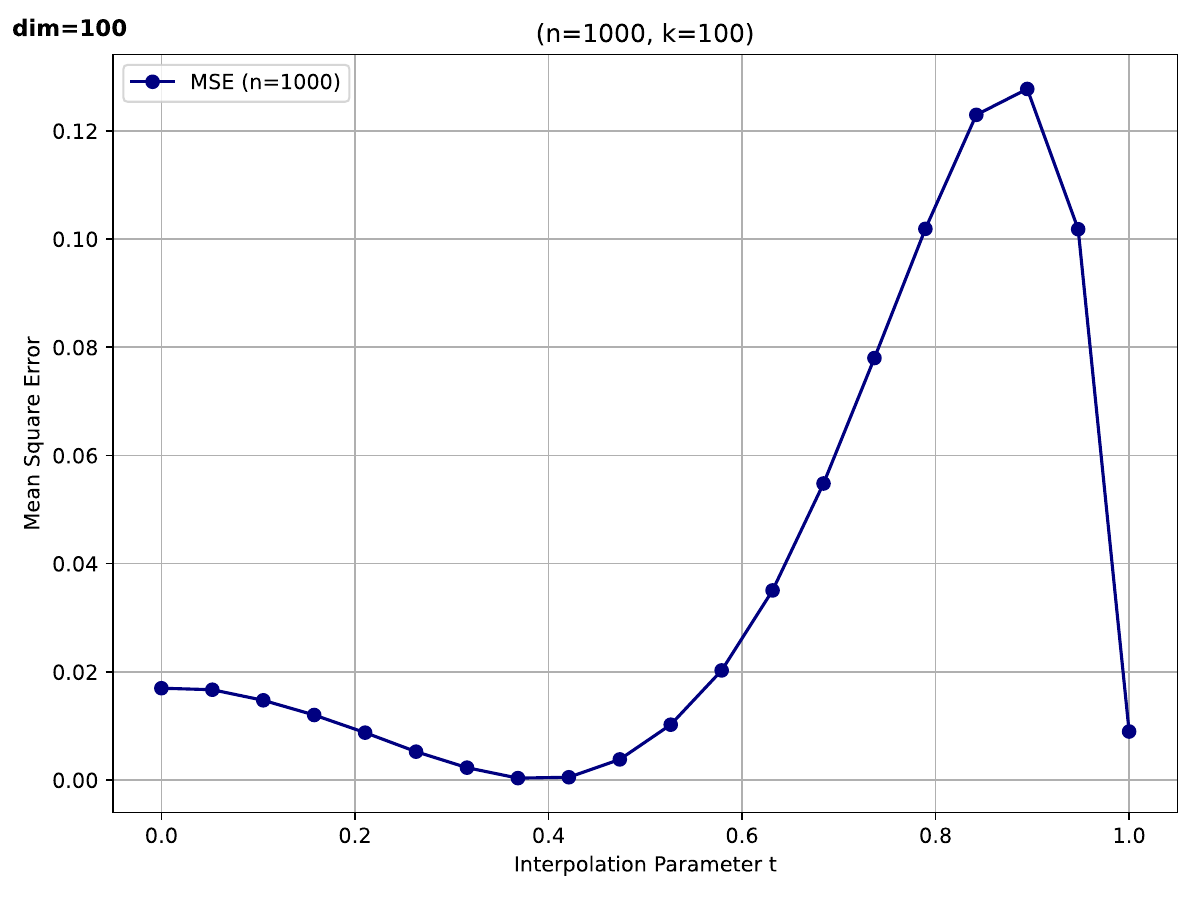}
    \caption{Mean Square Error of $k=100$ estimations of $\widehat\gamma_{ps}$.}
    \label{fig:MSE}
\end{figure}

\textbf{Content of the supplementary material:} all the proofs are in Appendix A. More details on the experiments, together with the results for $d\in\{4,10,50,100\}$ are provided in Appendix B. In Appendix C, we discuss how other PAC-Bayes bounds for time series from~\citet{PierreAlquier2013}, which rely on $\varphi$-mixing coefficients rather than the pseudo-spectral gap, could also be made empirical in the context of Markov chains.

\section*{Acknowledgments}

We thank Geoffrey Wolfer (Tokyo University of Agriculture and Technology) who introduced us to the pseudo-spectral gap and its estimation, and Mikołaj Kasprzak (ESSEC Business School) who spotted minor problems in the final version. We are grateful to the Anonymous Referees and the Area Chair for the very constructive feedback and the improvements they suggested. AI was used to track typos and formatting problems when preparing the camera-ready version, but it was not used at any other stage. All remaining mistakes are ours.

\newpage

\onecolumn
\aistatstitle{Supplementary Material}
\appendix

\section{PROOFS}

\subsection{A Preliminary Remark on the Observations}

In the introduction of the paper, we define the distribution of the pairs $(U_t,Y_t)$ by saying that the $(U_t)$ are sampled from a Markov chain with transition kernel $P$, and that the distribution of $Y_t$ given $(U_1,Y_1),\dots,(U_{t-1},Y_{t-1}),U_t$ is given by $Q(U_{t},\cdot)$. Observe that this is simply equivalent to stating that $[(U_t,Y_t)]_{t\geq 1}$ is a Markov chain on the space $\mathcal{U}\times\mathcal{Y}$ with transition kernel $\bar{P}$ given by
\begin{equation}
\label{fullkernel}
\bar{P}( (u,y),{\rm d}(u',y')) = P(u,{\rm d}u') Q(u',{\rm d} y').
\end{equation}
In the proof of Theorem~\ref{bern1}, we will actually use this fact. In particular, the pseudo-spectral gap of $\bar{P}$ will appear in the proof, while the assumption on Theorem~\ref{bern1} is on the pseudo-spectral gap $\gamma_{ps}$ of $P$: it turns out that this will not lead to any complication, as these two quantities are equal. We start by proving this fact.
\begin{lem}
\label{lemma:UY}
Assume $P$ is the transition kernel of an ergodic chain.
Let $\bar{P}$ be given by~\eqref{fullkernel}. The pseudo-spectral gap of $\bar{P}$ is equal to the pseudo-spectral gap $\gamma_{ps}$ of $P$.
\end{lem}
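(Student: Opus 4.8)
The plan is to realize $\bar P$ as the conjugate of $P$ by an isometry between the two $L^2$-spaces, use this to reduce the powers appearing in the definition of $\gamma_{ps}$ to powers of $P$, and then compare spectra.

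First I would pin down the stationary distribution of $\bar P$: a direct computation using the stationarity of $\pi$ for $P$ shows that $\bar\pi(\mathrm{d}u,\mathrm{d}y):=\pi(\mathrm{d}u)\,Q(u,\mathrm{d}y)$ is invariant for $\bar P$, so that $L^2(\bar\pi)$ and the adjoint $\bar P^{\ast}$ are taken with respect to $\bar\pi$. Next I introduce the linear map $V:L^2(\pi)\to L^2(\bar\pi)$ given by $(Vg)(u,y):=g(u)$, i.e. the inclusion of functions depending on $u$ only. A one-line computation gives $\|Vg\|_{2,\bar\pi}=\|g\|_{2,\pi}$, so $V$ is an isometry, and its adjoint is the ``integrate out $y$'' operator $(V^{\ast}f)(u)=\int f(u,y)\,Q(u,\mathrm{d}y)=\mathbb{E}[f\mid U=u]$. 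In particular $V^{\ast}V=I$ on $L^2(\pi)$, while $VV^{\ast}$ is the conditional-expectation projection onto $\Image(V)$.

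The key algebraic step is the factorization $\bar P=VPV^{\ast}$, which holds because $(\bar P f)(u,y)$ does not depend on $y$ and equals $(P\,V^{\ast}f)(u)$. Taking adjoints gives $\bar P^{\ast}=VP^{\ast}V^{\ast}$, and since $V^{\ast}V=I$ the inner factors telescope when powers are multiplied, so that $\bar P^{k}=VP^{k}V^{\ast}$ and $(\bar P^{\ast})^{k}=V(P^{\ast})^{k}V^{\ast}$, whence
$$(\bar P^{\ast})^{k}\bar P^{k}=V\,(P^{\ast})^{k}P^{k}\,V^{\ast}.$$
Writing $M:=(P^{\ast})^{k}P^{k}$, a self-adjoint positive operator on $L^2(\pi)$ with $M\mathbf 1=\mathbf 1$, it then remains to compare the spectral gaps of $M$ and of $\bar M:=VMV^{\ast}$.

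The only delicate step, and the one I expect to require care, is this spectral comparison. Since $\Image(V)$ is invariant under $\bar M$ and $V$ identifies $\bar M|_{\Image(V)}$ with $M$, while $\bar M$ vanishes on $\Image(V)^{\perp}=\ker V^{\ast}$, the spectrum of $\bar M$ is exactly ${\rm sp}(M)\cup\{0\}$. Because $M$ is positive semidefinite, its sub-dominant spectrum is already nonnegative, so the extra value $0$ cannot exceed it; combined with the fact that $1$ is the top eigenvalue of both operators (eigenfunction $\mathbf 1$, respectively $V\mathbf 1$), this yields $\gamma(\bar M)=\gamma(M)$, i.e. $\gamma\big((\bar P^{\ast})^{k}\bar P^{k}\big)=\gamma\big((P^{\ast})^{k}P^{k}\big)$ for every $k\ge 1$. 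Taking $\max_{k\ge 1}$ of these quantities divided by $k$ gives $\gamma_{ps}(\bar P)=\gamma_{ps}$. I would phrase the comparison through the spectral measures of the self-adjoint operators rather than through isolated eigenvalues, and note that ergodicity keeps $1$ a simple, isolated top eigenvalue, so that the ``second largest'' value is well defined and the degenerate case $M=I$ does not arise.
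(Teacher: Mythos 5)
Your proof is correct, and it takes a genuinely different route from the paper's, though built on the same underlying structure. The paper argues in three steps: an explicit recursion showing $(\bar{P}^*)^k\bar{P}^k\big((u,y),{\rm d}(u',y')\big)=(P^*)^kP^k(u,{\rm d}u')\,Q(u',{\rm d}y')$; then a lifting of every eigenvector $F$ of $(P^*)^kP^k$ to the eigenvector $f(u,y)=F(u)$ of $(\bar{P}^*)^k\bar{P}^k$; then a converse showing every eigenvector of $(\bar{P}^*)^k\bar{P}^k$ with eigenvalue $\lambda\neq 0$ is independent of $y$ and hence descends to $(P^*)^kP^k$. Your factorization $\bar{P}=VPV^*$ is exactly the operator form of the paper's first step (the kernel of $VMV^*$ is $M(u,{\rm d}u')\,Q(u',{\rm d}y')$), but what you do with it differs: instead of the two-way eigenvector correspondence, you use $V^*V=I$ to telescope powers and then read off ${\rm sp}(\bar{M})={\rm sp}(M)\cup\{0\}$ from the orthogonal decomposition $L^2(\bar{\pi})=\Image(V)\oplus\ker V^*$, on which $\bar{M}$ acts as $M\oplus 0$. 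This buys something real: the paper's argument, being phrased through eigenvectors, only controls the point spectrum, whereas the spectral gap is defined through the full spectrum; since the lemma is invoked for infinite state spaces (e.g.\ the AR(1) chain on $\mathbb{R}$), where the spectrum need not consist of eigenvalues, your direct-sum argument closes a small rigor gap in the paper's proof. The paper's route, in exchange, is more elementary and self-contained: it requires only explicit integration against kernels, with no appeal to adjoints of embeddings or reducing subspaces. Both arguments share the same two minor caveats, which you flag explicitly: one must use positivity of $M=(P^*)^kP^k$ so that appending $0$ to the spectrum does not raise the supremum of the non-unit spectrum, and the degenerate case ${\rm sp}(M)=\{1\}$ must be excluded.
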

This might be well known among Markov chains specialists, but we did not find it in any classical textbook. Thus, we preferred to provide a complete proof.
\begin{proof}
The proof goes in four steps. In the first step, we will write the (cumbersome but) explicit formulas for $(\bar{P}^*)^k (\bar{P})^k$. In a second step, we prove that any $\tau$ in the spectrum of $(P^*)^k (P^k)$ is also in the spectrum of $(\bar{P}^*)^k (\bar{P})^k$. In the third step, we prove that any $\tau<1$ in the spectrum of $(\bar{P}^*)^k (\bar{P}^k)$ also belongs to the spectrum of $(P^*)^k (P^k)$. In the fourth and last step, we study the case of $\tau=1$, which is necessarily an eigenvalue for both operators: it appears that its multiplicity is $1$ for $(\bar{P}^*)^k (\bar{P})^k$ if and only if its multiplicity is also $1$ for $(P^*)^k (P^k)$.

\noindent \textbf{Step 1:} first, observe that as $P$ has a unique stationary distribution $\pi$, $\bar{P}$ has a unique stationary distribution $\bar{\pi}({\rm d}(u,y))=\pi({\rm d} u) Q(u,{\rm d}y) $. Thus, we have:
\begin{align*}
\bar{P}^* ( (u,y),{\rm d}(u',y'))
& = \frac{  \bar{P} ( (u',y'),{\rm d}(u,y)) \bar{\pi}({\rm d}(u',y')) }{ \bar{\pi}({\rm d}(u,y)) } 
\\
& = \frac{P(u',{\rm d}u) Q(u,{\rm d}y)   \pi({\rm d} u')   Q(u',{\rm d}y') }{ \pi({\rm d} u) Q(u,{\rm d}y) }
\\
& = \frac{P(u',{\rm d}u)   \pi({\rm d} u')   Q(u',{\rm d}y') }{ \pi({\rm d} u) }
\\
& = P^*(u,{\rm d}u') Q(u',{\rm d}y').
\end{align*}
Then, we use recursion to prove that $(\bar{P}^*)^k ( (u,y),{\rm d}(u',y')) = (P^*)^k(u,{\rm d}u') Q(u',{\rm d}y')$ and $\bar{P}^k ( (u,y),{\rm d}(u',y')) = P^k(u,{\rm d}u') Q(u',{\rm d}y')$. The proof is similar for both. For example,
\begin{align*}
\bar{P}^k ( (u,y),{\rm d}(u',y'))
& = \int_{(v,x)} \bar{P}^{k-1} ( (u,y),{\rm d}(v,x)) \bar{P}^{1} ( (v,x),{\rm d}(v,x)) 
\\
& = \int_{v} \int_x P^{k-1}(u,{\rm d}v) Q(v,{\rm d}x) P(v,{\rm d}u') Q(u',{\rm d}y')
\\
& = \int_{v}  P^{k-1}(u,{\rm d}v)  P(v,{\rm d}u') Q(u',{\rm d}y') \underbrace{\int_x Q(v,{\rm d}x)}_{=1}
\\
& = P^k(u,{\rm d}u') Q(u',{\rm d}y').
\end{align*}
We are now ready to work with $(\bar{P}^*)^k (\bar{P})^k $:
\begin{align*}
(\bar{P}^*)^k (\bar{P})^k ( (u,y),{\rm d}(u',y'))
& = \int_{(v,x)} (P^*)^k(u,{\rm d}v) Q(v,{\rm d}x) P^k(v,{\rm d}u') Q(u',{\rm d}y')
\\
& = \int_{v} (P^*)^k(u,{\rm d}v) P^k(v,{\rm d}u') Q(u',{\rm d}y') \int_x Q(v,{\rm d}x)
\\
& = (P^*)^k P^k (u,{\rm d}u') Q(u',{\rm d}y').
\end{align*}
The conclusion of the first step is thus:
\begin{equation}
\label{formula:pstar}
(\bar{P}^*)^k (\bar{P})^k ( (u,y),{\rm d}(u',y')) = (P^*)^k P^k (u,{\rm d}u') Q(u',{\rm d}y').
\end{equation}

\noindent \textbf{Step 2:} let $\tau \in {\rm sp}((P^*)^k P^k)$, that is, there are sequences of functions $(F_n(u))$ and $(G_n(u))$ with
$$ [(P^*)^k P^k - \tau I] F_n = G_n, $$
$$ \| F_n\|_{\pi}^2 = \int_u F_n(u)^2 \pi({\rm d}u) = 1  \text{ and } $$
$$ \| G_n\|_{\pi}^2 \xrightarrow[n\rightarrow \infty]{} 0 .$$
Then, put $ f_n(u,y) = F_n(u) $ and $g_n(u,y) = G_n(u) $, and observe that
\begin{align*}
[(\bar{P}^*)^k \bar{P}^k - \tau I] f_n (u,y)
&
= \int_{(u',y')} (\bar{P}^*)^k (\bar{P})^k ( (u,y),{\rm d}(u',y')) f_n(u',y') - \tau f_n(u,y)
\\
&
= \int_{(u',y')} (P^*)^k P^k (u,{\rm d}u') Q(u',{\rm d}y')  F_n(u') - \tau F_n(u)
\end{align*}
where we used both~\eqref{formula:pstar} and the definition of $f_n$: $f_n(u,y) = F_n(u)$. Thus,
\begin{align*}
[(\bar{P}^*)^k \bar{P}^k - \tau I] f_n
&
= \int_{u'} \int_{y'} (P^*)^k P^k (u,{\rm d}u') Q(u',{\rm d}y')  F_n(u') - \tau F_n(u)
\\
& = \int_{u'}  (P^*)^k P^k (u,{\rm d}u') F_n(u') \int_{y'} Q(u',{\rm d}y') -  \tau F_n(u)
\\
& =  [(P^*)^k P^k - \tau I] F_n(u)
\\
& = G_n (u)
\\
& = g_n(u,y).
\end{align*}
Moreover,
\begin{align*}
\| f_n\|_{\bar{\pi}}^2
& = \int_{(u,y)} f_n(u,y)^2 \bar{\pi}({\rm d}(u,y))
\\
& = \int_u \int_y F_n(u)^2 \pi({\rm d}u) Q(u, {\rm d}y)
\\
& = \int_u  F_n(u)^2 \pi({\rm d}u) \int_y Q(u, {\rm d}y)
\\
& = \| F_n\|_{\pi}^2
\\
& = 1
\end{align*}
and with exactly the same argument,
$$ \| g_n\|_{\bar{\pi}}^2  =  \| G_n\|_{\pi}^2 \xrightarrow[n\rightarrow \infty]{} 0 .$$
This proves that $\tau \in {\rm sp}((\bar{P}^*)^k \bar{P}^k)$.

\noindent \textbf{Step 3:} let us now assume $\tau \in {\rm sp}((\bar{P}^*)^k \bar{P}^k)$ and $\tau \neq 0$, that is, there are sequences $(f_n(u,y))$ and $(g_n(u,y))$ with
$$ [(\bar{P}^*)^k \bar{P}^k - \tau I] f_n = g_n, $$
$$ \| f_n\|_{\bar{\pi}}^2 = 1  \text{ and } $$
$$ \| g_n \|_{\bar{\pi}}^2 \xrightarrow[n\rightarrow \infty]{} 0 .$$
As
$$ (\bar{P}^*)^k \bar{P}^k f_n(u,y)
= \int_{(u',y')} (\bar{P}^*)^k \bar{P}^k ((u,y),{\rm d}(u',y')) f_n(u',y')
= \int_{(u',y')} (P^*)^k P^k (u,{\rm d}u') Q(u',{\rm d}y')  f_n(u',y')
$$
does not depend on $y$, the equality $ [(\bar{P}^*)^k \bar{P}^k - \tau I] f_n = g_n $ implies that $\tau f_n(u,y) + g_n(u,y) $ also does not depend on $y$, that is, we can for example write:
\begin{equation}
\label{weird:equation:0}
\tau f_n(u,y) + g_n(u,y)  = \int [\tau f_n(u,y') + g_n(u,y')] Q(u,{\rm d} y')
\end{equation}
and as a consequence
\begin{equation}
\label{weird:equation}
\tau f_n(u,y) - \tau \int f_n(u,y')  Q(u,{\rm d} y') + g_n(u,y) = \int g_n(u,y') Q(u,{\rm d} y') .
\end{equation}
Put
$$ F_n(u) = \int_{y'} f_n(u,y') Q(u,{\rm d}y') . $$
Then
\begin{align*}
[(P^*)^k P^k - \tau I] F_n (u)
&
= \int_{u'} (P^*)^k P^k (u, {\rm d}u') F_n(u') - \tau F_n(u)
\\
&
= \int_{u'} (P^*)^k P^k (u, {\rm d}u') \int_{y'} f_n(u',y') Q(u',{\rm d}y') - \tau \int_{y'} f_n(u,y') Q(u,{\rm d}y')
\\
&
= \int_{u'} \int_{y'} (P^*)^k P^k (u, {\rm d}u') Q(u',{\rm d}y') f_n(u',y') - \tau \int_{y'} f_n(u,y') Q(u,{\rm d}y')
\\
&
= \int_{(u',y')} (\bar{P}^*)^k \bar{P}^k ((u,y), {\rm d}(u',y')) f_n(u',y')
    - \tau \int_{y'} f_n(u,y') Q(u,{\rm d}y')
\end{align*}
where we used again~\eqref{formula:pstar}, and thus
\begin{align*}
[(P^*)^k P^k - \tau I] F_n (u)
& = (\bar{P}^*)^k \bar{P}^k f_n(u,y) - \tau \int_{y'} f_n(u,y') Q(u,{\rm d}y')
\\
& = [(\bar{P}^*)^k \bar{P}^k - \tau I] f_n(u,y) + \tau f_n(u,y) - \tau \int_{y'} f_n(u,y') Q(u,{\rm d}y')
\\
& = g_n(u,y) +  \tau f_n(u,y) - \tau \int_{y'} f_n(u,y') Q(u,{\rm d}y')
\\
& = \int g_n(u,y') Q(u,{\rm d} y') 
\end{align*}
according to~\eqref{weird:equation}. We can put $G_n(u) =  \int g_n(u,y') Q(u,{\rm d} y') $, we thus have $[(P^*)^k P^k - \tau I] F_n (u) = G_n(u)$. Using Jensen,
$$
\|G_n\|_{\pi}^2 = \int\left( \int g_n(u,y') Q(u,{\rm d} y')  \right)^2 \pi({\rm d}u)
\leq \int_u \int_{y'} g_n(u,y')^2 Q(u,{\rm d} y') \pi({\rm d}u) = \|g_n\|_{\bar{\pi}}^2 \rightarrow 0.
$$
Then, observe that
\begin{align*}
\| \tau F_n - G_n \|_{\pi}^2
&
= \int_{u} \left( \int_{y'} \tau f_n(u,y') Q(u,{\rm d} y') - \int_{y'} g_n(u,y') Q(u,{\rm d} y')  \right)^2 \pi({\rm d}u)
\\
& =  \int_{u} \int_{y}  \left( \int_{y'} \tau f_n(u,y') Q(u,{\rm d} y') - \int_{y'} g_n(u,y') Q(u,{\rm d} y')  \right)^2 \pi({\rm d}u) Q(u,{\rm d} y)
\\
& = \int_{u} \int_{y}  \left( \tau f_n(u,y) - g_n(u,y)   \right)^2 \pi({\rm d}u) Q(u,{\rm d} y)
\end{align*}
where we used~\eqref{weird:equation:0}, and thus
$$ \| \tau F_n - G_n \|_{\pi}^2
=
\| \tau f_n - g_n \|_{\bar{\pi}}^2,
$$
that is
$$ \| \tau F_n - G_n \|_{\pi}
=
\| \tau f_n - g_n \|_{\bar{\pi}}
$$
and finally
$$ \| \tau F_n \|_{\pi}
\geq \| \tau F_n - G_n \|_{\pi} - \| G_n \|_{\pi}
= \| \tau f_n - g_n \|_{\bar{\pi}} - \| G_n \|_{\pi} \xrightarrow[n\rightarrow \infty]{}
\tau.
$$
Besides, using Jensen again, we have
$$
\|F_n\|_{\pi}^2 = \int\left( \int f_n(u,y') Q(u,{\rm d} y')  \right)^2 \pi({\rm d}u)
\leq \int_u \int_{y'} f_n(u,y')^2 Q(u,{\rm d} y') \pi({\rm d}u) = \|f_n\|_{\bar{\pi}}^2 \rightarrow 1.
$$ 
Therefore, thanks to the sandwich theorem, and as $\tau\neq 0$, we can divide both sides by $\tau$ to get $ \| F_n \|_{\pi} \rightarrow 1$. So we found sequences $(F_n)$ and $(G_n)$ such that $[(P^*)^k P^k - \tau I] F_n (u) = G_ n(u)$, $ \| F_n \|_{\pi} \rightarrow 1$ and $ \| G_n \|_{\pi} \rightarrow 0$,
which proves that $\tau \in {\rm sp}((P^*)^k P^k) $.

\noindent \textbf{Step 4:} as $(P^*)^k P^k$ and $(\bar{P}^*)^k \bar{P}^k$ are both Markov kernels, they both admit $\tau=1$ as an eigenvalue, associated to the constant eigenfunction $F(u)=1$ for $(P^*)^k P^k$ and $f(u,y)=1$ for $(\bar{P}^*)^k \bar{P}^k$. In case $(P^*)^k P^k$ admits another eigenfunction $F_1(u)$ associated to $\tau=1$, it is necessarily non-constant, and we easily show that $f_1(u,y)=F_1(u)$ is then an eigenfunction of $(\bar{P}^*)^k \bar{P}^k$ associated to $\tau=1$ (computations similar to step 2), which is non-constant, and thus different from $f$. On the other hand, if  $f_1(u,y)$ is a non-constant eigenfunction $(\bar{P}^*)^k \bar{P}^k$ associated to $\tau=1$, the equation $(\bar{P}^*)^k \bar{P}^k f_1 = f_1 $ implies that $f_1(u,y)$ does not depend on $y$ (the left-hand side does not depend on $y$), which allows to define $F_1(u)=f_1(u,y)$. We then check immediately that $F_1(u)$ is an eigenfunction of $(P^*)^k P^k$ associated to $\tau=1$ and non constant, thus, different from $F(u)$.

\noindent \textbf{Conclusion of the proof:} from step 4, the eigenvalue $1$ has the same multiplicity for $(P^*)^k P^k$ and $(\bar{P}^*)^k \bar{P}^k$. If this multiplicity is larger than $1$, we have $
\gamma\left((\bar{P}^*)^k \bar{P}^k \right) 
= \gamma\left((P^*)^k P^k \right)=0$. Let us now assume that this multiplicity is equal to $1$.
Put ${\rm sp}_1((P^*)^k P^k) = \{\tau\in {\rm sp}((P^*)^k P^k): \tau\neq 1 \} \subset[0,1) $ and ${\rm sp}_1((\bar{P}^*)^k \bar{P}^k) = \{\tau\in {\rm sp}((\bar{P}^*)^k \bar{P}^k): \tau\neq 1 \}\subset[0,1)  $. From step 2, ${\rm sp}_1((P^*)^k P^k) \subset {\rm sp}_1((\bar{P}^*)^k \bar{P}^k)$. From  step 3, ${\rm sp}_1((\bar{P}^*)^k \bar{P}^k) \subset {\rm sp}_1((P^*)^k P^k)\cup\{0\}$. This proves that $\sup {\rm sp}_1((\bar{P}^*)^k \bar{P}^k) = \sup {\rm sp}_1((P^*)^k P^k)$. Thus
$$
\gamma\left((\bar{P}^*)^k \bar{P}^k \right) = 1- \sup {\rm sp}_1((\bar{P}^*)^k \bar{P}^k)
= 1-\sup {\rm sp}_1((P^*)^k P^k)
= \gamma\left((P^*)^k P^k \right).
$$
This concludes the proof.
\end{proof}

The variance of $\ell(f_\theta(U_t),Y_t)$ under the stationary distribution $\bar{\pi}$ will appear in some of the proofs. In order to keep formulas short enough, let us introduce a short notation for this quantity.
\begin{defi}
We put, for any $\theta\in\Theta$,
$$
V_{\ell(\theta)} :=
\int_{(u,y)} \left[ \ell(f_\theta(u),y) - R(\theta)\right]^2 \bar{\pi}({\rm d}(u,y))
.
$$
\end{defi}
\begin{remark}
In this paper, using the assumption $\ell \leq c$, we will always use the upper bound $V_{\ell(\theta)}\leq c^2 $. This bound can be poor in some situations. In the i.i.d. setting, important efforts were made to provide tighter empirical bounds for $V_{\ell(\theta)}$, we mentioned~\cite{pmlr-v26-seldin12a} (and many more papers) in the introduction. In this work, our primary objective was to provide empirical upper bounds on $\gamma_{ps}$, but we mention that providing tight empirical upper bounds on $V_{\ell(\theta)}$ in the Markov case would be extremely useful making our bounds tighter.
\end{remark}

\subsection{Proof of the Result in Section~\ref{section:nonempirical}}

\begin{proof}[Proof of Theorem~\ref{bern1}]
The theorem assumes that $(U_t)_{t}$ is a Markov chain with spectral gap $\gamma_{ps}>0$. Lemma~\ref{lemma:UY} ensures that the pairs $((U_t,Y_t))_{t}$ also form a Markov chain with the same spectral gap.
Using Theorem 3.4 of~\citet{paulin2015} (or, more precisely, the last inequality in the proof of Theorem 3.4), we have for every $s<\frac{\gamma_{ps}}{10}$
\begin{align*}\mathbb{E}_{\mathcal{S}}\Big[\exp \left(s n\left(R(\theta)-r(\theta)\right)\right) \Big] \leq  
\exp \left(\frac{2s^2\left(n+\frac{1}{\gamma_{ps}}\right)V_{\ell(\theta)}}{\gamma_{ps}-10s}\right).
\end{align*}
 By  plugging $s=\frac{\tilde{\lambda}}{n}$ for some $\tilde{\lambda}>0$, it becomes
\begin{align*}\mathbb{E}_{\mathcal{S}}\Big[\exp \left(\tilde{\lambda}\left(R(\theta)-r(\theta)\right)\right) \Big] \leq  
\exp \left(\frac{2\tilde{\lambda}^2 \left(n+\frac{1}{\gamma_{ps}}\right)V_{\ell(\theta)}}{n^2 (\gamma_{ps}-\frac{10\tilde{\lambda}}{n})}\right).
\end{align*}
For the convenience of writing put
$$A(\theta,\gamma_{ps},\tilde{\lambda})= 
\exp \left(\frac{2\tilde{\lambda}^2 \left(n+\frac{1}{\gamma_{ps}}\right)V_{\ell(\theta)}}{n^2 (\gamma_{ps}-\frac{10\tilde{\lambda}}{n})}\right).
$$
By rearranging terms and integrating both sides with respect to $\mu$ we get 
\begin{align*}\mathbb{E}_{\theta\sim\mu}\mathbb{E}_{\mathcal{S}}\Bigg[\exp \left(\tilde{\lambda}\left(R(\theta)-r(\theta)\right)-A(\theta,\gamma_{ps},\tilde{\lambda})\right) \Bigg] \leq 1.
\end{align*}
We can change the order of expectations due to Fubini–Tonelli's theorem, hence
\begin{align*}\mathbb{E}_{\mathcal{S}}\mathbb{E}_{\theta\sim\mu}\Bigg[\exp \left(\tilde{\lambda}\left(R(\theta)-r(\theta)\right)-A(\theta,\gamma_{ps},\tilde{\lambda})\right) \Bigg] \leq  1\;,
\end{align*}
then using Donsker and Varadhan's variational formula (see for example Lemma 2.2 page 28 of~\citet{Alquier_2024}) with $h(\theta)=\tilde{\lambda} R(\theta)- \tilde{\lambda} r(\theta)-A(\theta,\gamma_{ps},\tilde{\lambda})$ we arrive to
\begin{align*}\mathbb{E}_{\mathcal{S}}\Bigg[\exp  \Bigg(\sup_{ \rho\in\mathcal{P}(\Theta)} \biggl\{\mathbb{E}_{\theta\sim\rho}\Big[\tilde{\lambda}\left(R(\theta)-r(\theta)\right)-A(\theta,\gamma_{ps},\tilde{\lambda})\Big]+KL(\rho||\mu)\biggl\} \Bigg) \Bigg] \leq  1.
\end{align*}

Now let us transition to a probability bound, that is for any $s>0$
\begin{align*}&\mathop{\mathbb{P}}_{\mathcal{S}}\Bigg(\sup_{\rho\in\mathcal{P}(\Theta)}  \biggl\{\mathbb{E}_{\theta\sim\rho}\Big[\tilde{\lambda}\left(R(\theta) -r(\theta)\right)-A(\theta,\gamma_{ps},\tilde{\lambda})\Big]+KL(\rho||\mu)\biggl\} > s \Bigg)
 \\&\leq e^{-s}\mathbb{E}_{\mathcal{S}}\Bigg[\exp \Bigg(\sup_{\rho\in\mathcal{P}(\Theta)} \biggl\{\mathbb{E}_{\theta\sim\rho}\Big[\tilde{\lambda}\Big(R(\theta)-r(\theta)\Big)-A(\theta,\gamma_{ps},\tilde{\lambda})\Big]+ KL(\rho||\mu)\biggl\} \Bigg) \Bigg] \leq \exp(-s).
\end{align*}
By denoting $\delta=\exp(-s)$, and rewriting $A(\theta,\gamma_{ps},\tilde{\lambda})$ explicitly, we obtain
\begin{align*}
\mathbb{P}_{\mathcal{S}}\Bigg(\forall\rho\in\mathcal{P}(\Theta),\:\: \mathbb{E}_{\theta\sim\rho}\left[R(\theta)\right]\leq\mathbb{E}_{\theta\sim\rho}\left[r(\theta)\right]+\mathbb{E}_{\theta\sim\rho}\left[\ \frac{2\tilde{\lambda} \left(n+\frac{1}{\gamma_{ps}}\right)V_{\ell(\theta)}}{n^2 (\gamma_{ps}-\frac{10\tilde{\lambda}}{n})}\right]+ \frac{KL(\rho||\mu)+\log \frac{1}{\delta}}{\tilde{\lambda}}  \Bigg)\geq 1-\delta . 
\end{align*}
With a bounded loss $\ell(\cdot,\cdot)\leq c$, we are able to  bound the variance term $V_{\ell(\theta)}=\operatorname{var}_{\pi'}\left[\ell(f_\theta(U_t),Y_t)\right]\leq c^2$, and replace the expectation on right hand side by its upper bound:
\begin{align*}
\mathbb{P}_{\mathcal{S}}\Bigg(\forall\rho\in\mathcal{P}(\Theta),\: \mathbb{E}_{\theta\sim\rho}\left[R(\theta)\right]\leq\mathbb{E}_{\theta\sim\rho}\left[r(\theta)\right]+ \frac{2\tilde{\lambda} c^2\left(n+\frac{1}{\gamma_{ps}}\right)}{n^2 (\gamma_{ps}-\frac{10\tilde{\lambda}}{n})}+ \frac{KL(\rho||\mu)+\log \frac{1}{\delta}}{\tilde{\lambda}}  \Bigg)\geq1-\delta\;.
\end{align*}
Finally, put $\lambda = \tilde{\lambda} /\gamma_{ps} $ to obtain:
\begin{align*}
\mathbb{P}_{\mathcal{S}}\Bigg(\forall\rho\in\mathcal{P}(\Theta),\:\: \mathbb{E}_{\theta\sim\rho}\left[R(\theta)\right]\leq\mathbb{E}_{\theta\sim\rho}\left[r(\theta)\right]+ \frac{2\lambda c^2\left(1  + \frac{1}{n \gamma_{ps}} \right)}{n -10\lambda}+ \frac{KL(\rho||\mu)+\log \frac{1}{\delta}}{\lambda \gamma_{ps}}  \Bigg)\geq1-\delta\;\;.
\end{align*}

\end{proof}

\subsection{Proof of the Results in Section~\ref{section:empirical}}

\begin{proof}[Proof of Proposition ~\ref{prp:gamma_ratio}]

We start with Theorem 5.3 from~\cite{wolfer2022estimatingmixingtimeergodic}. The results are in the format of bounding the sample complexity $n$, and we restate them as concentration results. The last part of the proof of the theorem can be translated to the following concentration form:
    \begin{multline*}
    \mathbb{P}\Big(|\gamma_{ps}-\widehat{\gamma}_{ps}|\geq\varepsilon\Big)
    \\
    \leq \sum_{k=1}^{\ceil*{\frac{2}{\varepsilon}}}2d\exp \Bigg(-C \frac{n\varepsilon^2\pi_*k}{\|P\|_{\pi}\|P^k\|_1}\Bigg)
    +
    \sum_{k=1}^{\ceil*{\frac{2}{\varepsilon}}}\frac{d}{\sqrt{\pi_*}}\exp \Bigg(- \frac{n\varepsilon^2\pi_*\gamma_{ps}}{C}\Bigg)
    +
    \frac{Kd}{\sqrt{2\pi_*}}\exp \Bigg(- \frac{n\varepsilon^2\pi_*\gamma_{ps}}{C K^2}\Bigg)
\end{multline*}
We simplify it to the following confidence interval:
\begin{align*}
\mathbb{P}\Big(|\gamma_{ps}-\widehat{\gamma}_{ps}|\geq\varepsilon\Big)\leq\frac{C_{\text{ps}}d}{\varepsilon\sqrt{\pi_*}}\exp \Bigg(-n\varepsilon^2\pi_*\min\Big\{\gamma_{ps},\min_{1\leq k \leq {\ceil*{\frac{\varepsilon}{2}}}}\Big\{\frac{1}{\|P\|_{\pi}\|P^k\|_1}\Big\}\Big\}\Bigg)
\end{align*}
Or more concisely: 
\begin{align}\label{wolf and kont}
    \mathbb{P}\left(|\gamma_{ps}-\widehat{\gamma}_{ps}|\geq\varepsilon\right)\leq \frac{C_{\text{ps}}d}{\varepsilon\sqrt{\pi_*}} e ^ { -n\varepsilon^2\pi_*\min\left\{\gamma_{ps},\frac{1}{C(P)}\right\} }
\end{align}

Dividing both sides of the inequality by $\gamma_{ps}$ and re-defining $\varepsilon$ as $\varepsilon /\gamma_{ps}$, we obtain the result.
\end{proof}

\begin{proof}[Proof of Proposition~\ref{prp:ar1:1}]
First, observe that $(U_t)$ is a Gaussian process, so that the vector $(U_{t-1},U_t)$ is a Gaussian vector:
$$ (U_{t-1},U_t) \sim \mathcal{N}\left(\left(\begin{array}{c} 0 \\ 0 \end{array}\right),
  \left(\begin{array}{c c} \frac{1}{1-a^2} & \frac{a}{1-a^2} \\ \frac{a}{1-a^2} & \frac{1}{1-a^2} \end{array}\right)\right). $$
Thus, if we put $W_t = U_t\sqrt{1-a^2}$, it defines a Markov chain which is also a Gaussian process with
$$ (W_{t-1},W_t) \sim \mathcal{N}\left(\left(\begin{array}{c} 0 \\ 0 \end{array}\right),
  \left(\begin{array}{c c} 1 & a \\ a & 1 \end{array}\right)\right). $$
Let $p(w,w')$ denote the joint density of $(W_{t-1},W_t)$ and $p(w)$ denote the marginal density of $W_t$. The transition kernel $P_W$ of $W$ can be written through its density: $P_W(w,{\rm d}w') = [p(w,w')/p(w)]{\rm d}w' $. Note that the symmetry $p(w,w')=p(w',w)$ leads to $p(w)P_W(w,{\rm d}w') {\rm d}w = p(w')P_W(w',{\rm d}w) {\rm d}w'$, that is, $P_W^*=P_W$. In other words, from the diagonalization of $P_W$ we will directly obtain the diagonalization of $P_W^* P_W = P_W^2$, from which we will deduce the diagonalization of $P$.

We will now use Mehler's formula, in the form stated by~\cite{kibble1945extension}:
$$ p(w,w') = p(w) p(w') \sum_{n=0}^{\infty} \frac{a^n}{n!} {\rm He}_n(w) {\rm He}_n(w') $$
where $({\rm He}_n)$ are the Hermite polynomials satisfying:
$$ \int_\mathbb{R} {\rm He}_n(x) {\rm He}_m (x) \frac{ {\rm e}^{-\frac{x^2}{2}} }{\sqrt{2\pi}} {\rm d}x  = n! \delta_{nm}. $$
Plugging this in the formula for $P_W$, we obtain:
\begin{align*}
P_W(w,{\rm d}w')
& = \frac{p(w)p(w')\sum_{n=0}^{\infty} \frac{a^n}{n!} {\rm He}_n(w) {\rm He}_n(w')}{p(w)} {\rm d}w'
\\
& = p(w')\sum_{n=0}^{\infty} \frac{a^n}{n!} {\rm He}_n(w) {\rm He}_n(w') {\rm d}w'.
\end{align*}
This gives the diagonalization of $P_W$. The eigenfunctions of $P_W$ are the $({\rm He}_n)$ with corresponding eigenvalues $a^n$:
\begin{align*}
\int  {\rm He}_m(w') P_W(w,{\rm d}w') & = \int  {\rm He}_m(w') p(w')\sum_{n=0}^{\infty} \frac{a^n}{n!} {\rm He}_n(w) {\rm He}_n(w') {\rm d}w'
\\
& =\sum_{n=0}^{\infty} \frac{a^n}{n!} {\rm He}_n(w)  \int p(w') {\rm He}_n(w') {\rm He}_m(w') {\rm d}w'
\\
& =  \sum_{n=0}^{\infty} \frac{a^n}{n!} {\rm He}_n(w)  \int \frac{ {\rm e}^{-\frac{x^2}{2}} }{\sqrt{2\pi}}  {\rm He}_n(w') {\rm He}_m(w') {\rm d}w'
\\
& =\sum_{n=0}^{\infty} \frac{a^n}{n!} {\rm He}_n(w) n! \delta_{nm}
\\
& = a^m {\rm He}_m(w).
\end{align*}
The last thing to note is that we want the diagonalization of $P$, not of $P_W$.
However, observe that $P(u,{\rm d}u') = P_W( u/ \sqrt{1-a^2} , {\rm d}u'/\sqrt{1-a^2}  ) $ which obviously has different eigenfunctions ${\rm He}_n( \cdot / \sqrt{1-a^2}  )$ but the same eigvenvalues $a^n$:
\begin{align*}
\int {\rm He}_m(u'/\sqrt{1-a^2} ) P(u,{\rm d}u')
& = \int  {\rm He}_m(u'/\sqrt{1-a^2} ) P_W( u/ \sqrt{1-a^2} , {\rm d}u'/ \sqrt{1-a^2} ) 
\\
& = \int  {\rm He}_m(w') P_W( u/ \sqrt{1-a^2} , {\rm d}w'  ) \quad\quad (\text{ by c.o.v. } w' =  u'/\sqrt{1-a^2}) 
\\
& = a^m {\rm He}_m( u/ \sqrt{1-a^2} ).
\end{align*}
Thus, the eigenvalues of $P$ are $\{1,a,a^2,a^3,\dots\}$ and the eigenvalues of $P^* P $ are $\{1,a^2,a^4,a^6,\dots\}$, that is, $\gamma_{ps}=1-a^2$.
\end{proof}

\begin{proof}[Proof of Proposition~\ref{prp:ar1:2}]
We follow~\cite{nakakita2024dimension}: if we can prove that the sequence $(U_1,\dots,U_n)$ satisfies a log-Sobolev inequality with constant $K>0$, then, Theorem 1 in~\cite{nakakita2024dimension} gives, with probability at least $1-\exp(-t)$,
$$
\left| \frac{1}{n}\sum_{t=1}^n U_t^2 - \mathbb{E}(U_t^2) \right|
\leq
12\sqrt{K} |\mathbb{E}(U_t^2)| \sqrt{ \frac{9 +4t}{n}}.
$$
However, Section 3.2 in~\cite{nakakita2024dimension} shows that, if the sequence $(\zeta_t)$ satisfies a log-Sobolev inequality with constant $K_\zeta>0$, then the sequence $(U_1,\dots,U_n)$ satisfies a log-Sobolev inequality with constant
$$ K = K_\zeta \frac{1-a^2}{(1-|a|)^2}. $$
Moreover, Theorem 5.4 in~\cite{boucheron2006concentration}) actually states that the sequence $(\zeta_t)$ satisfies a log-Sobolev inequality with constant $K_\zeta=1$.
From the previous proposition,  $\mathbb{E}(U_t^2)=\frac{1}{1-a^2} = \frac{1}{\gamma_{ps}}$. Putting $\delta = \exp(-t)$, we obtain:
\begin{align*}
\left| \frac{1}{n}\sum_{t=1}^n U_t^2 - \frac{1}{\gamma_{ps}}\right|
\leq
12\sqrt{ \frac{1-a^2}{(1-|a|)^2} } \frac{1}{\gamma_{ps}}  \sqrt{ \frac{9 +4\log \frac{1}{\delta}}{n}}.
\end{align*}
Observe that $\gamma_{ps}= 1-a^2 \leq 1$ by definition. Thus,
\begin{align*}
\left| \frac{1}{\widehat{\gamma}_{ps}} - \frac{1}{\gamma_{ps}}\right|
=
\left| \frac{1}{\min\left(1, \left(\frac{1}{n}\sum_{t=1}^n U_t^2\right)^{-1}  \right)} - \frac{1}{\gamma_{ps}}\right|
\leq
\left| \frac{1}{n}\sum_{t=1}^n U_t^2- \frac{1}{\gamma_{ps}}\right|,
\end{align*}
and hence
$$
\left|  \frac{1}{\widehat{\gamma}_{ps}}  - \frac{1}{\gamma_{ps}}\right|
\leq
\frac{12}{(1-|a|)\sqrt{\gamma_{ps}}}  \sqrt{ \frac{9 +4\log \frac{1}{\delta}}{n}}.
$$
Multiply both sides by $\widehat{\gamma}_{ps} \leq 1$ to get
\begin{align*}
\left|  1 - \frac{\widehat{\gamma}_{ps}}{\gamma_{ps}}\right|
& \leq
\frac{12}{(1-\sqrt{1-\gamma_{ps}})\sqrt{\gamma_{ps}}}  \sqrt{ \frac{9 +4\log \frac{1}{\delta}}{n}} \leq \frac{24}{\gamma_{ps}^{3/2}}  \sqrt{ \frac{9 +4\log \frac{1}{\delta}}{n}}.
\end{align*}
\end{proof}

\begin{lem}\label{lem:reversed}
The statement of Theorem~\ref{bern1} remains valid when $r(\theta)$ and $R(\theta)$ are interchanged, that is:
\begin{align*}
\mathbb{P}_{\mathcal{S}}\left(\forall\rho\in\mathcal{P}(\Theta),\:\: \mathbb{E}_{\theta\sim\rho}\left[r(\theta)\right]\leq\mathbb{E}_{\theta\sim\rho}\left[R(\theta)\right]+ \frac{2\lambda c^2\left(1+\frac{1}{n \gamma_{ps}}\right)}{n - 10\lambda}+ \frac{KL(\rho||\mu)+\log \frac{1}{\delta}}{\lambda \gamma_{ps}}  \right)\geq1-\delta.
\end{align*}

\end{lem}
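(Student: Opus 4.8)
The plan is to re-run the proof of Theorem~\ref{bern1} \emph{verbatim}, exchanging the roles of $r(\theta)$ and $R(\theta)$ from the very first line. The only non-formal ingredient in that proof is the moment-generating-function bound drawn from Theorem 3.4 of~\citet{paulin2015}, namely, for $s<\gamma_{ps}/10$,
\[
\mathbb{E}_{\mathcal{S}}\Big[\exp\big(sn(R(\theta)-r(\theta))\big)\Big]
\leq
\exp\!\left(\frac{2s^2\big(n+\tfrac{1}{\gamma_{ps}}\big)V_{\ell(\theta)}}{\gamma_{ps}-10s}\right).
\]
Everything downstream of this inequality (the substitution $s=\tilde\lambda/n$, integration against the prior $\mu$, Fubini--Tonelli, the Donsker--Varadhan variational formula, Markov's inequality, the bound $V_{\ell(\theta)}\leq c^2$, and the change of variables $\lambda=\tilde\lambda\gamma_{ps}$) is purely algebraic and indifferent to whether the deviation inside the exponential is $R-r$ or $r-R$.

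So the entire lemma reduces to establishing the reversed moment bound
\[
\mathbb{E}_{\mathcal{S}}\Big[\exp\big(sn(r(\theta)-R(\theta))\big)\Big]
\leq
\exp\!\left(\frac{2s^2\big(n+\tfrac{1}{\gamma_{ps}}\big)V_{\ell(\theta)}}{\gamma_{ps}-10s}\right),
\qquad s<\tfrac{\gamma_{ps}}{10}.
\]
The key observation is that Paulin's Bernstein inequality is two-sided, and the constants on its right-hand side depend on the summand only through its stationary variance and the width of its range, both of which are invariant under negation. Concretely, I would apply the very same inequality to the bounded function $-\ell(f_\theta(\cdot),\cdot)$, whose range is still an interval of width $c$ and whose stationary variance is again $V_{\ell(\theta)}$. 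Writing $R$ and $r$ for the risk and empirical risk built from $-\ell$, one has $R=-R(\theta)$ and $r=-r(\theta)$, so the deviation for $-\ell$ equals $r(\theta)-R(\theta)$, and Paulin's bound for $-\ell$ is exactly the reversed moment bound displayed above, with identical constants.

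With the reversed moment bound in hand, I would transcribe the remaining steps of the proof of Theorem~\ref{bern1}, now taking $h(\theta)=\tilde\lambda\big(r(\theta)-R(\theta)\big)-A(\theta,\gamma_{ps},\tilde\lambda)$ in the Donsker--Varadhan step, to arrive at the claimed inequality with $\mathbb{E}_{\theta\sim\rho}[r(\theta)]$ on the left and $\mathbb{E}_{\theta\sim\rho}[R(\theta)]$ plus the same two correction terms on the right. The only point requiring care---and the \emph{hard part}, though it is really a one-line verification---is confirming the claimed symmetry of Paulin's bound: one must check that the variance proxy and the oscillation bound entering Theorem 3.4 of~\citet{paulin2015} are genuinely invariant under $\ell\mapsto-\ell$, so that applying the theorem to $-\ell$ costs nothing. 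Once this is granted, no new estimate is needed.
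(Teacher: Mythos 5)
Your proposal is correct and follows essentially the same route as the paper's own proof, which likewise defines $\tilde{\ell}:=-\ell$ and re-runs the argument of Theorem~\ref{bern1}, relying on the fact that the constants in Paulin's moment bound (the stationary variance $V_{\ell(\theta)}$ and the range of width $c$) are invariant under negation of the loss. If anything, your write-up makes this invariance check explicit, whereas the paper summarizes it by saying the derivations ``depend linearly on $\ell$''; no gap in either case.
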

\begin{proof}[Proof of Lemma~\ref{lem:reversed}]
The inequality in Theorem~\ref{bern1} is originally stated with the following form
\[
\mathbb{E}_{\theta \sim \rho}[R(\theta)] \leq \mathbb{E}_{\theta \sim \rho}[r(\theta)] + \frac{2\lambda c^2\left(1+\frac{1}{n \gamma_{ps}}\right)}{n - 10\lambda}+ \frac{KL(\rho||\mu)+\log \frac{1}{\delta}}{\lambda \gamma_{ps}}.
\]
We observe that replacing the loss function \(\ell(\cdot,\cdot)\) by its negative, i.e., defining \(\tilde{\ell} := -\ell\), allows us to reverse the inequality. Since all derivations in the proof of Theorem~\ref{bern1} depend linearly on \(\ell\), the same steps apply with \(\tilde{\ell}\), leading to
\[
\mathbb{E}_{\theta \sim \rho}[-R(\theta)] \leq \mathbb{E}_{\theta \sim \rho}[-r(\theta)] + \frac{2\lambda c^2\left(1+\frac{1}{n \gamma_{ps}}\right)}{n - 10\lambda}+ \frac{KL(\rho||\mu)+\log \frac{1}{\delta}}{\lambda \gamma_{ps}},
\]
which results in the claimed bound:
\[
\mathbb{E}_{\theta \sim \rho}[r(\theta)] \leq \mathbb{E}_{\theta \sim \rho}[R(\theta)] + \frac{2\lambda c^2\left(1+\frac{1}{n \gamma_{ps}}\right)}{n - 10\lambda}+ \frac{KL(\rho||\mu)+\log \frac{1}{\delta}}{\lambda \gamma_{ps}}.
\]
\end{proof}

It is straightforward to notice that all other developments after Theorem 2.1 can be done in analogous manner also for the interchanged version of the bound. 

\begin{proof}[Proof of Theorem~\ref{excess}]
We rewrite Proposition~\ref{prp:gamma_ratio} in two ways:
\begin{align}\label{aux estimation1}
     \gamma_{ps}-\widehat{\gamma}_{ps}&\leq\left|\gamma_{ps}-\widehat{\gamma}_{ps}\right|\leq\left|\frac{\widehat{\gamma}_{ps}-\gamma_{ps}}{\widehat{\gamma}_{ps}}\right|\leq\left|1-\frac{\gamma_{ps}}{\widehat{\gamma}_{ps}}\right|\leq\varepsilon\quad\quad\text{[using that $\widehat{\gamma}_{ps}\leq1$]}\\\frac{1}{\gamma_{ps}}&\leq\frac{1+\varepsilon}{\widehat{\gamma}_{ps}}.
     \label{aux estimation2}
\end{align}
Both of them simultaneously hold with probability $1-\alpha(n,\gamma_{ps},\varepsilon)$.

Recall that
\begin{align*}
B\left(\nu,u\right) = \min_{\lambda\in\Lambda} \Bigg\{\frac{2\lambda c^2\left(1+\frac{u}{n} \right)}{n - 10\lambda} + u \frac{KL(\nu||\mu)+\log \frac{L}{\delta}}{\lambda}\Bigg\} 
\end{align*}
and
$$\hat{\rho}=\argmin_{\rho}\left[\mathbb{E}_{\theta\sim\rho}\big[r(\theta)\big]+ B\left(\rho,\frac{(1+\varepsilon)}{\widehat{\gamma}_{ps}}\right)\right].$$
The function $B$ is a non-decreasing function on a second variable, hence applying \eqref{aux estimation2} to the bound in Theorem~\ref{bern1}, by union bound argument we have with probability $1-\delta-\alpha$ for all $\rho\in\mathcal{P}(\Theta)$
\begin{align*}
    \mathbb{E}_{\theta\sim\rho}[R(\theta)]\leq\mathbb{E}_{\theta\sim\rho}[r(\theta)]+B\left(\rho,\frac{1+\varepsilon}{\widehat{\gamma}_{ps}}\right).
\end{align*}
Particularly
\begin{alignat*}{2}
    \mathbb{E}_{\theta\sim\hat{\rho}}[R(\theta)]&\leq\mathbb{E}_{\theta\sim\hat{\rho}}[r(\theta)]+B\left(\hat{\rho},\frac{1+\varepsilon}{\widehat{\gamma}_{ps}}\right)\hspace{2cm}&&[\text{w.p. $1-\delta-\alpha$}] \\&\leq\mathbb{E}_{\theta\sim\rho}[r(\theta)]+B\left(\rho,\frac{1+\varepsilon}{\widehat{\gamma}_{ps}}\right)&& [\text{for all}\; \rho,\text{by definition of}\;\hat{\rho}]
    \\&\leq\mathbb{E}_{\theta\sim\rho}[R(\theta)]+B\left(\rho,\frac{1}{\gamma_{ps}}\right)+B\left(\rho,\frac{1+\varepsilon}{\widehat{\gamma}_{ps}}\right) \quad\quad\quad &&[\text{w.p. $1-2\delta-\alpha$ by Lemma~\ref{lem:reversed}}]
    \\&\leq\mathbb{E}_{\theta\sim\rho}[R(\theta)]+B\left(\rho,\frac{1}{\gamma_{ps}}\right)+B\left(\rho,\frac{1+\varepsilon}{\gamma_{ps}-\varepsilon}\right)&&[\text{by~\eqref{aux estimation1}}]
    \\&\leq\mathbb{E}_{\theta\sim\rho}[R(\theta)]+2B\left(\rho,\frac{1+\varepsilon}{\gamma_{ps}-\varepsilon}\right).
\end{alignat*}
Thus with probability at least $1-2\delta-\alpha(n,\gamma_{ps},\varepsilon)$, we have 
\begin{align*}
    \mathbb{E}_{\theta\sim\hat{\rho}}[R(\theta)]
    \leq\inf_{\substack{\rho\in\mathcal{P}(\Theta) \\ \lambda\in\Lambda}}\Bigg[\mathbb{E}_{\theta\sim\rho}[R(\theta)]+2B\left(\rho,\frac{1+\varepsilon}{\gamma_{ps}-\varepsilon}\right)  \Bigg].
\end{align*}
\end{proof}
\begin{remark}\label{re: exact B}
Instead of relaxing the last line of the proof, it is also possible to leave it in the exact form by rewriting it as 
\[B\left(\rho,\frac{1}{\gamma_{ps}}\right)+B\left(\rho,\frac{1+\varepsilon}{\gamma_{ps}-\varepsilon}\right)=B\left(\rho,\frac{1}{\gamma_{ps}}+\frac{1+\varepsilon}{\gamma_{ps}-\varepsilon}\right)= B\left(\rho,\frac{2\gamma_{ps}+\varepsilon(1-\gamma_{ps})}{\gamma_{ps}(\gamma_{ps}-\varepsilon)}\right). \]
Thus it will result in the slightly tighter, but maybe less readable:
\begin{align*}
    \mathbb{E}_{\theta\sim\hat{\rho}}[R(\theta)]
    \leq\inf_{\substack{\forall\rho\in\mathcal{P} \\ \lambda\in\Lambda}}\Bigg[\mathbb{E}_{\theta\sim\rho}[R(\theta)]+2B\left(\rho,\frac{2\gamma_{ps}+\varepsilon(1-\gamma_{ps})}{\gamma_{ps}(\gamma_{ps}-\varepsilon)}\right)  \Bigg].
\end{align*}
\end{remark}

\subsection{Proof of the Results in Section~\ref{section:application}}

\begin{proof}[Proof of Theorem~\ref{thm: finite_erm theoretical}]
We start by an application of Theorem~\ref{bern1}, with probability at least $1-\delta$ on the sample $\mathcal{S}$,
$$
\forall \rho\in\mathcal{P}(\Theta)\text{, }
\mathbb{E}_{\theta\sim\rho}\big[R(\theta)\big]\leq\mathbb{E}_{\theta\sim\rho}\big[r(\theta)\big]+ B(\rho,\gamma_{ps},\lambda) .$$
In particular, this holds for any $\rho$ in the set of Dirac masses $\{\delta_{\theta} \:|\: \theta\in\Theta\}$. Thus, we have, with probability at least $1-\delta$,
$$\forall \theta\in\Theta\text{, }\mathbb{E}_{\theta\sim\delta_\theta}\big[R(\theta)\big]\leq\mathbb{E}_{\theta\sim\delta_\theta}\big[r(\theta)\big] + B(\delta_{\theta},\gamma_{ps},\lambda)$$ 
Having  $\mathbb{E}_{\theta\sim\delta_\theta}\big[R(\theta)\big]=R(\theta)$, $\mathbb{E}_{\theta\sim\delta_\theta}\big[r(\theta)\big]=r(\theta)$, and
$$KL(\rho||\mu)=\sum_{\vartheta}\log\left(\frac{\delta_\theta(\vartheta)}{\mu(\vartheta)}\right)\delta_{\theta}(\vartheta)=\log\frac{1}{\mu(\theta)}$$
we get that for any $\theta\in\Theta$
$$R(\theta)\leq r(\theta) +             \frac{2\lambda c^2\left(1+\frac{1}{n \gamma_{ps}}\right)}{n - 10\lambda}+ \frac{\log \frac{1}{\mu(\theta)\delta}}{\lambda \gamma_{ps} }$$ 
with probability $1-\delta$.\\
In particular, by putting $\theta=\hat{\theta}_{\text{ERM}}$ we obtain
$$R(\hat{\theta}_{\text{ERM}})\leq \min_{\theta}r(\theta) +             \frac{2\lambda c^2\left(1+\frac{1}{n \gamma_{ps}}\right)}{n - 10\lambda}+ \frac{\log \frac{1}{\mu(\theta)\delta}}{\lambda \gamma_{ps} } $$
With the assumption that $\mu$ is uniform 
\begin{align}R(\hat{\theta}_{\text{ERM}})\leq \min_{\theta}r(\theta) +          \frac{2\lambda c^2\left(1+\frac{1}{n \gamma_{ps}}\right)}{n - 10\lambda}+ \frac{\log \frac{M}{\delta}}{\lambda \gamma_{ps} }.
\end{align}
Assume that $10\lambda< n\varepsilon/(1+\varepsilon)$, then:
\begin{align}
\label{with 10lambda and n} R(\hat{\theta}_{\text{ERM}})\leq \min_{\theta}r(\theta) +          \frac{2(1+\varepsilon)\lambda c^2\left(1+\frac{1}{n \gamma_{ps}}\right)}{n}+ \frac{\log \frac{M}{\delta}}{\lambda \gamma_{ps} }.
\end{align}
Then we minimize the right-hand side by choosing the optimal $\lambda$. It is achieved for 
$$\lambda_{op}=\sqrt{\frac{  n\log \frac{M}{\delta}}{2(1+\varepsilon) c^2\gamma_{ps}\left(1+\frac{1}{\gamma_{ps}n}\right)}}$$
in which case the bound settles into its final form:
\begin{align*}R(\hat{\theta}_{\text{ERM}})\leq \min_{\theta}r(\theta) +           \sqrt{\frac{  8(1+\varepsilon) c^2 \log \frac{M}{\delta} }{ \gamma_{ps}n }\left(1+\frac{1}{\gamma_{ps}n}\right)}.
\end{align*}
Note that our choice of $\lambda$ is only compatible with $ 10\lambda< n\varepsilon/(1+\varepsilon)$ when:
$$
n >   \frac{ 50(1+\varepsilon) \gamma_{ps}\log \frac{M}{\delta}}{\varepsilon^2 c^2\gamma_{ps}\left(1+\frac{1}{\gamma_{ps}n}\right)}.
$$
\end{proof}

\section{ADDITIONAL DETAILS ON THE EXPERIMENTS}

We described in the paper the construction of the transition matrices in our experiments. We remind that $
R_t := t P + (1-t) Q
$
where $P$ satisfies $\gamma_{ps}(P)\simeq 0$, and $Q$ is such that $\gamma_{ps}(Q) = 1$. We actually took $Q=1^T\cdot\pi$ where $1^T=(1,\dots,1)$ and $\pi$ is the invariant distribution of $P$. That is, a Markov chain whose transition kernel $Q$ is simply an i.i.d. sequence from $\pi$. The fact that both $P$ and $Q$ have the same invariant distribution $\pi$ ensures that the invariant distribution of $R_t$ is also
$\pi$, indeed:
$$
\pi R_t = \pi [ t P + (1-t)Q]  = t \pi P + (1-t) \pi Q = t \pi + (1-t)\pi = \pi.
$$
All this was detailed in the main body of the paper, but it remains to give the definition of $P$.

Our choice for $d=4$ is
\[
P = 
\begin{bmatrix}
\frac{1}{4} & \frac{1}{4} & \frac{1}{4} & \frac{1}{4} \\[2mm]
\frac{1}{4} & \frac{1}{4} & \frac{1}{4} & \frac{1}{4} \\[1mm]
p & 0 & 1-p & 0 \\[1mm]
0 & q & 0 & 1-q
\end{bmatrix}.
\]
We have fixed parameters $p=0.01$ and $q=0.001$. The heuristic reason behind such parameters is that, when the chain reaches state $3$ and $4$, it will stay stuck in this state for a very long time, making the convergence to the stationary distribution very slow. And indeed, we observed that $\gamma_{ps}$ is very close to $0$ for this chain.

For larger $d$, we generalized the construction in the following way:
\[
P =
\begin{bmatrix}
1/d & 1/d & 1/d & 1/d & \cdots & 1/d \\
1/d & 1/d & 1/d & 1/d & \cdots & 1/d \\
p            & 0            & 1-p          & 0            & \cdots & 0            \\
0            & q            & 0            & 1-q          & \cdots & 0            \\
1/d & 1/d & 1/d & 1/d & \cdots & 1/d \\
\vdots       & \vdots       & \vdots       & \vdots       & \ddots & \vdots       \\
1/d & 1/d & 1/d & 1/d & \cdots & 1/d 
\end{bmatrix}.
\]

In the main body of the paper, we assessed the estimator $\widehat{\gamma}_{ps}$ when $d=20$ in Figure~\ref{figure: gamma_20}, and the accuracy of the PAC-Bayes bound, also with $d=20$, in Figure~\ref{figure: pac-bayes}.

We now provide similar results when $d=4$ (Figures~\ref{figure: gamma_4} and~\ref{figure:bound_4} respectively), $d=10$ (Figures~\ref{figure: gamma_10} and~\ref{figure:bound_10}),  $d=50$ (Figures~\ref{figure: gamma_50} and~\ref{figure:bound_50}) and finally $d=100$ (Figures~\ref{figure: gamma_100} and~\ref{figure:bound_100}). The results remain essentially unchanged, note however that the estimation of $\gamma_{ps}$ becomes more challenging when $d$ is very large.

\begin{figure}[h]
    \centering
    \begin{minipage}{0.50\linewidth}
        \centering
        \includegraphics[width=1\linewidth]{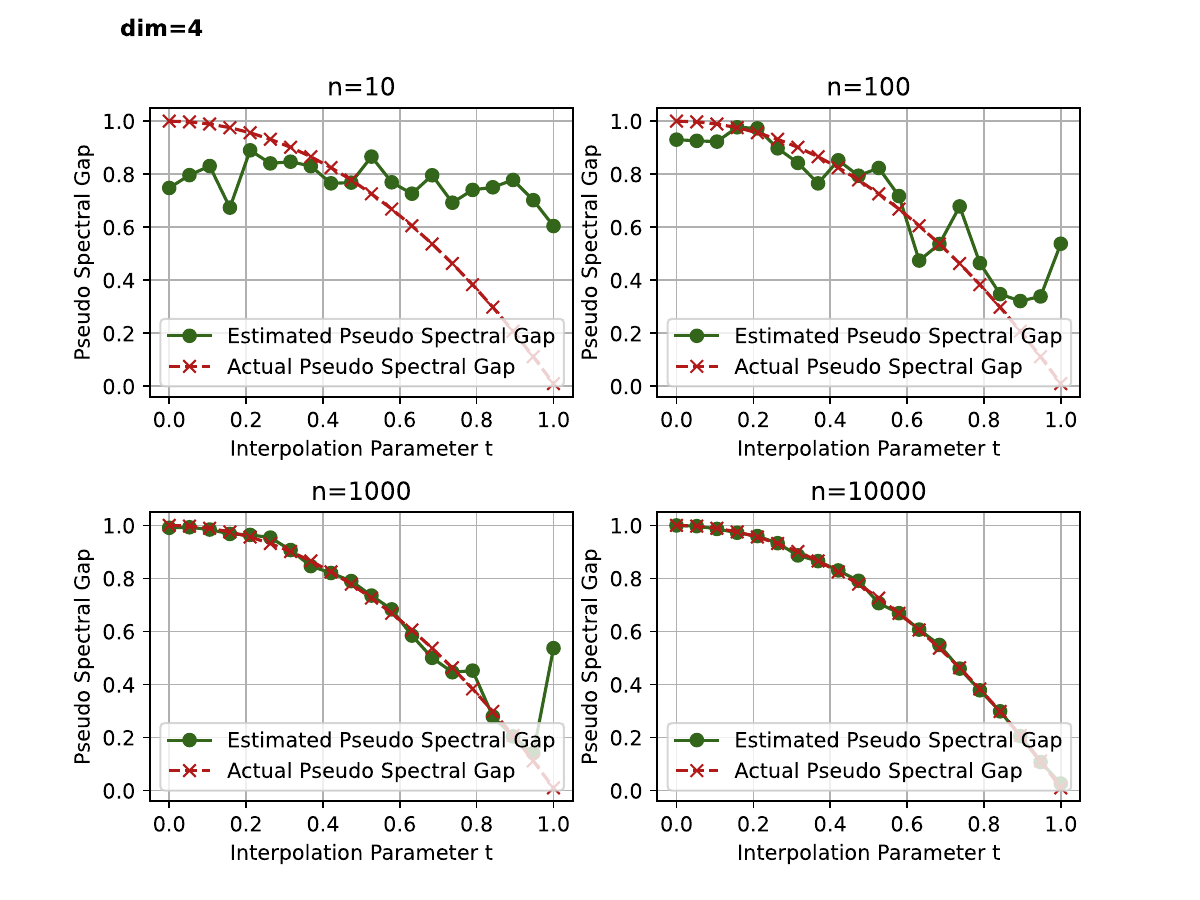}
    \caption{Estimation of $\gamma_{\text{\text{ps}}}$ when $d=4$.}
    \label{figure: gamma_4}
    \end{minipage}%
    \hfill
    \begin{minipage}{0.50\linewidth}
        \centering
        \includegraphics[width=1\linewidth]{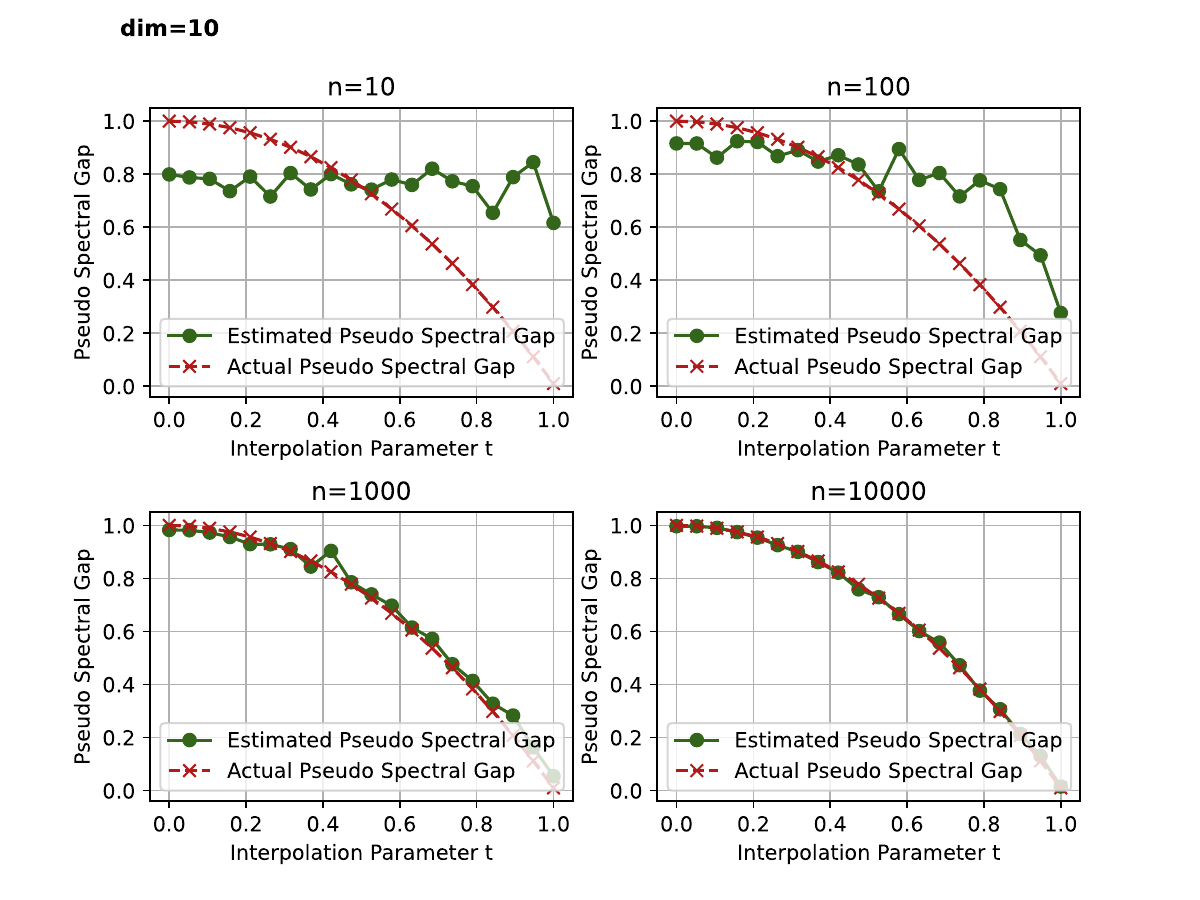}
    \caption{Estimation of $\gamma_{\text{\text{ps}}}$ when $d=10$.}
    \label{figure: gamma_10}
    \end{minipage}%
\end{figure}

\begin{figure}[h]
    \centering
    \begin{minipage}{0.50\linewidth}
        \centering
        \includegraphics[width=0.95\linewidth]{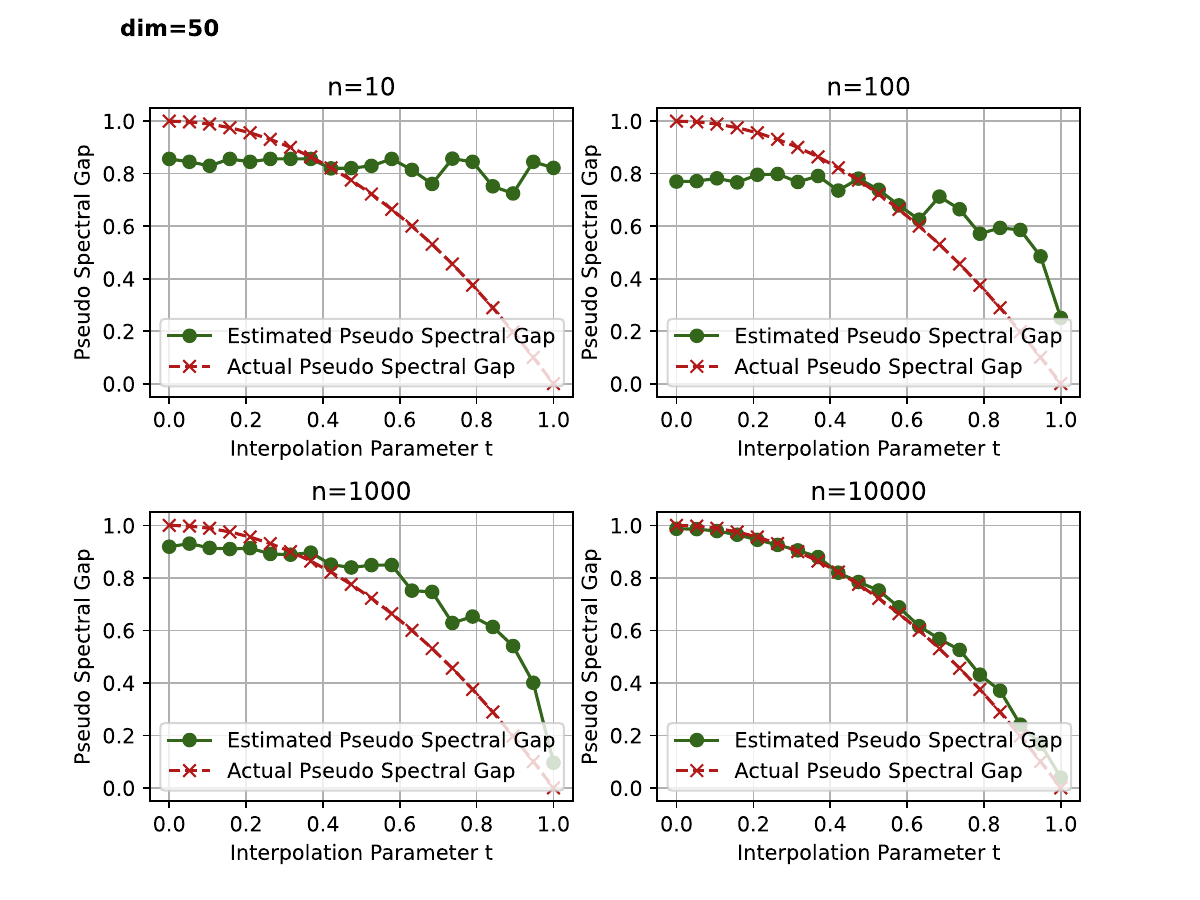}
    \caption{Estimation of $\gamma_{\text{\text{ps}}}$ when $d=50$.}
    \label{figure: gamma_50}
    \end{minipage}%
    \hfill
    \begin{minipage}{0.50\linewidth}
        \centering
        \includegraphics[width=0.95\linewidth]{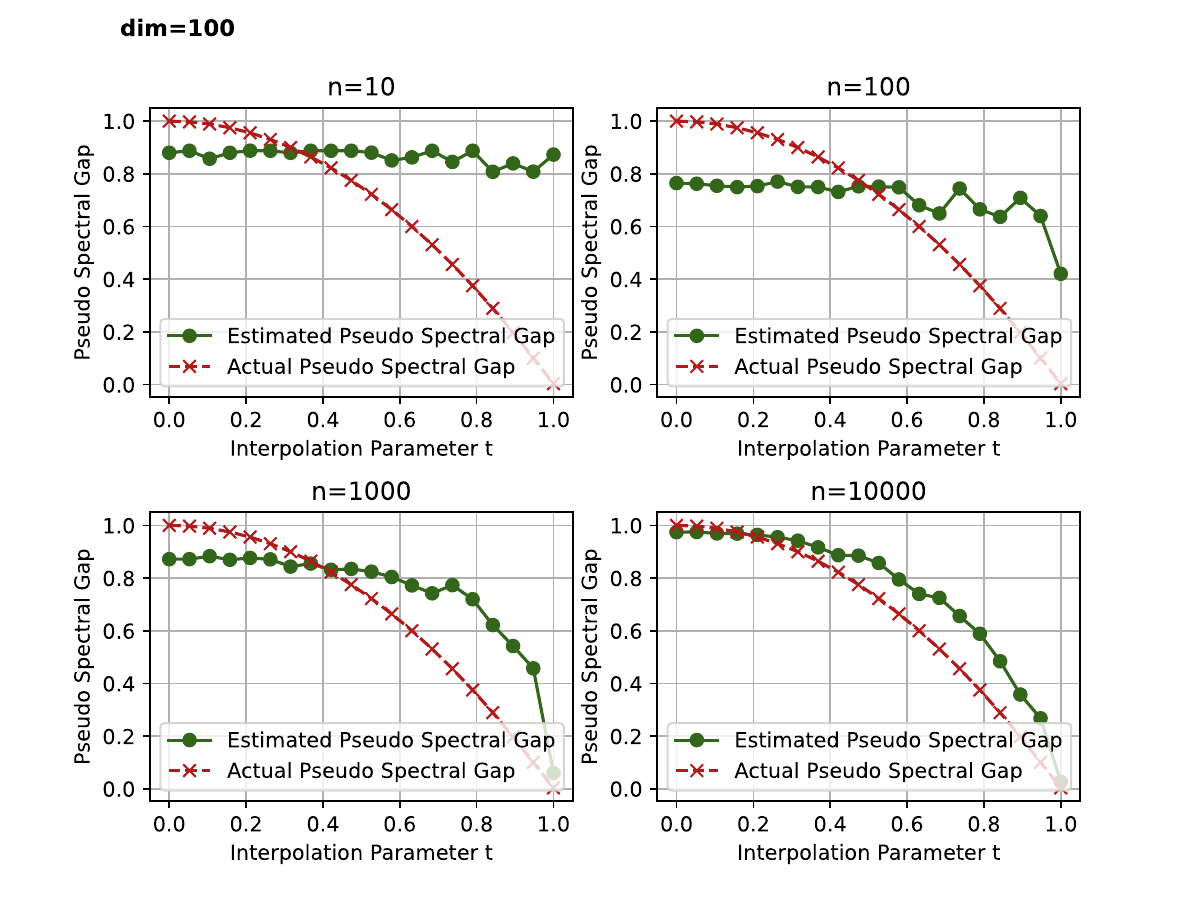}
    \caption{Estimation of $\gamma_{\text{\text{ps}}}$ when $d=100$.}
    \label{figure: gamma_100}
    \end{minipage}%
\end{figure}

\newpage
\vspace*{50px}

\begin{figure}[ht]
    \centering
    \begin{minipage}{0.50\linewidth}
        \centering
        \includegraphics[width=\linewidth]{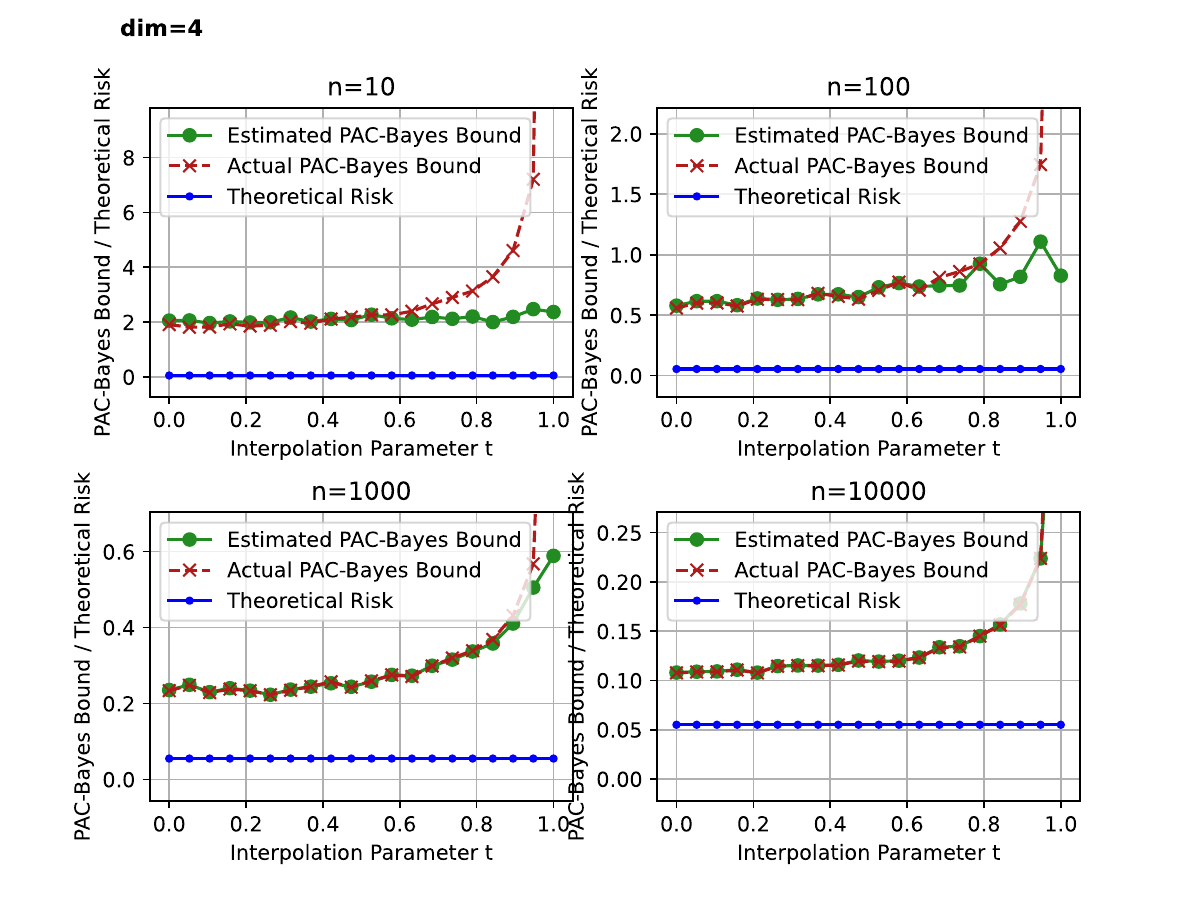}
        \caption{PAC-Bayes bounds \\for $R(\hat{\theta}_{\text{ERM}})$ when $d=4$.}
        \label{figure:bound_4}
    \end{minipage}%
    \hfill
    \begin{minipage}{0.50\linewidth}
        \centering
        \includegraphics[width=\linewidth]{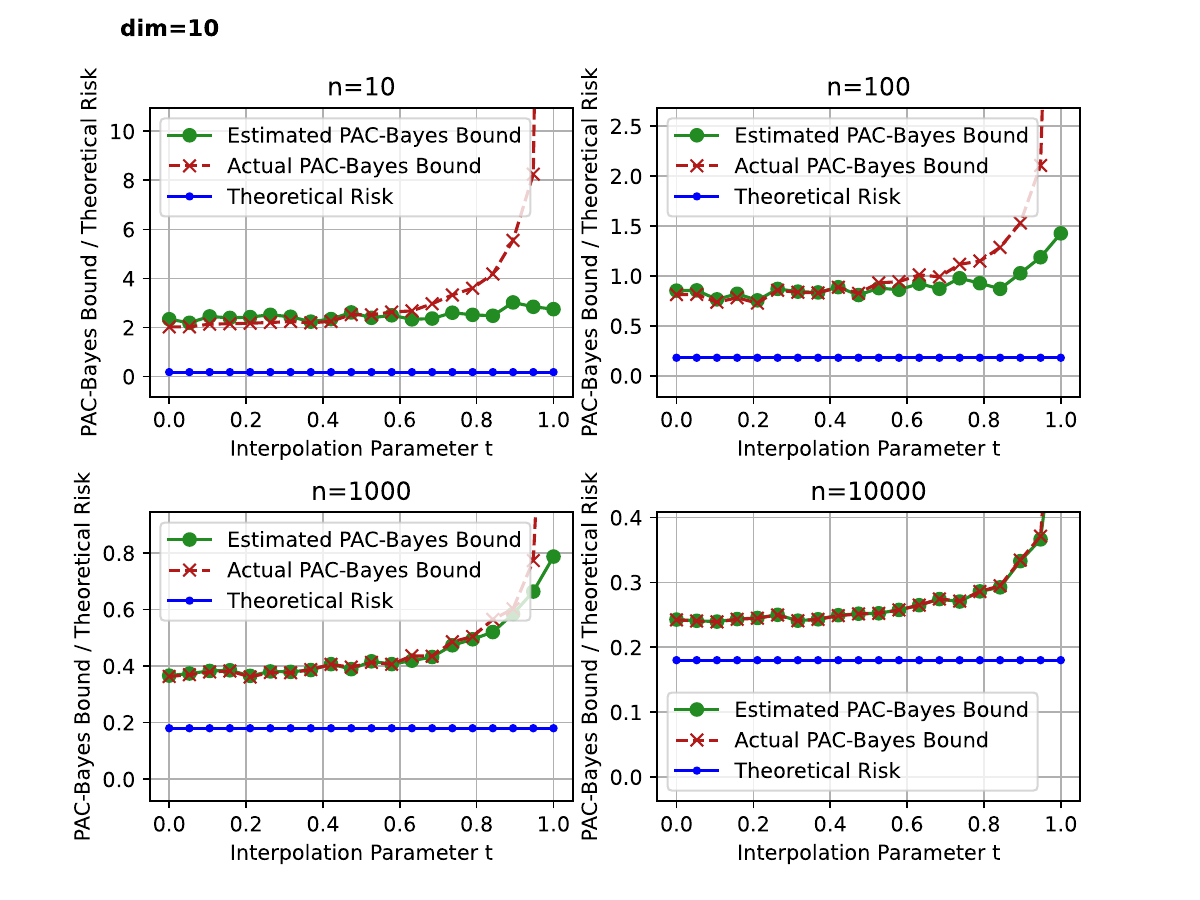}
        \caption{PAC-Bayes bounds \\for $R(\hat{\theta}_{\text{ERM}})$ when $d=10$.}
        \label{figure:bound_10}
    \end{minipage}
\end{figure}

\begin{figure}[ht]
    \centering
    \begin{minipage}{0.50\linewidth}
        \centering
        \includegraphics[width=\linewidth]{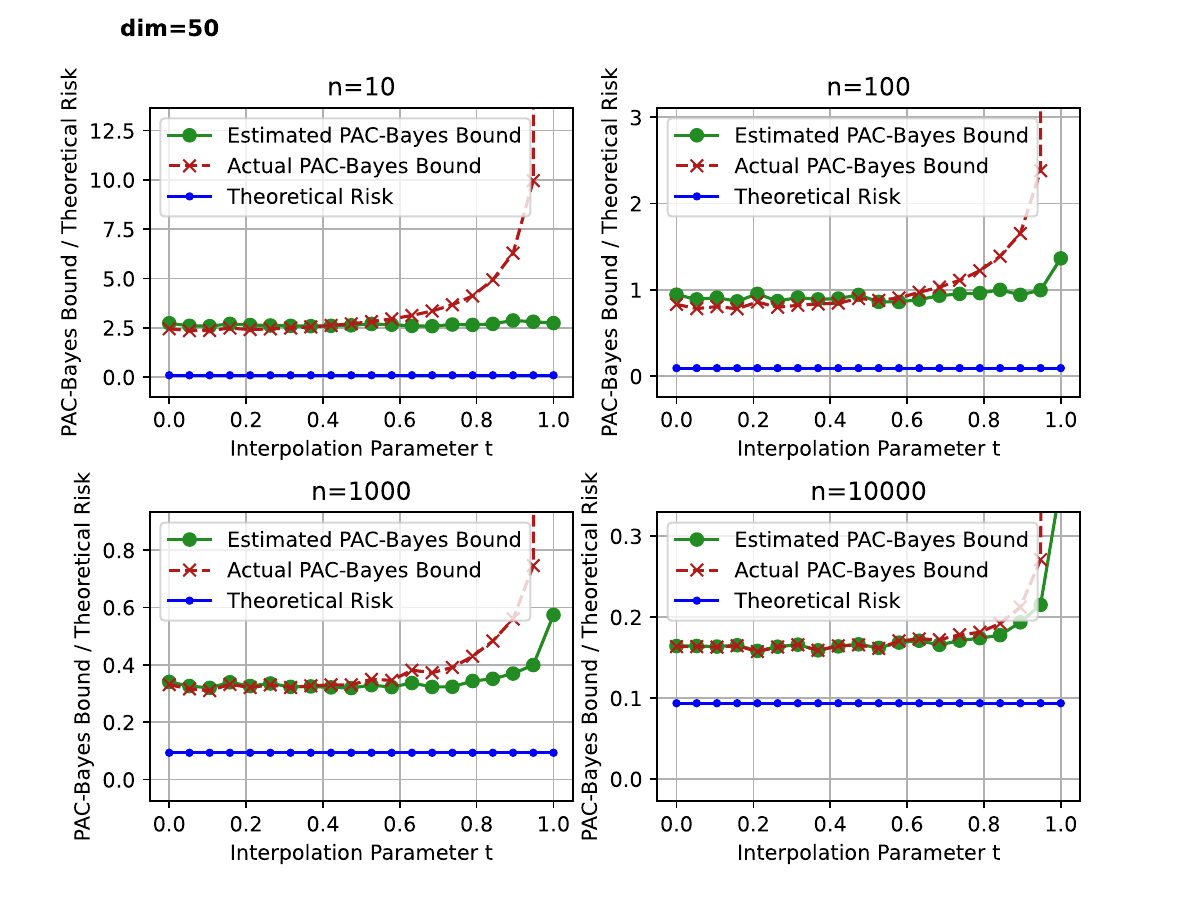}
        \caption{PAC-Bayes bounds \\for $R(\hat{\theta}_{\text{ERM}})$ when $d=50$.}
        \label{figure:bound_50}
    \end{minipage}%
    \hfill
    \begin{minipage}{0.50\linewidth}
        \centering
        \includegraphics[width=\linewidth]{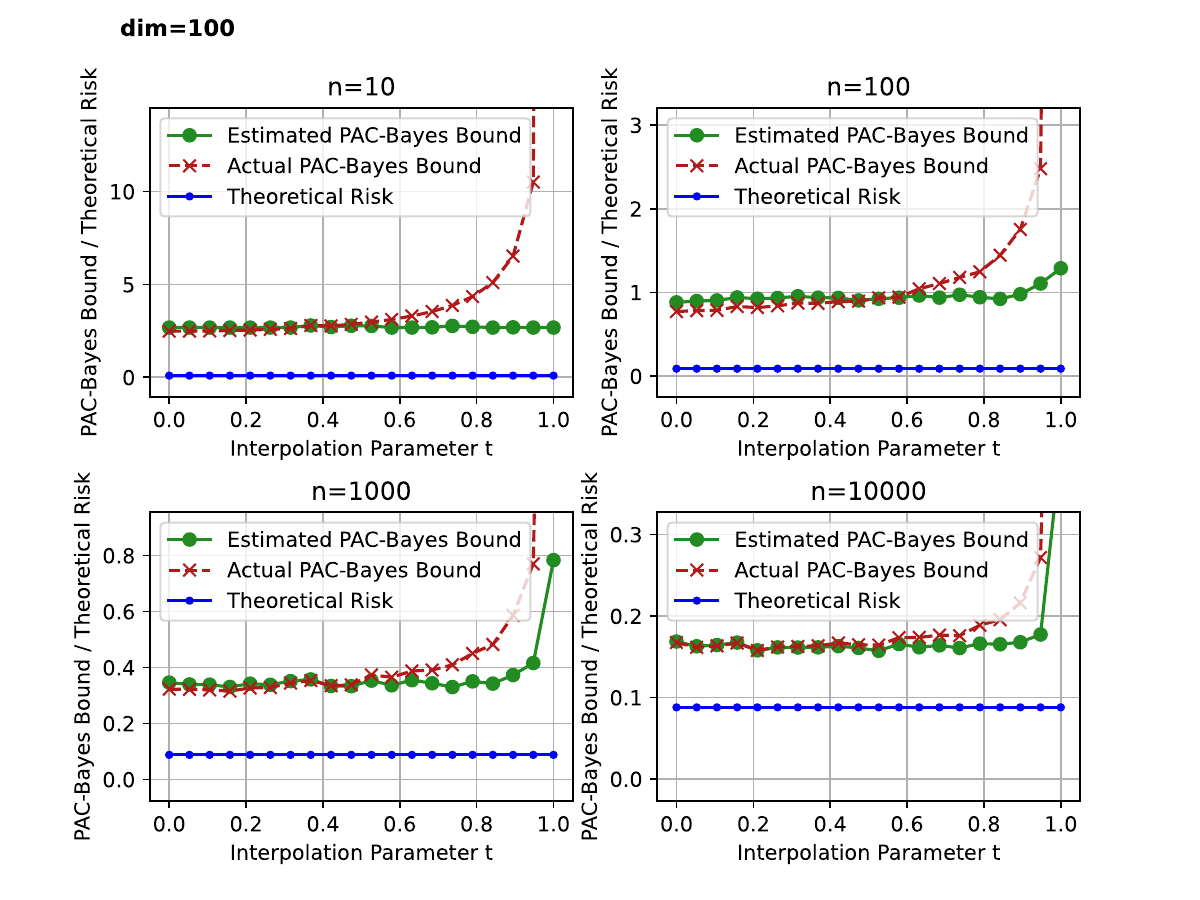}
        \caption{PAC-Bayes bounds for $R(\hat{\theta}_{\text{ERM}})$\\ $d=100$.}
        \label{figure:bound_100}
    \end{minipage}
\end{figure}

\section{PAC-BAYES FOR TIME SERIES WITH $\varphi$-MIXING}

In this final section, we discuss another PAC-Bayes bounds for time series due to~\citet{PierreAlquier2013}. This bound holds under an assumption on the $\varphi$-mixing coefficients of the series, that do not require the series to be a Markov chain. This bound is not empirical. We show that, under the additional assumption that the series is actually a Markov chain, we can upper-bound the $\varphi$-mixing coefficients by a function of the pseudo-spectral gap, and thus make the bound empirical.

\subsection{Hoeffding-Type PAC-Bayes Bound for Time Series}

Our starting point follows the framework introduced by \citet{rio2000}. Consider a sequence of metric spaces $(E_t, d_t)$ for $t = 1, \ldots, n$, each with diameter denoted by $\Delta_t$. Let $E^n := E_1 \times E_2 \times \cdots \times E_n$. A real-valued function $f : E^n \to \mathbb{R}$ is said to be $M$-Lipschitz if for all $(x_1, \ldots, x_n), (y_1, \ldots, y_n) \in E^n$, we have
\[
\big|f(x_1, \ldots, x_n) - f(y_1, \ldots, y_n)\big| \leq M \sum_{t=1}^{n} d_t(x_t, y_t).
\]
Now, let $(X_1, \ldots, X_n)$ be a sequence of random variables, and for each $t \in \{1, \ldots, n\}$, let $\mathcal{F}_t$ denote the $\sigma$-algebra generated by $X_1, \ldots, X_t$. The idea is to control how much the future (i.e., $X_{t+1}, \ldots, X_n$) can deviate from independence given the past $\mathcal{F}_t$. For each time step $t \in \{1,\ldots,n-1\}$ and any 1-Lipschitz function $g : E_{t+1} \times \cdots \times E_n \to \mathbb{R}$ a deviation measure is defined as
\begin{align}
 \label{Gamma measure}
\Gamma_t(g) := \left\| \mathbb{E}\left[g(X_{t+1}, \ldots, X_n) \mid \mathcal{F}_t\right] - \mathbb{E}\left[g(X_{t+1}, \ldots, X_n)\right] \right\|_\infty,
\end{align}
and a sequence $(X_1, \ldots, X_n)$ is said to satisfy a $\gamma$-mixing condition if there exists a family of non-negative coefficients $(\gamma_{t, m})_{1 \leq t < m \leq n}$ such that, for all such $g$, we have:
\begin{align}
    \label{assumption Gamma}
\Gamma_t(g) \leq \sum_{m = t+1}^n \gamma_{t, m}.
\end{align}
Under Assumption~\eqref{assumption Gamma}, \citet{rio2000} provides a concentration inequality for 1-Lipschitz functions of dependent sequences. Specifically, if $(X_1, \ldots, X_n)$ satisfies the $\gamma$-mixing condition described above and the underlying metric spaces have diameter $\Delta_t$, then the following holds for any positive $s$:
\begin{align}\label{concentration from rio}
\mathbb{E}\left[\exp\left(s f(X_1, \ldots, X_n)\right)\right]
\leq \exp\left\{s \mathbb{E}\left[f(X_1, \ldots, X_n)\right] 
+ \frac{s^2}{8} \sum_{t=1}^n \left( \Delta_t + 2 \sum_{m > t} \gamma_{t, m} \right)^2\right\}.
\end{align}
This result generalizes classical Hoeffding-type inequalities to the dependent setting and is a foundation for many modern generalization bounds involving weakly dependent data. 

Returning to the label prediction again, suppose the variables are object$\times$label pairs. So we are considering $X_i=(U_i,Y_i)$ with the same definitions of labels $Y_i$, predictors $f_\theta$, risk functions $r(\theta)$, $R(\theta)$, and prior distribution $\mu$ and same conditions as we had in the beginning of the paper, with the only difference that here $U_i$ are drawn from arbitrary distributions. Recall that $c$ is the uniform upper bound on the loss function $\ell$, which also means that $\ell$ is $c-$Lipschitz. With this in mind, we have freedom to set our own trivial metrics as follows: for all $t=1,\ldots,n$ we define $d_t\big((u,y),(u',y')\big):=c\mathbf{1}_{(u,y)\neq(u',y')}$. This means that for all $t=1,\ldots,n$ diameter $\Delta_t = c$, and as a consequence function $n\cdot r(\theta)$ forces to be 1-Lipschitz in the space $E_{1}\times\ldots\times E_n$.
\begin{align*}\big|r_{\theta}\left((U_1,Y_1),(U_2,Y_2),\ldots,(U_n,Y_n)\right)-r_{\theta}\left((U'_1,Y'_1),(U'_2,Y'_2),\ldots,(U'_n,Y'_n)\right)\big|\leq\frac{1}{n}(\Delta_1+\ldots+\Delta_n)=c.
\end{align*}
Moreover 1-Lipschitzianity of $n\cdot r(\theta)$ holds for any subspace of the form 
$E_{T+1}\times\ldots\times E_n$, since
\begin{multline*}
\big|r_{\theta}\left((U_1,Y_1),(U_2,Y_2),\ldots,(U_n,Y_n)\right)-r_{\theta}\left((U_1,Y_1),\ldots,(U'_{T+1},Y'_{T+1}),\ldots,(U'_n,Y'_n)\right)\big|
\\
\leq\frac{1}{n}(\Delta_{T+1}+\ldots+\Delta_n)=\frac{n-T}{n}c.
\end{multline*}


Let us denote $C^2=\frac{1}{n}\sum_{t=1}^{n}\big(\Delta_{t} + 2\sum_{m > t} \gamma_{t, m}\big)^2$, and the inequality \eqref{concentration from rio} will take a familiar form: 
\begin{align*}\mathbb{E}\left[\exp \left(s R(\theta)-r(\theta) \right)\right] \leq  e^{\frac{s^2}{8} nC^2}.
\end{align*}
Thus, by following analogous steps of the proof of Theorem~\ref{bern1}.

\begin{thm}\label{time series pac-bayes} Let $U_1,U_2,\ldots,U_n$ be random variables, then for any constants $\lambda>0$, $\delta\in(0,1)$, and prior $\mu\in\mathcal{P}(\Theta)$,   
\begin{align*}
\mathbb{P}_{\mathcal{S}}\Bigg(\exists\rho\in\mathcal{P}(\Theta),\:\: \mathbb{E}_{\theta\sim\rho}\left[R(\theta)\right]\leq\mathbb{E}_{\theta\sim\rho}\left[r(\theta)\right]+\frac{\lambda C^2}{8n}+ \frac{KL(\rho||\mu)+\log \frac{1}{\delta}}{\lambda}  \Bigg) \geq 1-\delta
\end{align*}
where $C^2=\frac{1}{n}\sum_{t=1}^{n}\big(\Delta_{t} + 2\sum_{m > t} \gamma_{t, m}\big)^2$. 
\end{thm}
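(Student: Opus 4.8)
The plan is to follow the proof of Theorem~\ref{bern1} essentially verbatim, with the Bernstein-type moment-generating bound of~\citet{paulin2015} replaced by the Hoeffding-type bound $\lambda^2 C^2/(8n)$ supplied by the inequality~\eqref{concentrationfromrio} of~\citet{rio2000}. The only genuinely new step is to turn~\eqref{concentrationfromrio} into a clean per-predictor moment-generating estimate; the integration against the prior, the Donsker--Varadhan step, and the final Chernoff step are then identical to Theorem~\ref{bern1}.

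First I would fix $\theta$ and record the MGF bound. Since the discussion above the theorem already establishes that $n\,r(\theta)$ is $1$-Lipschitz on $E_1\times\cdots\times E_n$ for the chosen metrics $d_t$, I would apply~\eqref{concentrationfromrio} to the $1$-Lipschitz function $f=-n\,r(\theta)$, whose expectation is $-n\,R(\theta)$. Taking the Rio parameter equal to $\lambda/n$ and using $\sum_{t=1}^{n}\big(\Delta_t+2\sum_{m>t}\gamma_{t,m}\big)^2=nC^2$, this gives
\begin{align*}
\mathbb{E}_{\mathcal{S}}\big[\exp\!\big(\lambda\,(R(\theta)-r(\theta))\big)\big]\leq \exp\!\Big(\tfrac{\lambda^2 C^2}{8n}\Big).
\end{align*}
The only point that needs care is the sign: one applies the inequality to $-n\,r(\theta)$, so that the linear term $s\,\mathbb{E}[f]$ produces $R(\theta)-r(\theta)$ in the orientation needed to upper bound the risk; the quadratic proxy is unaffected by this choice.

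Next I would rearrange the above into $\mathbb{E}_{\mathcal{S}}\big[\exp(\lambda(R(\theta)-r(\theta))-\lambda^2 C^2/(8n))\big]\leq 1$, integrate over $\theta\sim\mu$, and use Fubini--Tonelli (the integrand is nonnegative) to exchange $\mathbb{E}_{\mathcal{S}}$ and $\mathbb{E}_{\theta\sim\mu}$. Applying the Donsker--Varadhan variational formula (Lemma 2.2 of~\citet{Alquier_2024}) with $h(\theta)=\lambda(R(\theta)-r(\theta))-\lambda^2 C^2/(8n)$ converts the inner integral into a supremum over $\rho\in\mathcal{P}(\Theta)$ and yields
\begin{align*}
\mathbb{E}_{\mathcal{S}}\Big[\exp\!\Big(\sup_{\rho}\big\{\mathbb{E}_{\theta\sim\rho}[\lambda(R(\theta)-r(\theta))]-\tfrac{\lambda^2 C^2}{8n}-KL(\rho\|\mu)\big\}\Big)\Big]\leq 1.
\end{align*}

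Finally I would pass to a probability statement exactly as in Theorem~\ref{bern1}: by Markov's inequality the event that the supremum exceeds $\log(1/\delta)$ has probability at most $\delta$, so on the complementary event, for every $\rho$, $\mathbb{E}_{\theta\sim\rho}[\lambda(R-r)]\leq KL(\rho\|\mu)+\log(1/\delta)+\lambda^2 C^2/(8n)$. Dividing by $\lambda$ gives the stated bound uniformly in $\rho$ (which in particular implies the displayed existential form). I do not anticipate a serious obstacle: the substantive ingredient is the external estimate~\eqref{concentrationfromrio}, and all remaining steps are the standard Catoni route; the only items worth double-checking are the $\lambda/n$ rescaling of the Rio parameter and the Lipschitz normalisation, both routine given the setup preceding the theorem.
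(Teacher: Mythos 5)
Your proposal is correct and is essentially identical to the paper's own argument: the paper likewise derives the moment-generating bound $\mathbb{E}_{\mathcal{S}}[\exp(\lambda(R(\theta)-r(\theta)))]\leq \exp(\lambda^2C^2/(8n))$ from Rio's inequality~\eqref{concentrationfromrio} using the $1$-Lipschitz property of $n\,r(\theta)$, and then invokes the same Catoni steps (prior integration, Fubini--Tonelli, Donsker--Varadhan, Markov) as in Theorem~\ref{bern1}. If anything, your write-up is more careful than the paper's, which leaves the sign convention and the $s=\lambda/n$ rescaling implicit, and your observation that the uniform-in-$\rho$ conclusion implies the (presumably misstated) existential form in the theorem is a correct reading.
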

Note that this result is essentially Theorem 2 of~\citet{PierreAlquier2013}.

\subsection{Second PAC-Bayes Bound for Markov Chains and $\varphi-$mixing} 

As an application to Theorem \eqref{time series pac-bayes}, we investigate the case of stationary, ergodic, $d$-state  Markov chains, with pseudo-spectral gap $\gamma_{ps}>0$. In this case, we are able to get an upper bound on $C^2(\theta)$, and obtain a PAC-Bayes bound that depends on the $\gamma_{ps}$, and thus that can be made empirical.

Before we get to that, let us start with the definition of $\varphi-$mixing coefficients.
\begin{defi}\label{def phi-mixing}
Suppose $(\Omega, \mathcal{E}, \mathbb{P})$ is a probability space, $\mathcal{A} \subset \mathcal{E}$ and $\mathcal{B} \subset \mathcal{E}$ are $\sigma$-fields, then 
$$
\varphi(\mathcal{A}, \mathcal{B}):=\sup _{\substack{A \in \mathcal{A} \\ B \in \mathcal{B}}}\left|\mathbb{P}(B \mid A)-\mathbb{P}(B)\right|.
$$
\end{defi}
We refer the reader to~\citet{doukhan1995mixing} for more details on this definition.
In the time series setting $(U_1,U_2,U_3,...)$ coefficients $\varphi_k$ are defined as
\begin{align}\label{phi defi}
\varphi(k)=\sup_{t\in\mathbb{N}}\varphi\big( \sigma(U_1,\ldots,U_t),\sigma(U_{t+k},U_{t+k+1},\ldots) \big).
\end{align}

Following \citep{rio2000}, the deviation measure $\gamma_{t,m}$ can be characterized by $\varphi-$mixing coefficients as follows
\begin{align*}
 \gamma_{t, m} \leq \Delta_t \varphi(m-t).
 \end{align*}
Hence, we also arrive to the  $\varphi-$mixing version of a PAC-Bayes bound.
 \begin{cor}\label{phi pac-bayes} Let $U_1,U_2,\ldots,U_n$ be random variables, and for each $t = 1, \ldots, n$, let $\varphi(t)$ be defined as in~\eqref{phi defi}, then for any constants $\lambda>0$, $\delta\in(0,1)$, and prior $\mu\in\mathcal{P}(\Theta)$,   
\begin{align*}
\mathbb{P}_{\mathcal{S}}\Bigg(\exists\rho\in\mathcal{P}(\Theta),\:\: \mathbb{E}_{\theta\sim\rho}\left[R(\theta)\right]\leq\mathbb{E}_{\theta\sim\rho}\left[r(\theta)\right]+\frac{\lambda \Phi}{8n}+ \frac{KL(\rho||\mu)+\log \frac{1}{\delta}}{\lambda}  \Bigg) \geq 1-\delta
\end{align*}
where $\Phi=\frac{1}{n}\sum_{t=1}^{n}\big(\Delta_{t} + 2\left( \Delta_t\varphi(1) + \Delta_t\varphi(2) + \ldots + \Delta_t\varphi(n)   
\right)\big)^2$. 
\end{cor}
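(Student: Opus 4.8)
The plan is to derive the corollary directly from Theorem~\ref{time series pac-bayes} by bounding its mixing constant $C^2$ in terms of the $\varphi$-mixing coefficients. The only external input is the relation, recorded in the surrounding text and due to~\citet{rio2000}, that the deviation coefficients satisfy $\gamma_{t,m} \leq \Delta_t \varphi(m-t)$ for every pair $t<m$; everything else is a monotonicity argument, so there is no new probabilistic content to establish.

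First I would invoke Theorem~\ref{time series pac-bayes} as a black box: with probability at least $1-\delta$, the stated PAC-Bayes bound holds with the mixing constant $C^2 = \frac{1}{n}\sum_{t=1}^n \big(\Delta_t + 2\sum_{m>t}\gamma_{t,m}\big)^2$. The corollary differs from the theorem only in that $C^2$ is replaced by the explicit quantity $\Phi$ built from the $\varphi(k)$, so it suffices to show $C^2 \leq \Phi$ and then transfer the probability statement.

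Next I would bound each inner sum. Substituting $\gamma_{t,m}\leq \Delta_t\varphi(m-t)$ and re-indexing by $k = m-t$,
\[
\sum_{m>t}\gamma_{t,m} \;\leq\; \Delta_t\sum_{k=1}^{n-t}\varphi(k)\;\leq\;\Delta_t\sum_{k=1}^{n}\varphi(k),
\]
where the final inequality extends the summation range using the nonnegativity of the $\varphi(k)$. Squaring, summing over $t$, and dividing by $n$ yields $C^2 \leq \Phi$ with $\Phi$ exactly as in the statement. Because the right-hand side of the bound in Theorem~\ref{time series pac-bayes} is nondecreasing in $C^2$ (its coefficient $\lambda/(8n)$ is strictly positive), replacing $C^2$ by the larger quantity $\Phi$ can only enlarge that right-hand side. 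Hence, pointwise in $\rho$, the inequality of Theorem~\ref{time series pac-bayes} implies the inequality of the corollary, so the event of Theorem~\ref{time series pac-bayes} is contained in the event of the corollary and the latter inherits the probability lower bound $1-\delta$.

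The argument is essentially routine; the only point requiring a little care is the index bookkeeping when extending the sum from $n-t$ to $n$, which is harmless precisely because the mixing coefficients are nonnegative. The containment of events works identically whether the quantifier on $\rho$ is read as $\exists$ or $\forall$, since the per-$\rho$ bound only becomes looser, so no additional union-bound step is needed beyond what is already carried by Theorem~\ref{time series pac-bayes}.
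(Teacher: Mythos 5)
Your proposal is correct and follows essentially the same route as the paper: the paper also derives the corollary by plugging Rio's relation $\gamma_{t,m}\leq \Delta_t\varphi(m-t)$ into the constant $C^2$ of Theorem~\ref{time series pac-bayes}, which yields $C^2\leq\Phi$ and hence the stated bound. Your write-up merely makes explicit the re-indexing, the harmless extension of the sum from $n-t$ to $n$ terms, and the monotonicity/event-containment step that the paper leaves implicit.
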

Now let us make the connection of $\varphi$-mixing, $t_{\text{mix}}$, and $\gamma_{ps}$\label{connection}.
Given a stationary, ergodic Markov chain \( U \), the coefficients \( \varphi(k) \) are known to be expressed in terms of the distance to equilibrium:$$\varphi(k):=\varphi_U(k)=\sup_{u\in\mathcal{U}}\|P^k(u,\cdot)-\pi(\cdot)\|_{TV},$$
as proven by~\citet{davydov1968convergence}.
This measure is non-increasing and is endowed with a sub-multiplicative property, namely
\begin{alignat*}{2}
\varphi(t_1)&\leq\varphi(t_2)\quad\quad\quad\quad&&[\text{when}\:\:t_1\leq t_2]\\
\varphi(t_1+t_2)&\leq2\varphi(t_1)\varphi(t_2)&&[\text{for any}\:\:t_1,t_2]
\end{alignat*}
Denoting $a=\ceil*{\frac{k}{t_{\text{mix}}}}$, and $\rho=\left(\frac{1}{2}
\right)^{\frac{1}{t_{\text{mix}}}}$, then applying sub-multiplicative properties we derive
$$\varphi(k)\leq\varphi\left(a \cdot t_{\text{mix}}\right) \leq  
2^{a-1}\varphi(t_{\text{mix}})^{a} \leq 2^{a-1}\left(\frac{1}{4}\right)^{a} = \left(\frac{1}{2}\right)^{a+1}\leq\frac{1}{2}\cdot\left(\frac{1}{2}
\right)^{\frac{k}{t_{\text{mix}}}}=\frac{1}{2}\cdot\rho^{k} .$$
\\
On the other hand using an upper bound on $t_{\text{mix}}$ \citet{paulin2015}, we have 
\[
b(\pi_*)\, \gamma_{ps} \;\leq\; \frac{1}{t_{\text{mix}}}
\]
with $b(\pi_*) = \left(\ln  \frac{1}{\pi_*}  + 2 \ln 2 + 1\right)^{-1}$. Subsequently, for ergodic Markov chains with pseudo-spectral gap $\gamma_{ps}$ we are able to derive bounds on $\rho$ and $\varphi(k)$ which depend on $\gamma_{ps}$. 
\begin{align*}
    \rho =\left(\frac{1}{2}\right)^\frac{1}{t_{\text{mix}}} \leq \left(\frac{1}{2}\right)^{b(\pi_*)\gamma_{ps}}\text{and} \quad\varphi(k) \leq \frac{1}{2}\cdot \rho^k \leq \left( \frac{1}{2} \right)^{k \cdot b(\pi_*) \gamma_{ps}+1}
\end{align*}

with $\alpha=\frac{C_{\text{ps}}d}{\varepsilon\sqrt{\pi_*}} e ^ { -n\varepsilon^2\pi_*\min\{\gamma_{ps},\frac{1}{C(P)}\} } $.

Thus, applying the aforementioned bounds, we derive
\begin{align}\label{eq: c-square bound}
 \Phi&=
 \frac{1}{n}\sum_{t=1}^{n}\bigg(\Delta_{t} + 2\left( \Delta_t\varphi(1) + \Delta_t\varphi(2) + \ldots + \Delta_t\varphi(n)   
\right)\bigg)^2\notag\\
    & \leq \frac{c^2}{n}\sum_{t=1}^{n}\bigg(1 + 2\left( \frac{1}{2}\rho + \frac{1}{2}\rho^2 + \ldots +\frac{1}{2}\rho^n  
\right)\bigg)^2\notag\\
    & \leq c^2\cdot \bigg(\frac{1-\rho^{n+1}}{1-\rho}\bigg)^2\notag\\&\leq c^2\cdot \left(\frac{1-\left(\frac{1}{2}\right)^{(n+1)b(\pi_*)\gamma_{ps}}}{1-\left(\frac{1}{2}\right)^{b(\pi_*)\gamma_{ps}}}\right)^2
\end{align}
where $b(\pi_*)=(\ln{\frac{1}{\pi_*}}+2\ln{2}+1)^{-1}$.

This brings us to the following thoerem. 
\begin{thm}
    Assume $\{U_t\}_{t =1}^n$ be a stationary, ergodic, finite state Markov chain with pseudo-spectral gap $\gamma_{ps}>0$, then for any constants $\lambda>0$, $\delta\in(0,1)$, and prior $\mu\in\mathcal{P}(\Theta)$,   
\begin{align*}
\mathbb{P}_{\mathcal{S}}\left(\forall\rho\in\mathcal{P}(\Theta),\:\: \mathbb{E}_{\theta\sim\rho}\left[R(\theta)\right]\leq\mathbb{E}_{\theta\sim\rho}\left[r(\theta)\right]+   \frac{\lambda c^2 }{8n}  \left(\frac{1-\left(\frac{1}{2}\right)^{(n+1)b(\pi_*)\gamma_{ps}}}{1-\left(\frac{1}{2}\right)^{b(\pi_*)\gamma_{ps}}}\right)^2  + \frac{KL(\rho||\mu)+\log \frac{1}{\delta}}{\lambda}  \right)\geq 1-\delta
\end{align*}
where $b(\pi_*)=(\ln{\frac{1}{\pi_*}}+2\ln{2}+1)^{-1}$.
\end{thm}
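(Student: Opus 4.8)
The plan is to obtain the statement as a direct consequence of Corollary~\ref{phi pac-bayes}, substituting into its $\Phi$-term the explicit upper bound derived in~\eqref{eq: c-square bound}. The essential observation is that $\Phi$ is a \emph{deterministic} quantity: it depends only on the transition kernel $P$ through the $\varphi$-mixing coefficients and on the diameters $\Delta_t=c$, not on the random sample $\mathcal{S}$. Since $\Phi$ enters the bound through the term $\frac{\lambda\Phi}{8n}$ with a strictly positive coefficient $\frac{\lambda}{8n}$, replacing $\Phi$ by any deterministic upper bound only enlarges the right-hand side. Consequently the event appearing inside $\mathbb{P}_{\mathcal{S}}(\cdot)$ with the larger bound contains the corresponding event of Corollary~\ref{phi pac-bayes}, so its probability remains at least $1-\delta$.

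Concretely, I would proceed as follows. First, invoke Corollary~\ref{phi pac-bayes}, which applies to the finite-state chain under consideration with $\Delta_t=c$ for every $t$. Second, recall the chain of inequalities assembled in the paragraphs preceding the theorem: by the result of~\citet{davydov1968convergence} the mixing coefficient equals the distance to equilibrium, $\varphi(k)=\sup_u\|P^k(u,\cdot)-\pi\|_{TV}$; sub-multiplicativity together with the definition of $t_{\text{mix}}$ yields $\varphi(k)\leq\frac{1}{2}\rho^k$ with $\rho=(1/2)^{1/t_{\text{mix}}}$; and Paulin's inequality $b(\pi_*)\gamma_{ps}\leq 1/t_{\text{mix}}$ gives $\rho\leq(1/2)^{b(\pi_*)\gamma_{ps}}$. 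Third, insert $\gamma_{t,m}\leq\Delta_t\varphi(m-t)=c\,\varphi(m-t)$ into the definition of $\Phi$ and sum the resulting geometric series, obtaining
\[
\Phi\;\leq\; c^2\Bigl(\tfrac{1-\rho^{\,n+1}}{1-\rho}\Bigr)^2
\;=\;c^2\Bigl(\textstyle\sum_{k=0}^{n}\rho^k\Bigr)^2,
\]
and finally use that $\rho\mapsto\sum_{k=0}^{n}\rho^k$ is increasing on $[0,1)$ to propagate the bound $\rho\leq(1/2)^{b(\pi_*)\gamma_{ps}}$, arriving exactly at~\eqref{eq: c-square bound}. Substituting this majorant into Corollary~\ref{phi pac-bayes} produces the claimed bound.

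I do not expect a genuine obstacle here: essentially all the analytic work is already contained in the derivation of~\eqref{eq: c-square bound} preceding the theorem, and what remains is a substitution. The two points that require a little care are (i) verifying that the direction of the inequality is preserved, which hinges on the positivity of the coefficient $\frac{\lambda}{8n}$ and on the monotonicity of $\rho\mapsto\sum_{k=0}^{n}\rho^k$, so that tightening $1/t_{\text{mix}}$ to its lower bound $b(\pi_*)\gamma_{ps}$ yields a valid (larger) majorant of $\Phi$; and (ii) reconciling the quantifier, since the PAC-Bayes argument underlying Theorem~\ref{time series pac-bayes} and Corollary~\ref{phi pac-bayes} in fact delivers the bound uniformly over all $\rho\in\mathcal{P}(\Theta)$, which is the ``$\forall\rho$'' form appearing in the present statement.
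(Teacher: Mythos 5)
Your proof is correct and follows essentially the same route as the paper: the theorem is obtained by substituting the deterministic majorant~\eqref{eq: c-square bound} for $\Phi$ into Corollary~\ref{phi pac-bayes}, which is exactly how the paper derives it in the passage culminating in ``this brings us to the following theorem.'' Your two points of care are also well taken---the monotonicity of $\rho\mapsto\sum_{k=0}^{n}\rho^k$ is indeed what lets one pass from $\rho=(1/2)^{1/t_{\text{mix}}}$ to the bound $(1/2)^{b(\pi_*)\gamma_{ps}}$, and the ``$\exists\rho$'' in Theorem~\ref{time series pac-bayes} and Corollary~\ref{phi pac-bayes} is evidently a typo, since the Donsker--Varadhan argument delivers the bound uniformly over all $\rho\in\mathcal{P}(\Theta)$, matching the ``$\forall\rho$'' in the final statement.
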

As before, one can substitute \( \gamma_{ps} \) with an empirical estimate \( \widehat{\gamma}_{ps} \) to obtain an empirical bound.

It would be very nice to get an empirical version of the PAC-Bayes bounds with the $\varphi$ coefficients without assuming the Markov property. However, this would require to estimate the $\varphi$-mixing coefficients, which is still an open question.

\end{document}